\newtheorem{theorem}{Theorem}
\newtheorem{definition}[theorem]{Definition}
\newtheorem{lemma}[theorem]{Lemma}
\newtheorem{remark}[theorem]{Remark}
\numberwithin{theorem}{section}
\renewcommand{\arraystretch}{1.55}
\newcommand{\R}{\mathbb{R}}
\newcommand{\Lip}{\text{Lip}}
\newcommand{\req}[1]{Eq.\,(\ref{#1})}
\title{Structure-preserving GANs}
\author{  Jeremiah Birrell\\
    Department of Mathematics and Statistics\\
  University of Massachusetts Amherst\\
  Amherst, MA 01003,  USA \\
  \texttt{birrell@math.umass.edu} \\
   \And
    Markos A. Katsoulakis\\
    Department of Mathematics and Statistics\\
  University of Massachusetts Amherst\\
  Amherst, MA 01003,  USA \\
  \texttt{markos@math.umass.edu} \\
\And
    Luc Rey-Bellet\\
    Department of Mathematics and Statistics\\
  University of Massachusetts Amherst\\
  Amherst, MA 01003,  USA \\
  \texttt{luc@math.umass.edu} 
  \And
    Wei Zhu\\
    Department of Mathematics and Statistics\\
  University of Massachusetts Amherst\\
  Amherst, MA 01003,  USA \\
  \texttt{zhu@math.umass.edu} 
}
\begin{document}
\maketitle

\begin{abstract}

Generative adversarial networks (GANs), a class of distribution-learning methods based on a two-player game between a generator and a discriminator, can generally be formulated as a minmax problem based on the  variational representation of a divergence between the unknown  and the generated distributions. We introduce   structure-preserving GANs as a data-efficient framework for  learning distributions with additional structure such as group symmetry, by developing new variational representations for divergences. Our theory shows that we can reduce the discriminator space to its projection on the invariant discriminator space, using the conditional expectation with respect to the $\sigma$-algebra associated to the underlying structure. 
In addition, we prove that the discriminator space reduction must be accompanied by a careful design of  structured generators, as flawed designs  may easily  lead to a catastrophic “mode collapse” of the learned distribution. We contextualize our framework by building  symmetry-preserving GANs for   distributions with intrinsic group symmetry, and  demonstrate that both players, namely the equivariant generator and invariant discriminator, play important but distinct  roles in the learning process. Empirical experiments and ablation studies across a broad range of data sets, including real-world medical imaging, validate our theory, and show our proposed methods achieve significantly improved sample fidelity and diversity---almost an order of magnitude measured in Fr\'echet Inception Distance---especially in the small data regime.

\end{abstract}

\keywords{Information Divergences \and Symmetries \and GANs }

\section{Introduction}
Since their introduction by  \citet{goodfellow2014generative}, generative adversarial networks (GANs) have become a burgeoning domain in distribution learning with a diverse range of innovative applications \cite{karras2019style,Zhu_2019_CVPR,mustafa_cosmogan_2019,yi2019generative}. Mathematically, the minmax game between a generator and a discriminator in GAN can typically be formulated as minimizing a divergence--- or other notions of ``distance"---with a variational representation between the unknown and the generated distributions. Such formulations, however, do not make prior \textit{structural} assumptions on the probability measures, making them sub-optimal in sample efficiency when learning distributions with intrinsic structures, such as the (rotation) group symmetry for medical images without preferred orientation; see Figure~\ref{fig:anhir_images_small}.

\begin{wrapfigure}{R}{0.5\textwidth}
  \begin{center}
    \includegraphics[width=0.48\textwidth]{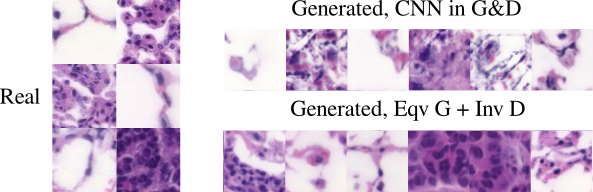}
  \end{center}
  \caption{Real and GAN generated ANHIR images dyed with the H\&E stain [cf.~\cref{sec:medical_data}]. Left panel: real images. Right panels: randomly selected $D_2^L$-GAN generated samples after 40,000 generator iterations. Top right panel: \texttt{CNN G\&D}, i.e., the baseline model. Bottom right panel: \texttt{Eqv G} + \texttt{Inv D}, i.e., our proposed framework contextualized in learning group-invariant distributions. More images are available in Appendix \ref{app:additional_results}.}\label{fig:anhir_images_small}
    \begin{center}
    \includegraphics[width=0.48\textwidth]{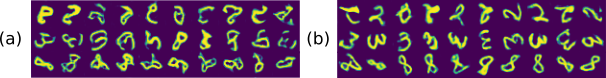}
  \end{center}
  \caption{Randomly generated digits 2, 3 and 8 by GANs trained on the rotated MNIST images using \textbf{1\% (600)} training samples. (a): the baseline CNN model. (b): our proposed framework 
for learning group-invariant distributions.}
\label{fig:rotmnist_digits_2_intro}
\end{wrapfigure}

We introduce, in this work, the \textit{structure-preserving GANs}, a data-efficient framework for learning probability measures with embedded structures, by developing new variational representations for divergences between structured distributions. We demonstrate that efficient adversarial learning can be achieved by reducing the discriminator space to its projection onto its invariant subspace, using the conditional expectation with respect to the $\sigma$-algebra associated to the underlying structure; such practice, which is rigorously justified by our theory and generally applicable to a broad range of variational divergences, acts effectively as an unbiased regularization to prevent discriminator overfitting, a common challenge for GAN optimization in the limited data regime  \cite{NEURIPS2020_55479c55}. Furthermore, our theory suggests  that the discriminator space reduction must be accompanied by  \textit{correctly} building generators sharing the same probabilistic structure, as the lack of which may easily lead to ``mode collapse" in the trained model, i.e., the generated distribution samples only a subset of the support of the data source [cf. Figure~\ref{fig:toy_200_alpha_2_2d} (2nd row)].

As an example, we contextualize our framework by building symmetry-preserving GANs for learning distributions with group symmetry. Unlike  prior empirical work, our choice of equivariant generators and invariant discriminators is theoretically founded, and we show (theoretically and empirically) how flawed design of equivariant generators  results easily in the aforementioned mode collapse [cf.~Figure~\ref{fig:toy_200_alpha_2_2d} (4th row)]. Experiments and ablation studies over synthetic and real-world data sets validate our theory, disentangle the contribution of the structural priors on generators and discriminators, and demonstrate the significant outperformance of our framework in terms of both sample quality and diversity---in some cases almost by an order of magnitude measured in Fr\'echet Inception Distance; see Figure~\ref{fig:anhir_images_small} and \ref{fig:rotmnist_digits_2_intro} for a visual illustration.

In Section \ref{sec:related} we will discuss several related approaches to equivariant GANs.  We provide background on GANs, variational representations of divergences, and group equivariance in Section \ref{sec:background}. Section \ref{sec:theory} contains our main theoretical results regarding divergences between structured distributions. Section \ref{app:more_f_gamma} contains additional theoretical results specific to a primal formulation of $(f,\Gamma)$-divergences for structured distributions, building on the inf-convolution formulation of general $(f,\Gamma)$-divergences in  \citet{Birrell:f-Gamma}. Finally, our experiments on synthetic and real-world data sets are found in Section \ref{sec:experiments}.

\section{Related Work}
\label{sec:related}
Neural generation of group-invariant distributions has mainly been proposed in a flow-based framework \cite{kohler2019equivariant,kohler2020equivariant, rezende2019equivariant, liu2019graph,bilovs2021scalable,boyda2021sampling,garcia2021n}. Such models typically use an equivariant normalizing-flow to  push-forward a group-invariant prior distribution to a complex  invariant target. In the context of GANs,  \citet{EquivariantGAN} intuitively replace the 2D convolutions with group convolutions \cite{cohen2016group} to build group-equivariant GANs; however, their empirical study has not been justified by theory, and their incomplete design of the equivariant generator may easily lead to a ``mode collapse" of the learned model; see the discussion of Theorem~\ref{thm:Gamma_invariant}. {  The existence of symmetry can often be deduced from prior or domain knowledge of the distribution, e.g., the rotation symmetry for medical images without preferred orientation. Symmetry detection from data has also been studied in recent works such as \cite{NEURIPS2021_148148d6}. }  When extended from group symmetry to probability structures induced from other operators, our work is also related to GAN-assisted coarse-graining (CG) for molecular dynamics \cite{voth_durumeric2019adversarial} and cosmology \cite{mustafa_cosmogan_2019, feder2020nonlinearcosmo}; see the end of \cref{sec:discriminator} for a detailed discussion.

\section{Background and Motivation}\label{sec:background}
\subsection{Generative adversarial networks}
\label{sec:gan_background}
Generative adversarial networks are a class of methods in learning a probability distribution via a zero-sum game between a generator and a discriminator \cite{goodfellow2014generative,arjovsky2017wasserstein,nowozin2016f,NIPS2017_892c3b1c}. Specifically, let $(X,\mathcal{M})$ be a measurable space, and $\mathcal{P}(X)$ be the set of probability measures on $X$; given a target distribution $Q\in \mathcal{P}(X)$, the original GAN proposed by  \citet{goodfellow2014generative} learns $Q$ by solving
\begin{align}
\label{eq:original_gan}
    \inf_{g\in G} D(Q\|P_g) & =\inf_{g\in G}\sup_{\gamma\in \Gamma} H(\gamma; Q, P_g),
\end{align}
where $H(\gamma; Q, P_g) = E_Q[\log \gamma]+E_{P_g}[\log(1-\gamma)]$. The map $g:Z\to X$ in \req{eq:original_gan} is called a \textit{generator}, which maps a random vector $z\in Z$ to a generated sample $g(z)\in X$, pushing forward the noise distribution $P\in \mathcal{P}(Z)$ (typically a Gaussian) to a probability measure $P_g\in \mathcal{P}(X)$, i.e., 
\begin{align}
    P_g \coloneqq g_*P \coloneqq P\circ g^{-1}\,.
\end{align} 
The test function $\gamma:X\to \R$ is called a \textit{discriminator}, which aims to differentiate the source distribution $Q$ and the generated probability measure $P_g$ by maximizing $H(\gamma;Q, P_g)$. The spaces $G$ and $\Gamma$, respectively, of generators and discriminators are both parametrized by neural networks (NNs), and the solution of model \eqref{eq:original_gan} is the best generator $g\in G$ that is able to ``fool" all discriminators $\gamma\in\Gamma$ by achieving the smallest $D(Q\|P_g)$, which measures the ``dissimilarity" between $Q$ and $P_g$.

\subsection{Variational representations for divergences}
\label{sec:variational_divergence_background}
Mathematically, most GANs can be formulated as minimizing the ``distance" between the probability measures $Q$ and $P_g$ according to some divergence or probability metric with a variational representation $\sup_{\gamma\in \Gamma}H(\gamma; Q, P_g)$ as in \eqref{eq:original_gan}. We hereby recast these formulations in a unified but flexible mathematical framework that will prove essential in \cref{sec:discriminator}. Let $\mathcal{M}(X)$ be the space of measurable functions on $X$ and  $\mathcal{M}_b(X)$ be the subspace of bounded measurable functions. Given an objective functional $H:\mathcal{M}(X)^n \times \mathcal{P}(X)\times\mathcal{P}(X)\to [-\infty,\infty]$ and a test function space $\Gamma\subset \mathcal{M}(X)^n$, $n\in\mathbb{Z}^+$, we define 
\begin{align}\label{eq:general_divergence_IPM}
    D_H^\Gamma(Q\|P)=\sup_{\gamma\in\Gamma}H(\gamma;Q,P)\,.
\end{align}
$D_H^\Gamma$ is called a \textit{divergence} if $D_H^\Gamma\geq 0$ and $D_H^\Gamma(Q\|P)=0$ if and only if $Q=P$, hence providing a notion of  ``distance" between probability measures. Variational representations of the form \eqref{eq:general_divergence_IPM} have been widely used, including in GANs \cite{goodfellow2014generative, nowozin2016f, arjovsky2017wasserstein}, divergence estimation \cite{Nguyen_2007, Nguyen_Full_2010, Ruderman,doi:10.1137/20M1368926}, determining independence through mutual information estimation \cite{MINE_paper},  uncertainty quantification of stochastic processes \cite{chowdhary_dupuis_2013,DKPP}, bounding risk  in  probably approximately correct (PAC) learning \cite{10.1145/307400.307435,10.1145/267460.267466,catoni2008pac},  parameter estimation \cite{Broniatowski_Keziou}, statistical mechanics and interacting particles  \cite{Kipnis:99}, and large deviations \cite{dupuis2011weak}. It is known that formula~\eqref{eq:general_divergence_IPM} includes, through suitable choices of  functional $H(\gamma;Q,P)$ and function space $\Gamma$, many divergences and probability metrics. Below we list several classes of examples.

\textbf{(a) $f$-divergences.} Let $f:[0, \infty) \to \mathbb{R}$ be convex and lower semi-continuous (LSC), with  $f(1)=0$ and $f$ strictly convex at  $x=1$. The $f$-divergence between $Q$ and $P$ is
\begin{align}
\label{eq:f_divergence}
D_f(Q\|P)=&\sup_{\gamma\in \mathcal{M}_b(X)}\{ E_Q[\gamma]-E_P[f^*(\gamma)]\},
\end{align}
where $f^*$ denotes the Legendre transform of $f$. Some notable examples of the $f$-divergences include the Kullback-Leibler (KL) divergence and the family of $\alpha$-divergences, which are constructed, respectively, from
\begin{align}
\label{eq:KL_alpha}
    f_{KL} = x\log x, ~~f_\alpha(x) = \frac{x^{\alpha}-1}{\alpha(\alpha-1)}, \alpha>0, \alpha\neq 1.
\end{align}
The flexibility of $f$ allows one to tailor the divergence to the data source, e.g., for heavy tailed data. However, the formula \eqref{eq:f_divergence} becomes $D_f(Q\|P) = \infty$ when $Q$ is not absolutely continuous with respect to $P$, limiting its efficacy in comparing distributions with low-dimensional support.

\textbf{(b) $\Gamma$-Integral Probability Metrics (IPMs).} Given $\Gamma\subset \mathcal{M}_b(X)$, the $\Gamma$-IPM between $Q$ and $P$ is defined as
\begin{align}\label{eq:IPM:def}
    W^\Gamma(Q,P)=\sup_{\gamma\in\Gamma}\{E_Q[\gamma]-E_P[\gamma]\}.
\end{align}
Apart from the Wasserstein metric when $\Gamma=\Lip^1(X)$ (the space of 1-Lipschitz functions), examples of IPMs also include the total variation metric, the Dudley metric, and maximum mean discrepancy (MMD) \cite{muller_1997, Gretton_review_IPM}. With suitable choices of $\Gamma$, IPMs
are able to meaningfully compare not-absolutely continuous distributions, but they could potentially fail at comparing distributions with heavy tails \cite{Birrell:f-Gamma}.

\textbf{(c) $(f,\Gamma)$-divergences.} This class of divergences was  introduced by  \citet{Birrell:f-Gamma} and they subsume both $f$-divergences and $\Gamma$-IPMs. Given a function $f$ satisfying the same condition as in the definition of the $f$-divergence and $\Gamma\subset \mathcal{M}_b(X)$, the $(f,\Gamma)$-divergence is defined as
\begin{align}
\label{eq:Df_Gamma_def1}
D_f^\Gamma(Q\|P)=&\sup_{\gamma\in\Gamma}\left\{E_Q[\gamma]-\Lambda_f^P[\gamma]\right\},
\end{align}
where $\Lambda_f^P[\gamma]=\inf_{\nu\in\mathbb{R}}\left\{\nu+E_P[f^*(\gamma-\nu)]\right\}$. One can verify that \eqref{eq:Df_Gamma_def1} includes as a special case the $f$-divergence \eqref{eq:f_divergence} when $\Gamma = \mathcal{M}_b(X)$, and it is demonstrated in \cite{Birrell:f-Gamma} that under suitable assumptions on $\Gamma$  we have
\begin{align}\label{eq:Df_Gamma_nonAC}
  0\le  D_f^\Gamma(Q\|P)\le \min\{D_f(Q\|P), W^\Gamma(Q, P) \}\,,
\end{align}
making $D_f^\Gamma$ suitable to compare not-absolutely continuous distributions with heavy tails. An example of the $(f,\Gamma)$-divergence is the Lipschitz $\alpha$-divergence,
\begin{align}
\label{eq:lip_alpha_divergence}
    D_\alpha^L(Q\|P) = \sup_{\gamma\in \Lip_b^L(X)}\{E_Q[\gamma] - \Lambda_{f_\alpha}^P[\gamma] \},
\end{align}
where $f=f_\alpha$ as in \req{eq:KL_alpha}, and $\Gamma = \Lip_b^L(X)$ is the space of bounded $L$-Lipschitz functions.

\textbf{(d) Sinkhorn divergences.} The Wasserstein  metric associated with a cost function $c:X^2\to \R^+$ has the variational representation
\begin{align}
    W_c^\Gamma(Q, P)= \sup_{\gamma = (\gamma_1, \gamma_2)\in \Gamma}\{E_P[\gamma_1] + E_Q[\gamma_2]\}\,,
\end{align}
where $\Gamma = \{(\gamma_1, \gamma_2)\in C(X)^2: \gamma_1(x)+\gamma_2(y)\le c(x, y) , \forall x,y\in X\}$, and $C(X)$ is the space of continuous functions on $X$. The Sinkhorn divergence is given by
\begin{align}\label{eq:sinkhorn_def}
    \mathcal{SD}^\Gamma_{c,\epsilon}(Q,P)= W^\Gamma_{c,\epsilon}(Q,P)- \frac{W^\Gamma_{c,\epsilon}(Q,Q)
    +W^\Gamma_{c,\epsilon}(P,P)}{2}\,,
\end{align}
where $W^\Gamma_{c,\epsilon}(Q,P)$ is the entropic regularization of the Wasserstein metrics \cite{Cuturi:stoch_optim_OT:2016},
\begin{align}
W^\Gamma_{c,\epsilon}(Q,P)= \sup_{\gamma=(\gamma_1, \gamma_2)\in \Gamma} \left\{ E_P[\gamma_1] + E_Q[\gamma_2] - \epsilon E_{P\times Q}\left[\exp\left(\frac{\gamma_1 \oplus\gamma_2 - c}{\epsilon}\right) \right] + \epsilon \right\} \,,
\end{align}
where   $\gamma_1\oplus\gamma_2(x,y)\coloneqq\gamma_1(x)+\gamma_2(y)$ and  $\Gamma=C_b(X)\times C_b(X)$ ($C_b(X)$ denotes the space of bounded continuous functions on $X$).

We refer to \cref{app:variational_divergence} for a more detailed discussion of the variational divergences introduced above. In all the aforementioned examples, the choice of the discriminator space, $\Gamma$, is a defining characteristic of the divergence. We will explain, in \cref{sec:discriminator}, a general framework, i.e., the structure-preserving GANs, for incorporating added structural knowledge of the probability distributions or data sets into the choice of $\Gamma$, leading to enhanced performance and data efficiency in adversarial learning of structured distributions.

\subsection{Group invariance and equivariance}
\label{sec:group_background}
We  first introduce the structure-preserving GAN framework in the context of learning distributions with group symmetry. Here we explain the necessary background and notations.  {  We emphasize that the focus of this work is not to discuss the group-invariance properties of probability measures (which can be found in, e.g., \cite{schindler2003measures}), but to understand how to incorporate such structural
information into the generator/discriminator of GANs such that invariant probability distributions can be learned more efficiently. However, we first  require the following background and notations.} 

\textbf{Groups and group actions.} A \textit{group} is a set $\Sigma$ equipped with a binary operator, the group product, satisfying the axioms of associativity, identity, and invertibility. Given a group $\Sigma$ and a set $X$, a map $T:\Sigma\times X\to X$ is called a \textit{group action} if, for all $\sigma\in\Sigma$, $T_\sigma\coloneqq T(\sigma,\cdot):X\to X$ is an automorphism on $X$, and $T_{\sigma_1}\circ T_{\sigma_2} = T_{\sigma_1\cdot\sigma_2}, \forall \sigma_1, \sigma_2\in\Sigma$. In this paper, we will consider mainly the 2D rotation group $SO(2)=\{R_{\theta}\in \R^{2\times 2}: \theta\in \R\}$ and roto-reflection group $O(2)=\{R_{m, \theta}\in \R^{2\times 2}: m\in \mathbb{Z}, \theta\in \R\}$, where $R_\theta$ is the 2D rotation matrix of angle $\theta$, and $R_{m,\theta}$ has a further reflection if $m\equiv 1~(\text{mod}~2)$. The natural actions of $SO(2)$ and $O(2)$ on $\R^2$ are matrix multiplications, which can be lifted to actions on the space of ($k$-channel) planar signals $L^2(\R^2,\R^k)$, e.g., RGB images. More specifically, when $\Sigma$ is  $SO(2)$ or $O(2)$  let $T_\sigma f(x) \coloneqq f(\sigma^{-1}x), ~\forall \sigma\in \Sigma, \forall f\in L^2(\R^2,\R^k).$ We will also consider the finite subgroups $C_n$, $D_n$, respectively, of $SO(2)$ and $O(2)$, with the rotation angles $\theta$ restricted to integer multiples of $2\pi/n$.

\textbf{Group equivariance and invariance.} Let $T^Z$ and $T^X$, respectively, be $\Sigma$-actions on the spaces $Z$ and $X$. A map $g:Z\to X$ is called $\Sigma$-\textit{equivariant} if 
\begin{align}
    T_\sigma^X\circ g = g\circ T_\sigma^Z, \forall \sigma\in \Sigma\,.
\end{align} 
A map $\gamma:X\to Y$ is called $\Sigma$-\textit{invariant} if \begin{align}
    \gamma\circ T_\sigma^X = \gamma, \forall \sigma\in \Sigma\,.
\end{align}
Invariance is thus a special case of equivariance after equipping $Y$ with the action $T^Y_\sigma y \equiv y, \forall \sigma\in \Sigma$. In the context of NNs, achieving equivariance/invariance via group-equivariant CNNs (G-CNNs) has been well-studied, and we refer the reader to \cite{NEURIPS2019_b9cfe8b6,NEURIPS2019_45d6637b} for a complete theory of G-CNNs.

Let $G$ be a collection of measurable maps $g:Z\to X$. We denote its subset of $\Sigma$-equivariant maps as \begin{align}
G_{\Sigma}^{\text{eqv}}\coloneqq\{g \in G: T_\sigma^X\circ g = g\circ T_\sigma^Z, ~\forall \sigma\in \Sigma \}\,.    
\end{align}
Similarly, let $\Gamma$ be a set of measurable functions $\gamma :X\to Y$; its subset, $\Gamma_\Sigma^{\text{inv}}$, of $\Sigma$-invariant functions is defined as
\begin{align}
    \label{eq:sigmainvariantGamma}
    \Gamma^{\text{inv}}_\Sigma\coloneqq\{\gamma \in \Gamma: \gamma \circ T_\sigma^X=\gamma,  ~\forall \sigma\in \Sigma \}\,.
\end{align}
The function space $\Gamma$ is called \textit{closed under $\Sigma$} if
\begin{align}\label{def:Sigma_closed_Gamma}
    \gamma\circ T_\sigma^X \in \Gamma, ~~\forall \sigma\in \Sigma, ~~\forall \gamma\in\Gamma\,.
\end{align}
Finally, a probability measure $P\in\mathcal{P}(X)$ is called \textit{$\Sigma$-invariant} if $P = P\circ (T_\sigma^X)^{-1}$ for all $\sigma\in\Sigma$. For instance, the distribution of medical images without orientation preference should be $SO(2)$-invariant; see Figure~\ref{fig:anhir_images_small}. The set of all $\Sigma$-invariant distributions on $X$ is denoted as
\begin{equation}\label{eq:sigma_inv_meas}
    \mathcal{P}_{\Sigma}(X)\coloneqq\{P \in \mathcal{P}(X): P~\text{is}~ \Sigma\text{-invariant}\}.
\end{equation}

\subsection{Definition of Haar measure on $\Sigma$ and the symmetrization operators $S_\Sigma$ and
$S^\Sigma$}
We will make frequent use of the symmetrization operators, on both functions and probability distributions, that are induced by a group action on $X$. These are  constructed using the unique Haar probability measure, $\mu_\Sigma$, of a compact Hausdorff topological group $\Sigma$ (see, e.g., Chapter 11 in \citet{folland2013real}). Intuitively the Haar measure is the  uniform probability measure on $\Sigma$. Mathematically, this is expressed via the invariance of Haar measure under  group multiplication, $ \mu_\Sigma(\sigma \cdot E)=\mu_\Sigma(E\cdot \sigma)=\mu_\Sigma(E)$ for all $\sigma \in \Sigma$ and all Borel sets $E\subset \Sigma$. This is a generalization of the invariance of Lebesgue measure  under translations and rotations. The Haar measure can be used to define symmetrization operators on both functions and probability measures as follows (going forward, we assume the group action is measurable).

\textbf{Symmetrization of functions:} $S_\Sigma:\mathcal{M}_b(X)\to\mathcal{M}_b(X)$,
\begin{align}\label{eq:symmetroperator:def}    &S_\Sigma[\gamma](x)\coloneqq\int_\Sigma \gamma (T_{\sigma'}(x)) \mu_\Sigma(d\sigma')=E_{\mu_\Sigma}[\gamma \circ T_{\sigma'}(x)]\,.
\end{align}

\textbf{Symmetrization  of probability measures (dual operator):} $S^{\Sigma}: \mathcal{P}(X)\to \mathcal{P}(X)$, defined for  $\gamma \in \mathcal{M}_b(X)$ by
\begin{align}
    &E_{S^\Sigma[P]}\gamma\coloneqq\int_XS_\Sigma[\gamma](x)dP(x)=E_PS_\Sigma[\gamma]\,.
\end{align}
\begin{remark}{Sampling from $S^\Sigma[P]$:} If $x_i, i=1,..., N$ are samples from $P$, 
and $\sigma_j, j=1,...,M$ are samples from the Haar probability measure $\mu_\Sigma$ (all independent) then $T_{\sigma_j}(x_i)$ are   samples from $S^\Sigma[P]$. If $P$ is $\Sigma$-invariant then the use of $T_{\sigma_j}(x_i)$  can be viewed as a form of data augmentation.
\end{remark}
The following lemma provides several key properties of the symmetrization operators.  \begin{lemma}\label{lemma:symmetr_ops}
(a) The symmetrization operator  $S_\Sigma:\mathcal{M}_b(X)\to\mathcal{M}_b(X)$ is a projection operator onto the subspace of $\Sigma$-invariant bounded measurable functions
\begin{equation}\label{eq:sigma_inv_funct_bounded_app}
    \mathcal{M}_{b,\Sigma}^{\text{inv}}(X)\coloneqq\{\gamma \in \mathcal{M}_b(X): \gamma \circ T_\sigma=\gamma\,\, \mbox{ for all } \,\, \sigma \in \Sigma\}\, ,
\end{equation}

in the sense that 
\begin{enumerate}
    \item $S_\Sigma[\mathcal{M}_b(X)]=\mathcal{M}_{b,\Sigma}^{\text{inv}}(X)$,
    \item $S_\Sigma \circ S_\Sigma=S_\Sigma$.
\end{enumerate}
Moreover, 
\begin{align}\label{eq:S_gamma_T}
S_\Sigma[\gamma\circ T_\sigma]=S_\Sigma[\gamma]    
\end{align}
for all $\gamma\in \mathcal{M}_b(X)$, $\sigma\in\Sigma$.
\smallskip

\noindent(b) The symmetrization operator  $S^\Sigma:\mathcal{P}(X)\to\mathcal{P}(X)$ is a projection operator onto the subset of $\Sigma$-invariant probability measures 
\begin{equation}\label{eq:sigma_inv_meas_app}
    \mathcal{P}_{\Sigma}(X)\coloneqq\{P \in \mathcal{P}(X): P \circ T^{-1}_\sigma=P\,\, \mbox{ for all } \,\, \sigma \in \Sigma\}\, ,
\end{equation}
in the sense that 
\begin{enumerate}
    \item $S^\Sigma[\mathcal{P}(X)]=\mathcal{P}_{\Sigma}(X)$,
    \item $S^\Sigma \circ S^\Sigma=S^\Sigma$.
\end{enumerate}
\smallskip
\noindent(c) $S_\Sigma$ is the conditional expectation operator with respect to the $\sigma$-algebra of $\Sigma$-invariant sets.  More specifically, for all $\gamma\in\mathcal{M}_b(X)$, $P\in\mathcal{P}_\Sigma(X)$ we have
\begin{equation}\label{eq:symmetroperator:conditionalexp}
    S_\Sigma[\gamma]=E_P[\gamma|\mathcal{M}_\Sigma]\,,
\end{equation} where $\mathcal{M}_\Sigma$ is the $\sigma$-algebra of $\Sigma$-invariant sets, 
\begin{align}
    \mathcal{M}_\Sigma\coloneqq\{\text{Measurable sets }B\subset X: T_\sigma (B)=B\,\,\, \mbox{ for all $\sigma \in \Sigma$}\}\,.
\end{align}

\end{lemma}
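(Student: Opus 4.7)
The plan is to prove parts (a), (b), (c) in order, with each part recycling the previous. The technical workhorse throughout is the two-sided invariance of the Haar probability measure $\mu_\Sigma$ together with Fubini's theorem; the joint measurability of $(\sigma',x)\mapsto \gamma(T_{\sigma'}x)$, guaranteed by the standing assumption that the action is measurable, is what makes Fubini applicable and ensures $S_\Sigma[\gamma]$ is itself a bounded measurable function.

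For part (a), I would first observe that $S_\Sigma[\gamma]$ is bounded by $\|\gamma\|_\infty$ (since $\mu_\Sigma$ is a probability measure) and measurable by Fubini. To check that $S_\Sigma[\gamma]$ is $\Sigma$-invariant, I would compute, for any $\sigma\in\Sigma$,
\begin{align}
S_\Sigma[\gamma](T_\sigma x) = \int_\Sigma \gamma(T_{\sigma'}T_\sigma x)\,\mu_\Sigma(d\sigma') = \int_\Sigma \gamma(T_{\sigma'\sigma}x)\,\mu_\Sigma(d\sigma') = S_\Sigma[\gamma](x),
\end{align}
where the final equality invokes right-invariance of $\mu_\Sigma$ via the change of variable $\sigma''=\sigma'\sigma$. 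Conversely, if $\gamma$ is already $\Sigma$-invariant then $\gamma(T_{\sigma'}x)=\gamma(x)$ for every $\sigma'$, so $S_\Sigma[\gamma](x)=\gamma(x)$ because $\mu_\Sigma$ has total mass $1$. Together these give both the range identity $S_\Sigma[\mathcal{M}_b(X)] = \mathcal{M}_{b,\Sigma}^{\text{inv}}(X)$ and the idempotence $S_\Sigma\circ S_\Sigma = S_\Sigma$. The identity \eqref{eq:S_gamma_T} is the same right-invariance calculation.

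For part (b), I would bootstrap from (a) using the duality $E_{S^\Sigma[P]}\gamma = E_P S_\Sigma[\gamma]$. The $\Sigma$-invariance of $S^\Sigma[P]$ follows directly from \eqref{eq:S_gamma_T}, since $E_{S^\Sigma[P]}[\gamma\circ T_\sigma] = E_P S_\Sigma[\gamma\circ T_\sigma] = E_P S_\Sigma[\gamma] = E_{S^\Sigma[P]}\gamma$. If in addition $P\in \mathcal{P}_\Sigma(X)$, then applying Fubini and using the $\Sigma$-invariance of $P$ yields
\begin{align}
E_{S^\Sigma[P]}\gamma = \int_X\!\int_\Sigma \gamma(T_{\sigma'}x)\,\mu_\Sigma(d\sigma')\,dP(x) = \int_\Sigma E_P[\gamma\circ T_{\sigma'}]\,\mu_\Sigma(d\sigma') = E_P\gamma,
\end{align}
so $S^\Sigma[P] = P$ on $\mathcal{P}_\Sigma(X)$. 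Combined with the inclusion $S^\Sigma[\mathcal{P}(X)] \subset \mathcal{P}_\Sigma(X)$ already established, this delivers both claims of (b).

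For part (c), I need to check the two defining properties of the conditional expectation. $\mathcal{M}_\Sigma$-measurability of $S_\Sigma[\gamma]$ is immediate from part (a), since the sublevel sets of a $\Sigma$-invariant function are themselves $\Sigma$-invariant sets. For the partial averaging property, fix $A\in \mathcal{M}_\Sigma$; because $\mathbf{1}_A$ is $\Sigma$-invariant, pulling it through the $\sigma'$-integral gives $S_\Sigma[\mathbf{1}_A \gamma] = \mathbf{1}_A S_\Sigma[\gamma]$. Invoking $S^\Sigma[P]=P$ from (b),
\begin{align}
\int_A \gamma\,dP = E_{S^\Sigma[P]}[\mathbf{1}_A \gamma] = E_P S_\Sigma[\mathbf{1}_A \gamma] = \int_A S_\Sigma[\gamma]\,dP,
\end{align}
which is exactly the defining identity for $E_P[\gamma\,|\,\mathcal{M}_\Sigma]$. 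The only step I expect to be delicate is the measure-theoretic justification of Fubini on $\Sigma \times X$, which requires joint measurability of the action; once that is granted, everything reduces to the invariance of $\mu_\Sigma$ and clean bookkeeping with the duality that defines $S^\Sigma$.
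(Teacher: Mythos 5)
Your proof is correct and follows essentially the same route as the paper: establish the two-sided invariance of integrals against $\mu_\Sigma$, use it to prove $\Sigma$-invariance of $S_\Sigma[\gamma]$ and the shift identity \eqref{eq:S_gamma_T}, bootstrap (b) from (a) via the duality $E_{S^\Sigma[P]}\gamma = E_P S_\Sigma[\gamma]$, and then derive (c) from the partial-averaging computation together with $S^\Sigma[P]=P$. One small terminological slip: the identity \eqref{eq:S_gamma_T} produces the integrand $\sigma'\mapsto\gamma(T_{\sigma\cdot\sigma'}x)$, so what you actually invoke there is \emph{left}-invariance of $\mu_\Sigma$, not the same right-invariance used in the $S_\Sigma[\gamma]\circ T_\sigma = S_\Sigma[\gamma]$ computation; this is immaterial here since the Haar probability measure of a compact group is bi-invariant (and the paper's own \eqref{eq:Haar_int_inv} records both), but it is worth stating precisely.
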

\begin{proof} We will need the following invariance property of integrals with respect to Haar measure, which can be proven using the invariance of Haar measure under left and right group multiplication:
\begin{align}\label{eq:Haar_int_inv}
  \int_\Sigma h(\sigma \cdot \sigma')d\mu_\Sigma(\sigma') 
 =&\int_\Sigma h(\sigma' \cdot \sigma)d\mu_\Sigma(\sigma') = \int_\Sigma h(\sigma')d\mu_\Sigma(\sigma')\,.
\end{align}

(a)  If $\gamma \in \mathcal{M}_b(X)$ then $\gamma'=S_\Sigma[\gamma]
\in \mathcal{M}_{b,\Sigma}^{\text{inv}}(X)$ by applying \eqref{eq:Haar_int_inv}  with $h(\sigma)\coloneqq \gamma \circ T_\sigma(x)$, $x\in X$.
Indeed we have 
\begin{align*}
\gamma'\circ T_\sigma(x) =& \int \gamma ( T_{\sigma'} ( T_\sigma(x))) d\mu_\Sigma(\sigma') =\int h(\sigma^\prime\cdot\sigma)\mu_\Sigma(d\sigma^\prime)=\int h(\sigma^\prime)\mu_\Sigma(d\sigma^\prime)
= \gamma'(x)\, .
\end{align*}
Furthermore any $\gamma \in \mathcal{M}_{b,\Sigma}^{\text{inv}}(X)$ belongs to the range of $S_\Sigma$ since $\gamma \circ T_\sigma=\gamma$ for all $\sigma \in\Sigma$ implies that $\gamma=S_\Sigma[\gamma]$. This also shows that
$S_\Sigma\circ S_\Sigma=S_\Sigma$. Finally, for $\gamma\in\mathcal{M}_b(X)$, $\sigma\in\Sigma$, $x\in X$ we can compute
\begin{align*}
&S_\Sigma[\gamma\circ T_\sigma](x)= \int \gamma(T_{\sigma\cdot \sigma^\prime}(x))\mu_\Sigma(d\sigma^\prime)
=\int \gamma(T_\sigma^\prime(x))\mu_\Sigma(d\sigma^\prime)=S_\Sigma[\gamma](x)\,,
\end{align*}
where we again used the  invariance property of integrals with respect to Haar measure \eqref{eq:Haar_int_inv}.

\noindent(b) For $P\in\mathcal{P}(X)$, $\gamma\in\mathcal{M}_b(X)$, and $\sigma\in\Sigma$ we can use \eqref{eq:S_gamma_T} to compute
\begin{align*}
    \int \gamma dS^\Sigma[P]\circ T_\sigma^{-1}=\int \gamma\circ T_\sigma dS^\Sigma[P]=\int S_\Sigma[\gamma\circ T_\sigma] dP=\int  S_\Sigma[\gamma] dP=\int \gamma dS^\Sigma[P]\,.\end{align*}
This holds for all $\gamma\in\mathcal{M}_b(X)$, hence $S^\Sigma[P]\circ T_\sigma^{-1}=S^\Sigma[P]$ for all $\sigma\in\Sigma$.  Therefore $S^\Sigma[P]\in \mathcal{P}_\Sigma(X)$.  Conversely, if $P\in\mathcal{P}_\Sigma(X)$
then $E_P[\gamma\circ T_\sigma]=E_P[\gamma]$ for all $\sigma \in \Sigma$ and $\gamma \in \mathcal{M}_b(X)$ and thus, by Fubini's theorem, $E_P[S_\Sigma[\gamma]]=E_P[\gamma]$. Hence $S^\Sigma[P]=P$ and so $P\in S^\Sigma[\mathcal{P}]$. This completes the proof that $S^\Sigma[\mathcal{P}(X)]=\mathcal{P}_\Sigma(X)$. Combining these calculations it is also clear that $S^\Sigma\circ S^\Sigma=S^\Sigma$.

\noindent(c)  Let $\gamma\in\mathcal{M}_b(X)$ and $P\in\mathcal{P}_\Sigma(X)$. From part (a) we know that $S_\Sigma[\gamma]\in\mathcal{M}_{b,\Sigma}^{\text{inv}}(X)$ and from this it is straightforward to show that $S_\Sigma[\gamma]$ is $\mathcal{M}_\Sigma$-measurable.  Now fix $A\in\mathcal{M}_\Sigma$ and note that $1_A\circ T_\sigma=1_A$ for all $\sigma\in\Sigma$ (where $1_A$ denotes the indicator function for $A$). Using this fact together with $S^\Sigma[P]=P$ (see part (b)) we can compute
\begin{align*}
    \int  S_\Sigma[\gamma] 1_AdP=&\int \int \gamma\circ T_{\sigma^\prime} 1_A \mu_\Sigma(d\sigma^\prime)dP=\int \int ( \gamma1_A)\circ T_{\sigma^\prime} \mu_\Sigma(d\sigma^\prime)dP=\int  S_\Sigma[ \gamma1_A] dP    =\int \gamma 1_A  dS^\Sigma[P]\\=&\int  \gamma 1_AdP\,.
\end{align*}
This proves $S_\Sigma[\gamma]=E_P[\gamma|\mathcal{M}_\Sigma]$ by the definition of conditional expectation.
\end{proof}
 Lemma~\ref{lemma:symmetr_ops}  implies that since  $S_\Sigma, S^\Sigma$ are projections onto  $\mathcal{M}_{b,\Sigma}^{\text{inv}}$, 
$\mathcal{P}_{\Sigma}(X)$ respectively, they are necessarily \textit{structure-preserving}, namely here  symmetry-preserving. We discuss a general concept of structure-preserving operators at the end of Section~\ref{sec:discriminator}.


\section{Theory}\label{sec:theory}

We present in this section our theory for structure-preserving GANs. The results are first stated for the special case of learning group-invariant distributions. We then extend the theory to a general class of structure-preserving operators.

\subsection{Invariant discriminator theorem }\label{sec:discriminator}
We demonstrate under assumptions outlined below and for broad  classes of divergences and probability metrics that for $\Sigma$-invariant  probability measures $P, Q$ we can restrict the test function space $\Gamma$ (discriminator space in GANs) in \eqref{eq:general_divergence_IPM}  to the subset of $\Sigma$-invariant functions, $\Gamma^{\text{inv}}_\Sigma$ [cf.~Eq.~\eqref{eq:sigmainvariantGamma}], without changing the divergence/probability metric, i.e.,
  \begin{align}
     D_H^\Gamma(Q\|P)= D_H^{\Gamma^{\text{inv}}_\Sigma}(Q\|P) \quad \text{for all }\,Q,P\in\mathcal{P}_\Sigma\,.
 \end{align}
     The space $\Gamma^{\text{inv}}_\Sigma$ is
   a much ``smaller" and more efficient  discriminator  space to optimize over  in the proposed GANs. We rigorously formulate our results in the following theorem, which first considers the $(f, \Gamma)$ divergence \eqref{eq:Df_Gamma_def1}, the $\Gamma$-IPM \eqref{eq:IPM:def}, and the Sinkhorn divergence \eqref{eq:sinkhorn_def}.
\begin{theorem}\label{thm:invariant_discriminator1}
 If $S_\Sigma[\Gamma] \subset \Gamma$ and the probability measures $P, Q$ are $\Sigma$-invariant then 
     \begin{equation}\label{eq:main_thm:fgammadiv}
         D^\Gamma(Q\|P)=D^{\Gamma^{\text{inv}}_\Sigma}(Q\|P)\,,
     \end{equation}
     where $D^\Gamma$ is an $(f,\Gamma$)-divergence or a $\Gamma$-IPM. \req{eq:main_thm:fgammadiv} also holds for  Sinkhorn divergences if the cost is $\Sigma$-invariant (i.e.,  $c(T_\sigma(x),T_\sigma(y))=c(x,y)$ for all $\sigma\in\Sigma$, $x,y\in X$).
\end{theorem}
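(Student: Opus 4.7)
The reverse inequality $D^\Gamma(Q\|P)\ge D^{\Gamma^{\text{inv}}_\Sigma}(Q\|P)$ is immediate from $\Gamma^{\text{inv}}_\Sigma\subset \Gamma$, so I only need to prove $\le$. The unifying idea is: for every $\gamma\in\Gamma$, replace $\gamma$ by its symmetrization $S_\Sigma[\gamma]$. Since $S_\Sigma[\Gamma]\subset\Gamma$ and $S_\Sigma[\gamma]$ is $\Sigma$-invariant by Lemma~\ref{lemma:symmetr_ops}(a), we have $S_\Sigma[\gamma]\in\Gamma^{\text{inv}}_\Sigma$. It therefore suffices to show, case by case, that $H(\gamma;Q,P)\le H(S_\Sigma[\gamma];Q,P)$. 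Two facts from Lemma~\ref{lemma:symmetr_ops} do all the work: (i) if $P$ is $\Sigma$-invariant then $E_P[h]=E_P[S_\Sigma[h]]$ for any $h\in\mathcal{M}_b(X)$ (apply the definition of $S^\Sigma$ together with $S^\Sigma[P]=P$); and (ii) $S_\Sigma=E_P[\,\cdot\,|\mathcal{M}_\Sigma]$ is a conditional expectation, so Jensen's inequality gives $\varphi(S_\Sigma[h])\le S_\Sigma[\varphi(h)]$ for any convex $\varphi$.

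\textbf{The three cases.} For the $\Gamma$-IPM, $H=E_Q[\gamma]-E_P[\gamma]$ is linear and (i) applied to both terms gives equality $H(\gamma;Q,P)=H(S_\Sigma[\gamma];Q,P)$. For the $(f,\Gamma)$-divergence, the linear part $E_Q[\gamma]$ is again unchanged by (i). For the inf-convolved term, fix $\nu\in\mathbb{R}$; then $S_\Sigma[\gamma-\nu]=S_\Sigma[\gamma]-\nu$, and since $f^*$ is convex, Jensen (ii) gives $f^*(S_\Sigma[\gamma]-\nu)\le S_\Sigma[f^*(\gamma-\nu)]$ pointwise. Integrating against $P$ and using (i) yields $E_P[f^*(S_\Sigma[\gamma]-\nu)]\le E_P[f^*(\gamma-\nu)]$; taking the infimum over $\nu$ gives $\Lambda_f^P[S_\Sigma[\gamma]]\le \Lambda_f^P[\gamma]$, hence $E_Q[\gamma]-\Lambda_f^P[\gamma]\le E_Q[S_\Sigma[\gamma]]-\Lambda_f^P[S_\Sigma[\gamma]]$, as required.

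\textbf{Sinkhorn.} Here the discriminator is a pair $(\gamma_1,\gamma_2)$ and the natural object is the diagonal action $T^\oplus_\sigma(x,y)=(T_\sigma x,T_\sigma y)$ on $X\times X$, whose symmetrization I write $S^\oplus_\Sigma$. A direct computation gives $S^\oplus_\Sigma[\gamma_1\oplus\gamma_2]=S_\Sigma[\gamma_1]\oplus S_\Sigma[\gamma_2]$, and by the assumed invariance of the cost, $S^\oplus_\Sigma[c]=c$. Moreover, $P\times Q$ is invariant under the diagonal action whenever $P,Q$ are, so (i) and (ii) apply on the product space. Writing $h=(\gamma_1\oplus\gamma_2-c)/\epsilon$, Jensen applied to $\exp$ gives $\exp(S^\oplus_\Sigma[h])\le S^\oplus_\Sigma[\exp h]$, and integrating against $P\times Q$ yields
\begin{align*}
E_{P\times Q}\!\left[\exp\!\tfrac{S_\Sigma[\gamma_1]\oplus S_\Sigma[\gamma_2]-c}{\epsilon}\right]\le E_{P\times Q}\!\left[\exp\!\tfrac{\gamma_1\oplus\gamma_2-c}{\epsilon}\right].
\end{align*}
Since the linear terms $E_P[\gamma_i]$, $E_Q[\gamma_i]$ are preserved by (i), symmetrizing can only increase the Sinkhorn objective. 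Finally the diagonal symmetrization $D^\Sigma_{c,\epsilon}(Q,Q)$ and $D^\Sigma_{c,\epsilon}(P,P)$ terms are handled identically. For the (unregularized) Wasserstein/Sinkhorn potentials one additionally verifies that the constraint $\gamma_1(x)+\gamma_2(y)\le c(x,y)$ is preserved under symmetrization, which follows by averaging the pointwise inequality over $\mu_\Sigma$ and using $\Sigma$-invariance of $c$.

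\textbf{Main obstacle.} The only nonroutine step is the Sinkhorn case, where one must keep careful track of three simultaneous invariances—of $P\times Q$, of $c$, and of the diagonal-symmetrization identity for $\gamma_1\oplus\gamma_2$—before Jensen's inequality can be applied cleanly. Once the diagonal picture is set up, the same two-line template (linear terms unchanged by (i), convex penalty terms dominated by (ii)) closes all three cases uniformly.
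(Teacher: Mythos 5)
Your proof is correct and follows essentially the same route as the paper's: symmetrize each discriminator via $S_\Sigma$, use $\Sigma$-invariance of $P,Q$ to leave the linear terms unchanged, apply Jensen's inequality to the convex penalty ($f^*$ or $\exp$) to show the objective can only increase, and conclude via $S_\Sigma[\Gamma]\subset\Gamma$ together with $\Sigma$-invariance of $S_\Sigma[\gamma]$. Your diagonal-action reformulation of the Sinkhorn case is a slightly cleaner packaging of the same computation the paper carries out in its Appendix.
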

\begin{proof} We first prove the Theorem for $(f,\Gamma)$-divergences.  Start by using Jensen's inequality and the convexity of the Legendre transform $f^*$ to obtain
\begin{align*}
    &f^*(S_\Sigma[\gamma](x)-\nu)=f^*\left(\int \big(\gamma(T_\sigma(x))-\nu\big)\mu_\Sigma(d\sigma)\right)\\
    \leq& \int f^*(\gamma(T_\sigma(x))-\nu)\mu_\Sigma(d\sigma)=S_\Sigma[f^*(\gamma(x)-\nu)]
\end{align*}
for all $\gamma \in\mathcal{M}_b(X)$. Therefore
\begin{align*}
    D_f^{S_\Sigma[\Gamma]}(Q\|P)
    =&\sup_{\gamma\in\Gamma,\nu\in\mathbb{R}}\{ E_Q[S_\Sigma[\gamma]]-\nu-E_P[f^*(S_\Sigma[\gamma]-\nu)]\}\\
    \geq &\sup_{\gamma\in \Gamma,\nu\in\mathbb{R}}\{ E_Q[S_\Sigma[\gamma]-\nu]-E_P[S_\Sigma[f^*(\gamma-\nu)]]\} \notag\\
         =&\sup_{\gamma\in \Gamma,\nu\in\mathbb{R}}\left\{ E_Q[\gamma] -\nu- E_P[ f^*(\gamma-\nu)]\right\}=D_f^\Gamma(Q\|P)\,,\notag
\end{align*}
where in the next to last equality we use Lemma \ref{lemma:symmetr_ops}(c) together with the assumptions  $P,Q\in\mathcal{P}_\Sigma(X)$ to conclude   
$E_P[S_\Sigma[f^*(\gamma-\nu)]]=E_P[ f^*(\gamma-\nu)]$ and $E_Q[S_\Sigma[\gamma]]=E_Q[\gamma]$. Hence we obtain  $D_f^\Gamma(Q\|P)\le D_f^{S_\Sigma[\Gamma]}(Q\|P)$. Combining this with $S_\Sigma[\Gamma] \subset \Gamma$ and \eqref{eq:Df_Gamma_def1} we obtain $D_f^{S_\Sigma[\Gamma]}(Q\|P)=D_f^\Gamma(Q\|P)$.  We conclude  by showing that $S_\Sigma[\Gamma] \subset \Gamma$  implies $S_\Sigma[\Gamma]= \Gamma^{\text{inv}}_\Sigma$.
First, if $\gamma \in \Gamma^{\text{inv}}_\Sigma$ then $S_\Sigma[\gamma]=\gamma$, therefore $\Gamma^{\text{inv}}_\Sigma \subset S_\Sigma[\Gamma]$.
Conversely, since $\Gamma \subset \mathcal{M}_b(X)$, the functions in $S_\Sigma[\Gamma]$ are $\Sigma$-invariant (see Lemma~\ref{lemma:symmetr_ops}). We assumed $S_\Sigma[\Gamma] \subset \Gamma$, hence $S_\Sigma[\Gamma] \subset \Gamma^{\text{inv}}_\Sigma$.

The proof for $\Gamma$-IPMs is similar, but does not require Jensen's inequality due to the linearity of the objective functional in $\gamma$. Hence the hypothesis  $S_\Sigma[\Gamma] \subset \Gamma$ is not necessary to obtain $W^\Gamma(Q,P)=W^{S_\Sigma[\Gamma]}(Q,P)$.  The proof for Sinkhorn divergences follows similar steps as for the $(f,\Gamma)$-divergences; see  Appendix \ref{app:Sinkhorn_proof} for details.
\end{proof}
 
{  Theorem \ref{thm:invariant_discriminator1} suggests that the discriminator space reduction effectively acts as an unbiased regularization
to prevent discriminator overfitting, a common challenge
for GAN optimization  in the small data regime. Using invariant discriminators can thus improve the data-efficiency of the model; this will be empirically verified in Tables \ref{tab:fid_rotmnist} - \ref{table:3}.}

\textbf{Examples satisfying the key condition $S_\Sigma[\Gamma] \subset \Gamma$ of Theorem~\ref{thm:invariant_discriminator1}
}
\begin{enumerate}[itemsep=0mm]
\item First we consider the standard $f$-divergence \eqref{eq:f_divergence} between two $\Sigma$-invariant probability measures $P$ and $Q$. The identity $S_\Sigma[\mathcal{M}_b(X)]=\mathcal{M}_{b,\Sigma}^{\text{inv}}(X)$ from Lemma~\ref{lemma:symmetr_ops} implies that the functions space  can be restricted  to the $\Sigma$-invariant bounded functions $\mathcal{M}_{b,\Sigma}^{\text{inv}}(X)$, giving rise to an
$(f, \Gamma)$-divergence \eqref{eq:Df_Gamma_def1} with $\Gamma=\mathcal{M}_{b,\Sigma}^{\text{inv}}(X)$, i.e., $D_f(Q\|P)=D_f^{\mathcal{M}_{b,\Sigma}^{\text{inv}}(X)}(Q\|P)$.

\item If the group $\Sigma$ is finite and the function space $\Gamma \subset \mathcal{M}_b(X)$ is convex and  closed  under $\Sigma$ in the sense of \eqref{def:Sigma_closed_Gamma}, then 
$
S_\Sigma[\Gamma] \subset \Gamma\, ,
$
as readily follows from the definition   \eqref{eq:symmetroperator:def}. Our implemented examples in Section \ref{sec:experiments} fall under this category.

\item The space of 1-Lipschitz functions on a metric space $(X,d)$, assuming the action is  $1$-Lipschitz, i.e., $d(T_\sigma(x),T_\sigma(y))\leq d(x,y)$ for all $\sigma\in\Sigma$, $x,y\in X$.

\item  The unit ball in an appropriate RKHS; see Lemma \ref{lemma:RKHS_SH}.

\item  More generally, if $\Gamma$ is convex and closed in the weak topology on $\Gamma$ induced by integration against finite signed measures; see Lemma \ref{lemma:closed_S_Sigma} for a proof.


\end{enumerate}

\subsubsection{Extension to general objective functionals}
Next we show how the  proof of Theorem \ref{thm:invariant_discriminator1} can be generalized to a wider variety of objective functionals. This result will utilize a certain topology on the space of bounded measurable functions which we describe in the following definition.
\begin{definition}\label{def:weak_topology}
Let $V$ be a subspace of $\mathcal{M}_b(X)^n$, $n\in\mathbb{Z}^+$, and $M(X)$ be the set of finite signed  measures on $X$.  For $\nu\in M(X)^n$  we define $\tau_\nu:V\to\mathbb{R}$ by $\tau_\nu(\gamma)\coloneqq\sum_{i=1}^n\int \gamma^i d\nu_i$ and we let $\mathcal{T}=\{\tau_\nu:\nu\in M(X)^n\}$.  $\mathcal{T}$ is a separating vector space of linear functionals on $V$ and we equip $V$ with the weak topology from $\mathcal{T}$ (i.e., the weakest topology on $V$ for which every $\tau\in \mathcal{T}$ is continuous). This makes $V$ a locally convex topological vector space with dual space $V^*=\mathcal{T}$; see Theorem 3.10 in \cite{rudin2006functional}.  In the following we will abbreviate this by saying that {\em $V$ has the $M(X)$-topology}.
\end{definition}
\begin{theorem}\label{thm:general}
Let $V$ be a subspace of $\mathcal{M}_b(X)^n$, $n\in\mathbb{Z}^+$, that is closed under $\Sigma$ in the sense of \eqref{def:Sigma_closed_Gamma} and satisfies $S_\Sigma[V]\subset V$.   Given an objective functional $H:V\times \mathcal{P}(X)\times\mathcal{P}(X)\to[-\infty,\infty)$ and a test function space $\Gamma\subset V$ we define
\begin{align}
    D_H^\Gamma(Q\|P)\coloneqq \sup_{\gamma\in\Gamma}H(\gamma;Q,P)\,.
\end{align}
If  $H(\cdot;Q,P)$ is concave and upper semi-continuous (USC) in the $M(X)$-topology on $V$ (see Definition \ref{def:weak_topology}) and \begin{align}
     H(\gamma\circ T_\sigma;Q,P)=H(\gamma;Q\circ T_\sigma^{-1},P\circ T_\sigma^{-1})
 \end{align}
 for all $\sigma\in\Sigma$, $\gamma\in V$, and $Q,P\in\mathcal{P}(X)$ then for all $\Sigma$-invariant $Q,P$ we have
 \begin{align}\label{eq:inv_DH_bound}
     D_H^\Gamma(Q\|P)\leq D_H^{S_\Sigma[\Gamma]}(Q\|P)\,.
 \end{align}
 If, in addition, $S_\Sigma[\Gamma]\subset\Gamma$ then $S_\Sigma[\Gamma]= \Gamma^{\text{inv}}_\Sigma$ and
  \begin{align}
     D_H^\Gamma(Q\|P)= D_H^{\Gamma^{\text{inv}}_\Sigma}(Q\|P)\,.
 \end{align}
\end{theorem}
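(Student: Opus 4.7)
The plan is to generalize the $(f,\Gamma)$-divergence portion of Theorem~\ref{thm:invariant_discriminator1} by replacing the pointwise Jensen inequality applied to $f^*$ with a functional Jensen inequality applied to $H(\,\cdot\,;Q,P)$ on the locally convex space $V$. The first ingredient is to identify $S_\Sigma[\gamma]$ as a weak (Pettis) integral of $\sigma\mapsto\gamma\circ T_\sigma$ in $V$ equipped with the $M(X)$-topology: for any $\tau_\nu\in V^*$ with $\nu\in M(X)^n$, Fubini's theorem applied to the bounded measurable integrand $\gamma^i(T_\sigma(x))$ against $\mu_\Sigma\otimes\nu_i$ yields
\begin{align*}
\tau_\nu(S_\Sigma[\gamma])=\sum_{i=1}^n\int S_\Sigma[\gamma^i]\,d\nu_i=\int_\Sigma \tau_\nu(\gamma\circ T_\sigma)\,\mu_\Sigma(d\sigma).
\end{align*}
Closedness of $V$ under $\Sigma$ ensures $\gamma\circ T_\sigma\in V$, and measurability of the group action supplies the measurability of the integrand.

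Writing $\Phi\coloneqq H(\,\cdot\,;Q,P)$, I may assume $\Phi\not\equiv -\infty$ (else \eqref{eq:inv_DH_bound} is trivial). Because $\Phi$ is proper, concave, and USC in the $M(X)$-topology, the Hahn--Banach separation theorem furnishes the dual representation $\Phi=\inf_{\alpha}\psi_\alpha$, where each $\psi_\alpha:V\to\R$ is a continuous affine majorant of $\Phi$ of the form $\psi_\alpha=\tau_{\nu_\alpha}+c_\alpha$. The equivariance hypothesis together with the $\Sigma$-invariance of $P,Q$ gives $H(\gamma\circ T_\sigma;Q,P)=H(\gamma;Q\circ T_\sigma^{-1},P\circ T_\sigma^{-1})=H(\gamma;Q,P)$ for every $\sigma\in\Sigma$, so $\sigma\mapsto\Phi(\gamma\circ T_\sigma)$ is the constant $\Phi(\gamma)$. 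Applying the Pettis identity to each $\psi_\alpha$, using $\psi_\alpha\geq\Phi$ pointwise, and recalling $\mu_\Sigma(\Sigma)=1$ then gives
\begin{align*}
\psi_\alpha(S_\Sigma[\gamma])=\int_\Sigma\psi_\alpha(\gamma\circ T_\sigma)\,\mu_\Sigma(d\sigma)\geq\int_\Sigma\Phi(\gamma\circ T_\sigma)\,\mu_\Sigma(d\sigma)=\Phi(\gamma).
\end{align*}
Taking the infimum over $\alpha$ on the left produces the functional Jensen bound $\Phi(S_\Sigma[\gamma])\geq\Phi(\gamma)$, i.e.\ $H(S_\Sigma[\gamma];Q,P)\geq H(\gamma;Q,P)$ for every $\gamma\in\Gamma$.

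Supremizing over $\gamma\in\Gamma$ on both sides yields \eqref{eq:inv_DH_bound}. Under the added hypothesis $S_\Sigma[\Gamma]\subset\Gamma$, the reverse inequality $D_H^{S_\Sigma[\Gamma]}(Q\|P)\leq D_H^\Gamma(Q\|P)$ is immediate from the definition of the divergence, so equality holds; and $S_\Sigma[\Gamma]=\Gamma^{\text{inv}}_\Sigma$ follows component-wise by the short argument at the end of Theorem~\ref{thm:invariant_discriminator1}, namely that every $\gamma\in\Gamma^{\text{inv}}_\Sigma$ equals $S_\Sigma[\gamma]\in S_\Sigma[\Gamma]$, while Lemma~\ref{lemma:symmetr_ops}(a) guarantees $S_\Sigma[\Gamma]\subset\Gamma^{\text{inv}}_\Sigma$. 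The main obstacle is the functional Jensen step: one must (i) recognize $S_\Sigma[\gamma]$ as a Pettis integral in the \emph{correct} topology so that the dual characterization of proper concave USC functions on a locally convex TVS is applicable, and (ii) handle possibly extended-real values of $\Phi$ and the Fubini exchange between $\mu_\Sigma$ and $\nu_i$ with care; everything else is either a direct consequence of $\Sigma$-invariance or a componentwise repetition of arguments already established for Theorem~\ref{thm:invariant_discriminator1}.
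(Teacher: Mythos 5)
Your proposal is correct and takes essentially the same route as the paper: the paper invokes Fenchel--Moreau biconjugation for the proper LSC convex function $G=-H(\cdot;Q,P)$ together with Fubini's theorem to exchange the dual pairing with the $\mu_\Sigma$-integral, which is exactly your ``functional Jensen via affine majorants / Pettis integral'' argument phrased in the equivalent $G=G^{**}=\sup_\nu\{\tau_\nu-G^*(\tau_\nu)\}$ form. The remaining steps (invariance of $\Phi(\gamma\circ T_\sigma)$, supremizing over $\Gamma$, the reverse inequality from $S_\Sigma[\Gamma]\subset\Gamma$, and the identification $S_\Sigma[\Gamma]=\Gamma_\Sigma^{\text{inv}}$) match the paper's verbatim.
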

\begin{remark}
See Section \ref{sec:SGamma_Gamma} for conditions implying $S_\Sigma[\Gamma]\subset\Gamma$.
\end{remark}
\begin{proof}
Fix $\gamma\in\Gamma$ and $\Sigma$-invariant $Q,P$. Define $G\coloneqq -H(\cdot;Q,P)$ and note that $G:V\to(-\infty,\infty]$ is LSC and convex. Convex conjugate duality (see the Fenchel-Moreau Theorem, e.g., Theorem 2.3.6 in \citet{bot2009duality}) and Fubini's theorem then imply
\begin{align*}
    G(S_\Sigma[\gamma])
    =&\sup_{\nu\in M(X)^n}\{\tau_\nu(S_\Sigma[\gamma])-G^*(\tau_\nu)\}\notag\\
        =&\sup_{\nu\in M(X)^n}\{\sum_i\int S_\Sigma[\gamma^i]d\nu_i-G^*(\tau_\nu)\}\notag\\   =&\sup_{\nu\in M(X)^n}\{\int\sum_i \int \gamma^i\circ T_\sigma d\nu_i-G^*(\tau_\nu)\mu_\Sigma(d\sigma)\}\notag\\
        =&\sup_{\nu\in M(X)^n}\{\int\tau_\nu(\gamma\circ T_\sigma)-G^*(\tau_\nu)\mu_\Sigma(d\sigma)\}\notag        \leq \int G(\gamma\circ T_\sigma)\mu_\Sigma(d\sigma)\,.
\end{align*}
We can use our assumptions to compute
\begin{align*}
G(\gamma\circ T_\sigma)=&-H(\gamma\circ T_\sigma;Q,P)\\
=&-H(\gamma;Q\circ T_\sigma^{-1},P\circ T_\sigma^{-1})\notag\\
=&-H(\gamma;Q,P)\notag
\end{align*}
and hence we obtain
\begin{align*}
    H(S_\Sigma[\gamma];Q,P)\geq H(\gamma;Q,P)\,. 
\end{align*}
Taking the supremum over $\gamma\in\Gamma$ gives \eqref{eq:inv_DH_bound}.  If   $S_\Sigma[\Gamma]\subset\Gamma$ then we clearly have the bound $D_H^{S_\Sigma[\Gamma]}\leq D_H^\Gamma$ and hence $D_H^{S_\Sigma[\Gamma]}= D_H^\Gamma$.  The equality $S_\Sigma[\Gamma]= \Gamma^{\text{inv}}_\Sigma$ was shown in the proof of Theorem \ref{thm:invariant_discriminator1} and so we are done.
\end{proof}

Theorem \ref{thm:general} applies to many classes of divergences, some of which we have not yet discussed. For example:
\begin{enumerate}
\item Integral probability metrics and MMD \eqref{eq:IPM:def}; see \cite{muller_1997, Gretton_review_IPM}.
\item $(f,\Gamma)$ divergences \eqref{eq:Df_Gamma_def1}; concavity and USC of the objective functional follows Proposition B.8 in \cite{Birrell:f-Gamma}.
\item Sinkhorn divergences \eqref{eq:sinkhorn_def};  concavity and USC of the objective functional follows Lemma B.7 in \cite{Birrell:f-Gamma}.
\item R{\'e}nyi divergence for $\alpha\in(0,1)$; see Theorem 3.1 in \cite{doi:10.1137/20M1368926}.
\item The Kullback-Leibler Approximate Lower bound Estimator (KALE); see Definition 1 in \cite{2021arXiv210608929G}.
\end{enumerate}

\subsubsection{Extension to other structure-preserving operators}

Let  $K_x(dx^\prime)$ be a probability kernel from $X$ to $X$ and define  $S_K:\mathcal{M}_b(X)\mapsto \mathcal{M}_{b}(X)$ by $S_K[f](x)\coloneqq\int f(x^\prime)K_x(dx^\prime)$.   $K$ also defines a dual map $S^K:\mathcal{P}(X)\to \mathcal{P}(X)$, $S^K[P]\coloneqq \int K_x(\cdot) P(dx)$. Let $\mathcal{P}_K(X)$ be the set of $K$-invariant probability measures, i.e., 
\begin{align}
\mathcal{P}_K(X)=\{P\in\mathcal{P}(X):S^K[P]=P\}\,.
\end{align}
In this setting we have the following generalization of Theorem \ref{thm:invariant_discriminator1}.
\begin{theorem}\label{thm:inv_disc_gen_K}
If $\Gamma\subset\mathcal{M}_b(X)$ such that $S_K[\Gamma]\subset\Gamma$ and $Q,P\in\mathcal{P}_K(X)$ then
\begin{align}\label{app:eq:S_K_general}
D^\Gamma(Q\|P)=D^{S_K[\Gamma]}(Q\|P)\,,
\end{align}
where $D^\Gamma$ is an $(f,\Gamma)$-divergence or a $\Gamma$-IPM. It also holds for the Sinkhorn divergence if $S_K[c(\cdot,y)]=c(\cdot,y)$ and $S_K[c(x,\cdot)]=c(x,\cdot)$ for all $x,y\in X$.

In addition, if $S_K$ is a projection (i.e., $S_K\circ S_K=S_K$) then $S_K[\Gamma]=\Gamma_K^{\text{inv}}$ where
where $\Gamma_K^{\text{inv}}\coloneqq\{\gamma\in\Gamma:S_K[\gamma]=\gamma\}$.
\end{theorem}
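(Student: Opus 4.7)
The plan is to transliterate the proof of Theorem~\ref{thm:invariant_discriminator1} from the Haar-measure setting to the kernel setting: the two essential properties of $S_\Sigma$ that drove that proof were (i) Jensen's inequality, which used only that $\mu_\Sigma$ is a probability measure, and (ii) the identity $E_P[S_\Sigma[\gamma]]=E_P[\gamma]$ for $\Sigma$-invariant $P$. Both have direct analogues here: each $K_x$ is a probability measure on $X$, so Jensen applies pointwise in $x$; and for $P\in\mathcal{P}_K(X)$ a Fubini calculation gives $E_P[S_K[\gamma]] = \int\!\int \gamma(x^\prime)K_x(dx^\prime)P(dx) = E_{S^K[P]}[\gamma] = E_P[\gamma]$, and similarly for $Q$.

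\textbf{The three divergence cases.} For the $(f,\Gamma)$-divergence, I would rewrite $D_f^{S_K[\Gamma]}(Q\|P)$ as a supremum over $\gamma\in\Gamma$, $\nu\in\mathbb{R}$ of $E_Q[S_K[\gamma]]-\nu-E_P[f^*(S_K[\gamma]-\nu)]$, apply Jensen's inequality pointwise in $x$ to the convex Legendre transform $f^*$ via the probability kernel $K_x$ to get $f^*(S_K[\gamma](x)-\nu) \le S_K[f^*(\gamma-\nu)](x)$, and then use $K$-invariance of $P$ and $Q$ to collapse the outer kernel averages. This yields $D_f^{S_K[\Gamma]}\ge D_f^\Gamma$, and the reverse inequality follows from $S_K[\Gamma]\subset\Gamma$. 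The $\Gamma$-IPM case drops the Jensen step by linearity of the objective in $\gamma$. For the Sinkhorn divergence, $S_K$ is extended coordinatewise to pairs $(\gamma_1,\gamma_2)$; the hypotheses $S_K[c(\cdot,y)]=c(\cdot,y)$ and $S_K[c(x,\cdot)]=c(x,\cdot)$ ensure the constraint $\gamma_1\oplus\gamma_2\le c$ is preserved under coordinatewise averaging (by integrating each side against the appropriate kernel), and Jensen applied to the convex exponential handles the entropic term, closely mirroring the argument in Appendix~\ref{app:Sinkhorn_proof}.

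\textbf{The projection identity and main obstacle.} The identity $S_K[\Gamma]=\Gamma_K^{\text{inv}}$ under the additional hypothesis $S_K\circ S_K=S_K$ reduces to a one-line double inclusion: any $\gamma\in\Gamma_K^{\text{inv}}$ satisfies $\gamma=S_K[\gamma]\in S_K[\Gamma]$, while conversely any $S_K[\gamma]$ with $\gamma\in\Gamma$ is $S_K$-invariant by $S_K(S_K[\gamma])=S_K[\gamma]$ and lies in $\Gamma$ by the hypothesis $S_K[\Gamma]\subset\Gamma$. The only real obstacle is bookkeeping in the Sinkhorn case: one must apply the kernel in each coordinate while preserving the cost constraint, and verify that $E_{P\times Q}$ against the exponential term behaves correctly under the coordinatewise $S_K$, which is precisely where separate cost-invariance in each variable is needed. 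Everything else transfers verbatim from the Haar-measure proof because the Haar measure entered Theorem~\ref{thm:invariant_discriminator1} only through its being a probability measure and the compatibility $E_P\circ S_\Sigma = E_P$ on invariant $P$, both of which hold in the kernel setting.
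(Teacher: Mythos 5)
Your proposal is correct and follows essentially the same route as the paper's proof: a Jensen inequality for $f^*$ pointwise in $x$ against the probability kernel $K_x$, followed by the identities $E_P[S_K[\gamma]]=E_{S^K[P]}[\gamma]=E_P[\gamma]$ (and likewise for $Q$) via Fubini, giving $D_f^{S_K[\Gamma]}\ge D_f^\Gamma$, with the reverse inequality from $S_K[\Gamma]\subset\Gamma$; the paper writes out only the $(f,\Gamma)$ case, notes the $\Gamma$-IPM needs no Jensen step, and handles the projection identity by the same two-line double inclusion you give. The only cosmetic imprecision is your mention of the constraint $\gamma_1\oplus\gamma_2\le c$ in the Sinkhorn case: for the entropic Sinkhorn divergence the test space $\Gamma=C_b(X)\times C_b(X)$ is unconstrained, and the coordinatewise cost-invariance is instead needed (as you also note) so that $c(x,y)=\int\!\int c(x',y')K_x(dx')K_y(dy')$ can be absorbed inside the exponential before applying Jensen against the product kernel.
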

\begin{remark}
Theorem \ref{thm:inv_disc_gen_K} is an instance of the data processing inequality; see Theorem 2.21 in \cite{Birrell:f-Gamma}.
\end{remark}
\begin{proof}
We prove \eqref{app:eq:S_K_general} for $(f,\Gamma)$-divergences.  The proofs for $\Gamma$-IPMs and Sinkhorn divergences are similar. We note that for $\Gamma$-IPMs, \eqref{app:eq:S_K_general} does not require the assumption $S_K[\Gamma]\subset\Gamma$.

Fix $Q,P\in\mathcal{P}_K(X)$ and use Jensen's inequality along with the $K$-invariance of $Q$ and $P$ to compute
\begin{align*}
D^{S_K[\Gamma]}_f(Q\|P)
=&\sup_{\gamma\in\Gamma,\nu\in\mathbb{R}}\{E_Q[S_K[\gamma]-\nu]-E_P[f^*(S_K[\gamma]-\nu)]\}\\
=&\sup_{\gamma\in\Gamma,\nu\in\mathbb{R}}\{E_Q[S_K[\gamma-\nu]]-E_P[f^*(\int (\gamma(x^\prime)-\nu)K_x(dx^\prime))]\}\\
\geq &  \sup_{\gamma\in\Gamma,\nu\in\mathbb{R}}\{E_Q[S_K[\gamma-\nu]] -E_P[\int f^*(\gamma(x^\prime)-\nu)K_x(dx^\prime))]\}\\
= &  \sup_{\gamma\in\Gamma,\nu\in\mathbb{R}}\{E_{S^K[Q]}[\gamma-\nu] -E_{S^K[P]}[ f^*(\gamma-\nu)]\}\\
= &  \sup_{\gamma\in\Gamma,\nu\in\mathbb{R}}\{E_{Q}[\gamma-\nu] -E_{P}[ f^*(\gamma-\nu)]\}=D_f^\Gamma(Q\|P)\,.
\end{align*}
Therefore $D^{S_K[\Gamma]}_f(Q\|P)\geq D_f^\Gamma(Q\|P)$. Note that this computation is a special case of the proof of the  data processing inequality for $(f,\Gamma$)-divergences; see  Theorem 2.21 in \cite{Birrell:f-Gamma}. The assumption  $S_K[\Gamma]\subset\Gamma$  implies the reverse inequality, hence we conclude $D^{S_K[\Gamma]}_f(Q\|P)= D_f^\Gamma(Q\|P)$.  

Now suppose $S_K\circ S_K=S_K$.   If $\gamma=S_K[\gamma^\prime]\in S_K[\Gamma]$ then $S_K[\gamma]=S_K[S_K[\gamma^\prime]]=S_K[\gamma^\prime]=\gamma$.  This, together with the assumption that $S_K[\Gamma]\subset\Gamma$ implies  $\gamma\in\Gamma_K^{\text{inv}}$.  Conversely, if $\gamma\in \Gamma_K^{\text{inv}}$ then $\gamma=S_K[\gamma]\in S_K[\Gamma]$ by the definition of $\Gamma_K^{\text{inv}}$.  This completes the proof.
\end{proof}
Conditional expectations, $S_{K}[f]\coloneqq E_P[f|\mathcal{A}]$, are a special case of Theorem \ref{thm:inv_disc_gen_K} with the kernel being a regular conditional probability, $K=P(\cdot|\mathcal{A})$. Here $\Gamma_K^{\text{inv}}$ is the set of $\mathcal{A}$-measurable functions in $\Gamma$, which can be significantly ``smaller" than $\Gamma$. The case where $\mathcal{A}=\sigma(\xi)$ for some random variable $\xi$ has particular importance in coarse graining of molecular dynamics \cite{Noid:Review:CG, Voth2018:Review}, as we will detail below. The result for $\Sigma$-invariant measures, Theorem \ref{thm:invariant_discriminator1}, is also special case of Theorem \ref{thm:inv_disc_gen_K}, where the kernel is $K_x=\mu_\Sigma\circ R_x^{-1}$, $R_x(\sigma)\coloneqq T_\sigma(x)$. Alternatively, Lemma \ref{lemma:symmetr_ops} (c) shows $S_\Sigma$ can  be written as a conditional expectation.

\paragraph{Coarse-graining and structure-preserving operators}

Here we show how to apply our structure preserving formalism, Theorem \ref{thm:inv_disc_gen_K}, in the context of coarse-graining.  We refer to the reviews \cite{Noid:Review:CG, Voth2018:Review} for fundamental concepts in  the coarse-graining of molecular systems.
Mathematically, a coarse-graining of the state space $X$ is given by a measurable (non-invertible) map 
$$
\xi: X \to Y
$$
where $y=\xi(x)$ are thought of as the coarse variables and $Y$ as a space of significantly less complexity than $X$. If $\mathcal{A}=\sigma(\xi)$ is the $\sigma$-algebra generated by the coarse-graining map $\xi$ then a function is measurable with respect to $\mathcal{A}$ if it is constant on every level set $\xi^{-1}(y)$. 

To complete the description of the coarse-graining one selects a kernel $K_y(dx)$, which in the coarse-graining literature is called the back-mapping.  The kernel $K_y(dx)$ describes the conditional distribution of the fully resolved state $x \in \xi^{-1}(y)$, conditioned on the coarse-grained  state $y=\xi(x)$, namely $K_y(dx)=P(dx|y)$; in particular $K_y(dx)$ is supported on the set $\xi^{-1}(y)$. The kernel induces naturally a projection $S_K:\mathcal{M}_b(X)\to \mathcal{M}_b(X)$ given by

$$
S_K[f](x)\,=\, \int_{\xi^{-1}(y)} f(x') K_y(dx')\, \quad \textrm{ for any } x\in \xi^{-1}(y)
$$
and, by construction, $S_K[f](x)$ is $\mathcal{A}$-measurable. If a measure is $S^K$-invariant, i.e., $S^K[P]=P$, then it is uniquely determined by its value on $\mathcal{A}$, in other words it is completely specified by a probability measure $Q \in \mathcal{P}(Y)$ on the coarse variable $y=\xi(x)$. We refer to such a $Q$ as a ``coarse-grained" probability measure. Once a coarse-grained measure is constructed on $Y$, see \cite{Noid:Review:CG, Voth2018:Review} for a rich  array of such methods, it can be  then ``reconstructed" as a measure on $X$ by the kernel $K_y(dx)$ as $P(dx)=K_y(dx)Q(dy)$. For example, if we take $X$ and $Y$ to be  discrete sets we can chose the trivial (uniform) reconstruction kernel with density $k_y(x)=\delta_x(\xi^{-1}(y))\frac{1}{|\xi^{-1}{(y)}|}$ and any coarse-grained measure with density $q(y)$ on the coarse variables $y$ is reconstructed on $X$ as a probability density on $X$:
$$
p(x)=\delta_{x}(\xi^{-1}(y))\frac{1}{|\xi^{-1}(y)|}q(y)\, , \quad \mbox{where}  \quad y=\xi(x)\, , x \in X\, .
$$

Finally, we note that 
back-mappings $K_y(dx)=P(dx|y)$ in coarse-graining---being probabilities conditioned on the coarse variables---can be constructed, to great accuracy,  as generative models  using \textit{conditional GANs}, see \cite{li2020backmapping,stieffenhofer_adversarial_2021}.

\subsection{Equivariant generator theorem}

Theorem~\ref{thm:invariant_discriminator1} provides the theoretical justification for reducing the discriminator space $\Gamma$ to its $\Sigma$-invariant subset $\Gamma_\Sigma^\text{inv}$  when the source $Q$ and the generated measure $P_g$ are \textbf{both}  $\Sigma$-invariant. Our next theorem, however, shows that such practice could easily lead to ``mode collapse" if  one of the two distributions is \textbf{not} $\Sigma$-invariant, see  Figure~\ref{fig:toy_200_alpha_2_2d}; the proof is deferred to \cref{app:proofs}.

\begin{theorem}
\label{thm:Gamma_invariant}
Let $S_\Sigma[\Gamma]\subset \Gamma$ and $P, Q\in \mathcal{P}(X)$, i.e., not necessarily $\Sigma$-invariant. We have
 \begin{align}\label{eq:collapse_thm:fgammadiv}
     &D^{\Gamma_\Sigma^\text{inv}}(Q\|P)=D^{\Gamma}(S^\Sigma[Q]\|S^\Sigma[P])\,,
    \end{align}
 where $D^\Gamma$ is an $(f,\Gamma)$-divergence or a $\Gamma$-IPM.
 \end{theorem}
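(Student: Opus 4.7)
The plan is to exploit the fact that when the discriminator $\gamma$ is $\Sigma$-invariant, integrating against $Q$ and integrating against $S^\Sigma[Q]$ give the same result, and similarly for $P$. This will let us replace $Q,P$ on the right-hand side of the variational formula by their symmetrizations $S^\Sigma[Q], S^\Sigma[P]$. Once we have done that, both distributions are $\Sigma$-invariant (by Lemma~\ref{lemma:symmetr_ops}(b)), so Theorem~\ref{thm:invariant_discriminator1} applies in reverse and allows us to enlarge the function space from $\Gamma_\Sigma^{\text{inv}}$ back up to $\Gamma$, yielding $D^\Gamma(S^\Sigma[Q]\|S^\Sigma[P])$.

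More concretely, I would first argue the following ``invariant integrand'' lemma: for any $h\in \mathcal{M}_{b,\Sigma}^{\text{inv}}(X)$ and any $R\in\mathcal{P}(X)$,
\begin{equation*}
    E_R[h]=E_R[S_\Sigma[h]]=E_{S^\Sigma[R]}[h]\,,
\end{equation*}
where the first equality is because $h$ is already $\Sigma$-invariant so $S_\Sigma[h]=h$ (Lemma~\ref{lemma:symmetr_ops}(a)), and the second is the definition of $S^\Sigma$. For a $\Gamma$-IPM, applying this with $h=\gamma\in\Gamma_\Sigma^{\text{inv}}$ gives $E_Q[\gamma]-E_P[\gamma]=E_{S^\Sigma[Q]}[\gamma]-E_{S^\Sigma[P]}[\gamma]$, so taking the supremum produces $W^{\Gamma_\Sigma^{\text{inv}}}(Q,P)=W^{\Gamma_\Sigma^{\text{inv}}}(S^\Sigma[Q],S^\Sigma[P])$. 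Since $S^\Sigma[Q],S^\Sigma[P]\in\mathcal{P}_\Sigma(X)$, Theorem~\ref{thm:invariant_discriminator1} applied to these invariant measures yields $W^{\Gamma_\Sigma^{\text{inv}}}(S^\Sigma[Q],S^\Sigma[P])=W^\Gamma(S^\Sigma[Q],S^\Sigma[P])$, closing the chain.

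For the $(f,\Gamma)$-divergence the same strategy works, with one extra observation: if $\gamma\in\Gamma_\Sigma^{\text{inv}}$ then for every $\nu\in\mathbb{R}$ the function $f^*(\gamma-\nu)$ is also $\Sigma$-invariant, because $(\gamma-\nu)\circ T_\sigma = \gamma-\nu$ pointwise. Hence the invariant-integrand identity applies to $f^*(\gamma-\nu)$ as well, so
\begin{equation*}
\Lambda_f^P[\gamma]=\inf_{\nu\in\mathbb{R}}\{\nu+E_P[f^*(\gamma-\nu)]\}=\inf_{\nu\in\mathbb{R}}\{\nu+E_{S^\Sigma[P]}[f^*(\gamma-\nu)]\}=\Lambda_f^{S^\Sigma[P]}[\gamma]
\end{equation*}
for every $\gamma\in\Gamma_\Sigma^{\text{inv}}$. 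Combined with $E_Q[\gamma]=E_{S^\Sigma[Q]}[\gamma]$, taking the supremum over $\gamma\in\Gamma_\Sigma^{\text{inv}}$ gives $D_f^{\Gamma_\Sigma^{\text{inv}}}(Q\|P)=D_f^{\Gamma_\Sigma^{\text{inv}}}(S^\Sigma[Q]\|S^\Sigma[P])$, and then Theorem~\ref{thm:invariant_discriminator1} (using $S_\Sigma[\Gamma]\subset\Gamma$, which was shown there to force $S_\Sigma[\Gamma]=\Gamma_\Sigma^{\text{inv}}$) upgrades $\Gamma_\Sigma^{\text{inv}}$ back to $\Gamma$ on the invariant pair $(S^\Sigma[Q],S^\Sigma[P])$.

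There is no real obstacle here; the only subtlety worth noting is that invariance of the integrand, not invariance of the measure, is what drives the substitution $E_R\rightsquigarrow E_{S^\Sigma[R]}$, so one has to be careful to verify that $f^*(\gamma-\nu)$ inherits invariance from $\gamma$ before swapping $P$ for $S^\Sigma[P]$ inside $\Lambda_f^P$. Once that is in hand, the proof is a two-line reduction to Theorem~\ref{thm:invariant_discriminator1}.
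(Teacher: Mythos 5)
Your proposal is correct and follows essentially the same route as the paper's proof: both rest on the observation that for $\gamma\in\Gamma_\Sigma^{\text{inv}}$ (and hence for $f^*(\gamma-\nu)$) one may freely swap $E_R[\cdot]$ with $E_{S^\Sigma[R]}[\cdot]$, followed by an application of Theorem~\ref{thm:invariant_discriminator1} to the symmetrized (hence $\Sigma$-invariant) pair $(S^\Sigma[Q],S^\Sigma[P])$. The paper simply writes the same chain of equalities starting from $D_f^\Gamma(S^\Sigma[Q]\|S^\Sigma[P])$ and working toward $D_f^{\Gamma_\Sigma^{\text{inv}}}(Q\|P)$, whereas you traverse it in the opposite direction.
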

 \begin{remark}
The analogous result for the Sinkhorn divergences also holds if the cost is separately $\Sigma$-invariant in each variable, i.e., $c(T_\sigma(x),y)=c(x,y)$ and $c(x,T_\sigma(y))=c(x,y)$ for all $\sigma\in\Sigma$, $x,y\in X$. Though this is not satisfied by most   commonly used cost functions and actions one can always enforce it by replacing the cost function $c$ with the symmetrized cost
\begin{align}
     c_\Sigma(x,y)\coloneqq\int\int c(T_\sigma(x),T_{\sigma^\prime}(y))\mu_\Sigma(d\sigma)\mu_\Sigma(d\sigma^\prime)\,.
\end{align}
\end{remark}
\begin{proof}
We prove the result for $(f,\Gamma)$-divergences; the proof for $\Gamma$-IPMs is similar.
\begin{align*}
    D_f^\Gamma(S^\Sigma[Q]\| S^\Sigma [P]) & = D_f^{\Gamma_\Sigma^\text{inv}}(S^\Sigma[Q]\|S^\Sigma[P])\\
    & = \sup_{\gamma\in \Gamma_\Sigma^{\text{inv}}, \nu\in\R}\left\{ E_{S^\Sigma[Q]}[\gamma-\nu] - E_{S^\Sigma[P]}[f^*(\gamma-\nu)]\right\}\\
    & = \sup_{\gamma\in \Gamma_\Sigma^{\text{inv}}, \nu\in\R}\left\{ E_{Q}[\gamma-\nu] - E_{P}[f^*(\gamma-\nu)]\right\}\\
    & = D_f^{\Gamma_\Sigma^\text{inv}}(Q\|P)\,,
\end{align*}
where the first equality is due to Theorem~\ref{thm:invariant_discriminator1}, and the third equality holds as $\gamma-\nu$ and $f^*(\gamma-\nu)$ are both $\Sigma$-invariant when $\gamma\in\Gamma_\Sigma^{\text{inv}}$.
\end{proof}

Theorem~\ref{thm:Gamma_invariant} has the following implications: {  If one uses a $\Sigma$-invariant GAN (i.e., invariant discriminators and equivariant generators) to learn a non-invariant data source $Q$ then one will in fact learn the symmetrized version $S^\Sigma[Q]$.} On the other hand,  if the data source $Q$ is $\Sigma$-invariant (i.e., $S^\Sigma[Q]= Q$, cf.~Lemma~\ref{lemma:symmetr_ops}) but the GAN generated distribution $P_g$ is not then discriminators from  $\Gamma_\Sigma^\text{inv}$ alone can not differentiate $Q$ and $P_g$, i.e., $D^{\Gamma_\Sigma^\text{inv}}(Q\|P_g)=0$, as long as $Q = S^\Sigma[P_g]$. This suggests that $P_g$ can easily suffer from ``mode collapse", as it only needs to equal $Q$ after $\Sigma$-symmetrization; we refer readers to Figure~\ref{fig:toy_200_alpha_2_2d} (2nd and 4th rows) for a visual illustration, where a unimodal $P_g$ can be erroneously selected as the ``best" fitting model, even though its $\Sigma$-symmetrization $S^\Sigma[P_g]$ should be the ``correct" one.

To prevent this from happening, one needs to ensure the generator produces a $\Sigma$-invariant distribution $P_g$; this is guaranteed by the following Theorem.
\begin{theorem}
\label{thm:generator}
If $P_Z\in \mathcal{P}(Z)$ is $\Sigma$-invariant and $g:Z\to X$ is $\Sigma$-equivariant then 
the push-forward measure $P_g\coloneqq P_Z\circ g^{-1}$ is $\Sigma$-invariant, i.e., $P_g\in \mathcal{P}_\Sigma(X)$.
\end{theorem}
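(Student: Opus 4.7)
The plan is to verify the defining identity $P_g\circ(T_\sigma^X)^{-1}=P_g$ for every $\sigma\in\Sigma$ directly on Borel sets, using only the equivariance of $g$ and the invariance of $P_Z$. Since $P_g=P_Z\circ g^{-1}$ by definition, the whole argument reduces to chasing preimages through the commuting square
\begin{equation*}
T_\sigma^X\circ g = g\circ T_\sigma^Z.
\end{equation*}

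First I would fix an arbitrary Borel set $B\subset X$ and $\sigma\in\Sigma$, and write
\begin{equation*}
P_g\circ(T_\sigma^X)^{-1}(B)=P_Z\bigl(g^{-1}\bigl((T_\sigma^X)^{-1}(B)\bigr)\bigr).
\end{equation*}
Next I would use the set-theoretic identity $(T_\sigma^X\circ g)^{-1}=g^{-1}\circ(T_\sigma^X)^{-1}$ together with the equivariance relation to rewrite
\begin{equation*}
g^{-1}\bigl((T_\sigma^X)^{-1}(B)\bigr)=(T_\sigma^X\circ g)^{-1}(B)=(g\circ T_\sigma^Z)^{-1}(B)=(T_\sigma^Z)^{-1}\bigl(g^{-1}(B)\bigr).
\end{equation*}
Finally, applying the hypothesis $P_Z\circ(T_\sigma^Z)^{-1}=P_Z$ to the Borel set $g^{-1}(B)$ yields
\begin{equation*}
P_Z\bigl((T_\sigma^Z)^{-1}(g^{-1}(B))\bigr)=P_Z(g^{-1}(B))=P_g(B),
\end{equation*}
which is exactly the $\Sigma$-invariance of $P_g$, so $P_g\in\mathcal{P}_\Sigma(X)$.

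There is no real obstacle here: the only subtlety worth flagging is the measurability of $g^{-1}(B)$, which is ensured because $g$ is a measurable map (implicit in the setup of push-forward measures), so all preimages considered lie in the relevant $\sigma$-algebras and the $\Sigma$-invariance assumption on $P_Z$ applies. The proof is essentially one line of preimage manipulation plus one application of each hypothesis, and does not require any of the symmetrization machinery developed earlier in the paper.
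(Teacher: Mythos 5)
Your proof is correct and follows essentially the same route as the paper's: both chase preimages through the commuting square $T_\sigma^X\circ g = g\circ T_\sigma^Z$ and then apply the $\Sigma$-invariance of $P_Z$. The only cosmetic difference is that you evaluate on an arbitrary Borel set $B$ while the paper manipulates the composed preimage maps directly; the underlying computation is identical.
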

\begin{proof}
The proof is based on the equivalence of the following commutative diagrams:
\begin{equation}
\begin{tikzcd}
    Z\arrow[d, "T_\sigma^Z"']\arrow[r, "g"] &  X\arrow[d, "T_\sigma^X"']\\
    Z\arrow[r, "g", swap] & X
\end{tikzcd} \iff 
\begin{tikzcd}
    \mathcal{P}(Z)\arrow[d, "\circ (T_\sigma^Z)^{-1}"']\arrow[r, "\circ g^{-1}"] &  \mathcal{P}(X)\arrow[d, "\circ (T_\sigma^X)^{-1}"']\\
    \mathcal{P}(Z)\arrow[r, "\circ g^{-1}", swap] & \mathcal{P}(X)
\end{tikzcd}
\end{equation}
More specifically,
\begin{align*}
    &P_g\circ (T^X_\sigma)^{-1}=P_Z\circ g^{-1}\circ (T^X_\sigma)^{-1} =P_Z\circ  (T^X_\sigma\circ g)^{-1}\\
    =& P_Z\circ  ( g\circ T^Z_\sigma)^{-1} =P_Z\circ   (T^Z_\sigma)^{-1} \circ g^{-1}=P_Z\circ g^{-1}\\
     =&P_g\,,\notag
\end{align*}
where the third and fifth equalities are due to the equivariance and invariance, respectively, of $g$ and $P_Z$.
\end{proof}
We note that equivariant flow-based methods have also been proposed based on a similar strategy to Theorem~\ref{thm:generator}. We refer readers to Section \ref{sec:related} for  a discussion of related works.

\begin{figure}[h]
    \centering
    \includegraphics[width=\columnwidth]{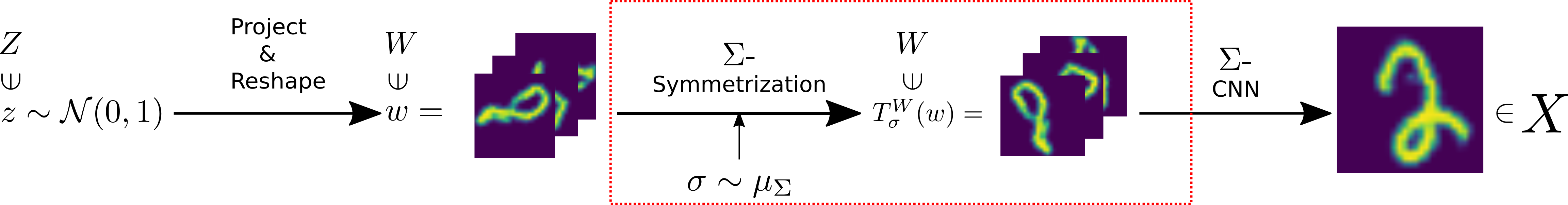}
    \caption{The $\Sigma$-symmetrization layer (enclosed in the red rectangle), which is missing in \cite{EquivariantGAN}, ensures generator equivariance, which is critical in preventing GAN ``mode collapse" [cf.~Remark~\ref{rmk:equivariant_generator_2}].}
    \label{fig:symmetrization}
\end{figure}
\begin{remark}
\label{rmk:equivariant_generator_1}
Suppose $g = \gamma_2\circ \gamma_1$ is a composition of two maps, $\gamma_1:Z\to W$ and $\gamma_2:W\to X$. Even if $\gamma_1$ is not $\Sigma$-equivariant (in fact, $Z$ does not even need to be equipped with a $\Sigma$-action $T_\sigma^Z$), as long as $P_{\gamma_1}\in\mathcal{P}(W)$ is $\Sigma$-invariant and $\gamma_2$ is $\Sigma$-equivariant, the push-forward measure $P_g\in \mathcal{P}(X)$ is still $\Sigma$-invariant.
\end{remark}
{  
To construct the $\Sigma$-invariant noise source  required in Theorem \ref{thm:generator} (or Remark \ref{rmk:equivariant_generator_1}) one can begin with an arbitrary noise source and use a {\bf $\Sigma$-symmetrization layer}, as described by the following theorem.
\begin{theorem}\label{thm:inv_noise}
Let $W\sim \mu_\Sigma$ and  $N$ be a $Z$-valued random variable (i.e., an arbitrary noise source). If $N$ and $W$ are independent then the distribution of $T^Z(W,N)$ is $\Sigma$-invariant.
\end{theorem}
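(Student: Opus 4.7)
The plan is to verify $\Sigma$-invariance of the distribution of $Y \coloneqq T^Z(W,N)$ directly from the definition: for each $\sigma \in \Sigma$ I want to show $T^Z_\sigma(Y) \stackrel{d}{=} Y$. By Lemma \ref{lemma:symmetr_ops} and the definition of $\Sigma$-invariant measure, this amounts to showing $E[h(T^Z_\sigma(Y))] = E[h(Y)]$ for every bounded measurable $h:Z\to\mathbb{R}$.

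First I would exploit the group action axiom $T^Z_\sigma \circ T^Z_{\sigma'} = T^Z_{\sigma\cdot\sigma'}$ to rewrite $T^Z_\sigma(T^Z(W,N)) = T^Z(\sigma\cdot W, N)$. Then the left-invariance of Haar measure, $\mu_\Sigma(\sigma\cdot E) = \mu_\Sigma(E)$ (recorded in \eqref{eq:Haar_int_inv}), implies that $\sigma\cdot W \sim \mu_\Sigma$, so $\sigma\cdot W$ and $W$ have the same law. Combined with the independence of $W$ and $N$, the pairs $(\sigma\cdot W, N)$ and $(W,N)$ share the same joint distribution on $\Sigma \times Z$. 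Applying the (measurable) map $T^Z$ to both pairs then yields $T^Z(\sigma\cdot W, N) \stackrel{d}{=} T^Z(W,N) = Y$.

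Concretely, I would compute, using Fubini and \eqref{eq:Haar_int_inv},
\begin{align*}
E[h(T^Z_\sigma(Y))] &= \int_Z \int_\Sigma h(T^Z(\sigma\cdot\sigma', n))\,\mu_\Sigma(d\sigma')\, dP_N(n) \\
&= \int_Z \int_\Sigma h(T^Z(\sigma', n))\,\mu_\Sigma(d\sigma')\, dP_N(n) = E[h(Y)],
\end{align*}
which is the desired invariance. The measurability of $(\sigma, n) \mapsto h(T^Z(\sigma, n))$ is automatic from the standing assumption that the group action is measurable, so Fubini applies.

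The main (quite mild) obstacle is bookkeeping the difference between invariance of $\mu_\Sigma$ under group multiplication in $\Sigma$ and the induced invariance on the orbit $T^Z(\cdot, n) : \Sigma \to Z$ for each fixed $n$; once this is separated cleanly using independence to factor the expectation, the argument reduces to a single application of \eqref{eq:Haar_int_inv} inside the inner integral.
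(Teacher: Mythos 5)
Your proposal is correct and follows essentially the same route as the paper's proof: both hinge on the group-action identity $T^Z_\sigma \circ T^Z = T^Z\circ(T^\Sigma_\sigma\times \mathrm{id})$, the left-invariance of $\mu_\Sigma$, and independence to factor the joint law as $\mu_\Sigma\times P_N$. The only difference is cosmetic — the paper composes pushforward maps, while you integrate bounded measurable test functions and invoke Fubini — and the two formulations are equivalent.
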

\begin{proof}
Let $P_Z$ denote the distribution of $N$. Independence of $W$ and $N$ implies $(W,N)\sim \mu_\Sigma\times P_Z$. Therefore $T^Z(W,N)\sim (\mu_\Sigma\times P_Z)\circ (T^Z)^{-1}\coloneqq P^\Sigma_Z$. We need to show that $P^\Sigma_Z$ is $\Sigma$-invariant: For $\sigma\in\Sigma$ we can compute
\begin{align}
  P^\Sigma_Z\circ (T^Z_\sigma)^{-1}=&(\mu_\Sigma\times P_Z)\circ (T^Z)^{-1}\circ (T^Z_\sigma)^{-1}\\
  =&  (\mu_\Sigma\times P_Z)\circ (T^Z_\sigma\circ T^Z)^{-1}\notag\\
  =&(\mu_\Sigma\times P_Z)\circ (  T^Z\circ (T_\sigma^\Sigma\times id))^{-1}\notag\\
  =&(\mu_\Sigma\times P_Z)\circ (T_\sigma^\Sigma\times id)^{-1}\circ ( T^Z)^{-1}\,,\notag
\end{align}
where $T^\Sigma$ is the left-multiplication action of $\Sigma$ on itself.  Invariance of $\mu_\Sigma$ implies
\begin{align}
    (\mu_\Sigma\times P_Z)\circ (T_\sigma^\Sigma\times id)^{-1}=&(\mu_\Sigma\circ (T_\sigma^\Sigma)^{-1})\times P_Z    =\mu_\Sigma\times P_Z\,.
\end{align}
Therefore
\begin{align}
  &P^\Sigma_Z\circ T_\sigma^{-1}=(\mu_\Sigma\times P_Z)\circ (  T^Z)^{-1}=P^\Sigma_Z\,.
  \end{align}
  This proves $P^\Sigma_Z$ is $\Sigma$-invariant as claimed.
\end{proof}
}

\begin{remark}
\label{rmk:equivariant_generator_2}
\citet{EquivariantGAN} also proposed to use G-CNNs to generate images with $C_4/D_4$-invariant distributions. However, the first step in their model, i.e., the ``Project \& Reshape" step [cf.~Figure~\ref{fig:symmetrization}], uses a fully-connected layer which destroys the group symmetry in the noise source, leading to non-invariant final distribution $P_g$ even if the subsequent layers are all $\Sigma$-equivariant. This easily leads to ``mode collapse" [cf.~Theorem~\ref{thm:Gamma_invariant}], which we will empirically demonstrate in \cref{sec:experiments}; see, e.g., Figure~\ref{fig:toy_200_alpha_2_2d} (4th row). An easy remedy for this is to add a $\Sigma$-symmetrization layer: let $w$ be the output of ``Project \& Reshape"; the  $\Sigma$-symmetrization layer draws a random $\sigma\sim\mu_{\Sigma}$ and transforms $w$ into $T_\sigma^W(w)$, producing a $\Sigma$-invariant distribution on the layer output {  (see Theorem \ref{thm:inv_noise})}. The final distribution $P_g$ is thus $\Sigma$-invariant if subsequent layers are all $\Sigma$-equivariant by Remark~\ref{rmk:equivariant_generator_1}. See Figure~\ref{fig:symmetrization} for a visual illustration.
\end{remark}

\subsection{Conditions Ensuring $S_\Sigma[\Gamma]\subset\Gamma$}\label{sec:SGamma_Gamma}
In this section we provide conditions under which the test function space $\Gamma$ is closed under symmetrization, that being a key assumption in our main results in Section \ref{sec:theory}. First we show that $S_\Sigma[\Gamma]\subset\Gamma$ when $\Gamma$ is   the unit ball  in an appropriate RKHS. 
\begin{lemma}\label{lemma:RKHS_SH}
Let $V\subset \mathcal{M}_b(X)$ be a separable RKHS with  reproducing-kernel $k:X\times X\to\mathbb{R}$. Let $\Gamma=\{\gamma\in V:\|\gamma\|_V\leq 1\}$ be the unit ball in $V$.  Suppose we have a measurable group action $T:\Sigma\times X\to X$ and $k$ is $\Sigma$-invariant under this action (i.e., $k(T_\sigma(x),T_\sigma(y))=k(x,y)$ for all $\sigma\in \Sigma$, $x,y\in X$). Then $S_\Sigma[\Gamma]\subset\Gamma$.
\end{lemma}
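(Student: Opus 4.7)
The plan is to define the pullback operator $U_\sigma:V\to V$, $U_\sigma\gamma\coloneqq \gamma\circ T_\sigma$, and show it is a unitary on $V$; then to realize $S_\Sigma[\gamma]$ as a Bochner average of the unit-norm family $\{U_\sigma\gamma\}_{\sigma\in\Sigma}$ inside $V$ and read off the norm bound $\|S_\Sigma[\gamma]\|_V\leq \|\gamma\|_V\leq 1$ from the triangle/Jensen inequality for Bochner integrals.

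First I would check $U_\sigma$ is well defined on $V$ using only the kernel invariance. On the dense subspace of finite combinations $\sum_i c_i k(\cdot,y_i)$ I would verify, via the kernel's reproducing property together with $k(T_\sigma x,T_\sigma y)=k(x,y)$, that $(\sum_i c_i k(\cdot,y_i))\circ T_\sigma=\sum_i c_i k(\cdot,T_\sigma^{-1}y_i)$, and that its RKHS norm squared equals $\sum_{i,j}c_ic_j k(T_\sigma^{-1}y_i,T_\sigma^{-1}y_j)=\sum_{i,j}c_ic_j k(y_i,y_j)$. Thus $U_\sigma$ is isometric on a dense subset, and therefore extends to a unitary on all of $V$; in particular $\|U_\sigma\gamma\|_V=\|\gamma\|_V$ for every $\gamma\in V$ and every $\sigma\in\Sigma$.

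Next I would define the candidate Bochner integral $\widetilde S[\gamma]\coloneqq \int_\Sigma U_\sigma\gamma\,\mu_\Sigma(d\sigma)\in V$. To justify its existence I would use separability of $V$: for each fixed $\gamma$ the map $\sigma\mapsto U_\sigma\gamma$ is weakly measurable (its pairing with any $k(\cdot,x)$ equals the measurable map $\sigma\mapsto\gamma(T_\sigma x)$, and these functionals are dense), so Pettis's theorem yields strong measurability; since $\|U_\sigma\gamma\|_V$ is a constant the Bochner integrability $\int\|U_\sigma\gamma\|_V\,\mu_\Sigma(d\sigma)=\|\gamma\|_V<\infty$ is immediate. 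The triangle inequality for Bochner integrals then gives $\|\widetilde S[\gamma]\|_V\le \|\gamma\|_V$.

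Finally I would identify $\widetilde S[\gamma]$ with $S_\Sigma[\gamma]$ pointwise by evaluating both against $k(\cdot,x)$: since point-evaluation at $x$ is the continuous linear functional $\langle\cdot,k(\cdot,x)\rangle_V$, one can interchange it with the Bochner integral to obtain
\begin{align*}
\widetilde S[\gamma](x)=\langle \widetilde S[\gamma],k(\cdot,x)\rangle_V=\int_\Sigma \langle U_\sigma\gamma,k(\cdot,x)\rangle_V\,\mu_\Sigma(d\sigma)=\int_\Sigma \gamma(T_\sigma x)\,\mu_\Sigma(d\sigma)=S_\Sigma[\gamma](x).
\end{align*}
Hence $S_\Sigma[\gamma]=\widetilde S[\gamma]\in V$ with $\|S_\Sigma[\gamma]\|_V\leq\|\gamma\|_V\leq1$, i.e. $S_\Sigma[\gamma]\in\Gamma$. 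The main obstacle I anticipate is the strong measurability of $\sigma\mapsto U_\sigma\gamma$ in $V$; this is where the separability hypothesis on $V$ is actually used, through Pettis's theorem, and where some care is needed to reconcile the abstract RKHS-valued integral with the pointwise Haar integral that defines $S_\Sigma$.
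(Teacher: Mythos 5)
Your proposal is correct and matches the paper's proof essentially step for step: establish that $U_\sigma$ is an isometry by working on the dense span of kernel sections and extending, realize $S_\Sigma[\gamma]$ as a $V$-valued Bochner integral using separability and weak-to-strong measurability (you cite Pettis by name; the paper carries out the same argument explicitly), bound its norm by the triangle inequality, and identify the Bochner integral with the pointwise Haar integral via the continuity of evaluation functionals. No substantive differences.
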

\begin{remark}
The proof will  use many standard properties of a RKHS. In particular, recall that the assumption  $X\subset \mathcal{M}_b(X)$ implies $k$ is bounded  and jointly measurable. See  Chapter 4 in \cite{steinwart2008support} for this and further background. See \cite{JMLR:v12:sriperumbudur11a} and references therein for more discussion of characteristic kernels as well as the related topic of universal kernels.
\end{remark}
\begin{proof}
The $\Sigma$-invariance of $k$ implies
\begin{align}\label{eq:k_h}
    k(T_\sigma(x),y)=k(T_\sigma(x),T_\sigma( T_{\sigma^{-1}}(y)))=k(x,T_{\sigma^{-1}}(y))
\end{align}
and
\begin{align}\label{eq:inner_product_inv}
    &\langle k(\cdot,T_\sigma(x)),k(\cdot, T_\sigma(y))\rangle_V=k(T_\sigma(x),T_\sigma(y))    =k(x,y)    =\langle k(\cdot,x),k(\cdot,y)\rangle_V
\end{align}
for all $\sigma\in \Sigma$ and $x,y\in X$.   Next we will show that the map $U_\sigma:\gamma\mapsto \gamma\circ T_\sigma$ is an isometry on $V$  for all $\sigma\in \Sigma$, $\gamma\in V$:  It is clearly a linear map. To show its range is contained in $V$, first recall that the span of $\{k(\cdot,x)\}_{x\in X}$ is dense in $V$. Therefore, given $\gamma\in V$ there is a sequence $\gamma_n\to \gamma$ having the form
\begin{align*}
    \gamma_n= \sum_{i=1}^{N_n} a_{n,i}k(\cdot,x_{n,i})
\end{align*}
for some $a_{n,i}\in\mathbb{R}$, $x_{n,i}\in X$. Equation \eqref{eq:k_h} implies
\begin{align*}\label{eq:g_n_Lh}
    \gamma_n\circ T_\sigma= \sum_{i=1}^{N_n} a_{n,i}k(T_\sigma(\cdot),x_{n,i})=\sum_{i=1}^{N_n} a_{n,i}k(\cdot,T_{\sigma^{-1}}(x_{n,i}))\,.
\end{align*}
Combining \req{eq:g_n_Lh} with \req{eq:inner_product_inv} we can conclude that $\|\gamma_n\circ T_\sigma\|_V=\|\gamma_n\|_V$ and $\|\gamma_n\circ T_\sigma-\gamma_m\circ T_\sigma\|_V=\|\gamma_n-\gamma_m\|_V$.  $\gamma_n$ converges in $V$, hence is Cauchy, therefore $\gamma_n\circ T_\sigma$ is Cauchy as well. We have assumed $V$ is complete, therefore $\gamma_n\circ T_\sigma\to \tilde \gamma$ for some $\tilde \gamma\in V$.   $V$ is a RKHS, hence the evaluation maps are continuous and we find $\tilde \gamma(x)=\lim_n \gamma_n(T_\sigma(x))=\gamma(T_\sigma(x))$ for all $x$.  Therefore $\gamma\circ T_\sigma=\tilde \gamma\in V$ and
\begin{align*}
\|\gamma\circ T_\sigma\|_V =\lim_n\|\gamma_n\circ T_\sigma\|_V=\lim_n\|\gamma_n\|_V=\|\gamma\|_V\,.
\end{align*}
This proves $U_\sigma$ is an isometry on $V$. 

Now fix $\gamma\in\Gamma$. We will show that the map $\sigma\to U_\sigma[\gamma]$ is Bochner integrable (see, e.g., Appendix E in \citet{cohn2013measure}): It clearly has has separable range since $V$ was assumed to be separable. By the same reasoning as above, given $\tilde \gamma\in V$ we   have a sequence $\tilde \gamma_n\to \tilde \gamma$ where
\begin{align*}
    \tilde \gamma_n= \sum_{i=1}^{N_n} a_{n,i}k(\cdot,x_{n,i})\,.
\end{align*}
Hence
\begin{align*}
    \langle \tilde \gamma, U_\sigma[\gamma]\rangle_V=&\lim_n \sum_{i=1}^{N_n} a_{n,i}\langle k(\cdot,x_{n,i}), U_\sigma[\gamma]\rangle_V=\lim_n\sum_{i=1}^{N_n} a_{n,i}, U_\sigma[\gamma]( x_{n,i})\\
     =&\lim_n\sum_{i=1}^{N_n} a_{n,i}, \gamma(T_\sigma(x_{n,i}))\notag\,,
\end{align*}
which is now clearly measurable in $\sigma$ due to the measurability of the action.  Therefore $\sigma\mapsto U_\sigma[\gamma]$ is strongly measurable.  $\|U_\sigma[\gamma]\|_V= \|\gamma\|_V\leq 1$, therefore the Bochner integral  $\int U_\sigma[\gamma]\mu_\Sigma(d\sigma)$ exists in $V$ and satisfies
\begin{align*}
   \| \int U_\sigma[\gamma]\mu_\Sigma(d\sigma)\|_V\leq  \int \|U_\sigma[\gamma]\|_V\mu_\Sigma(d\sigma)\leq 1\,.
\end{align*}
 This proves $\int U_\sigma[\gamma]\mu_\Sigma(d\sigma)\in \Gamma$.  Finally, $V$ is a RKHS and so the evaluation maps are in $V^*$. Therefore evaluation commutes with the Bochner integral and we find
\begin{align*}
    &(\int U_\sigma[\gamma]\mu_\Sigma(d\sigma))(x)=\int U_\sigma[\gamma](x)\mu_\Sigma(d\sigma)    = \int \gamma(T_\sigma(x))\mu_\Sigma(d\sigma)=S_\Sigma[\gamma](x)\,.
\end{align*}
Hence we can conclude $S_\Sigma[\gamma]\in\Gamma$ for all $\gamma\in\Gamma$ as claimed.
\end{proof}

The next result provides a general framework for proving $S_\Sigma[\Gamma]\subset\Gamma$.
\begin{lemma}\label{lemma:closed_S_Sigma}
Let $V\subset \mathcal{M}_b(X)^n$, $n\in\mathbb{Z}^+$, be a subspace equipped with the $M(X)$-topology (see Definition \ref{def:weak_topology}) and $\Gamma\subset V$. If  $\Gamma$ is convex and closed, the group action $T:\Sigma\times X\to X$ is measurable,  $S_\Sigma[V]\subset V$, and $\Gamma$ is closed under $\Sigma$ (i.e., $\gamma\circ T_\sigma\in\Gamma$ for all $\gamma\in\Gamma$, $\sigma\in\Sigma$) then $S_\Sigma[\Gamma]\subset\Gamma$. 
\end{lemma}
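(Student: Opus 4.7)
The plan is to argue by Hahn--Banach separation in the locally convex topological vector space $V$ equipped with the $M(X)$-topology. Since $S_\Sigma[V]\subset V$, the element $S_\Sigma[\gamma]$ lives in $V$ for every $\gamma\in\Gamma$, so it makes sense to ask whether it lies in the convex closed set $\Gamma$. Suppose, for contradiction, that there is some $\gamma\in\Gamma$ with $S_\Sigma[\gamma]\notin\Gamma$.

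By Definition \ref{def:weak_topology}, the continuous dual of $V$ is exactly $\mathcal{T}=\{\tau_\nu:\nu\in M(X)^n\}$. The Hahn--Banach separation theorem (applicable because $\Gamma$ is convex and closed and $\{S_\Sigma[\gamma]\}$ is a disjoint compact convex set) then produces some $\nu\in M(X)^n$ and $c\in\mathbb{R}$ with
\begin{equation*}
\tau_\nu(\tilde\gamma)\le c\quad\text{for all }\tilde\gamma\in\Gamma,\qquad \tau_\nu(S_\Sigma[\gamma])>c.
\end{equation*}

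Next I would compute $\tau_\nu(S_\Sigma[\gamma])$ by Fubini's theorem. Writing out the definitions,
\begin{equation*}
\tau_\nu(S_\Sigma[\gamma])=\sum_{i=1}^n\int_X \int_\Sigma \gamma^i(T_\sigma(x))\,\mu_\Sigma(d\sigma)\,d\nu_i(x)=\int_\Sigma \tau_\nu(\gamma\circ T_\sigma)\,\mu_\Sigma(d\sigma),
\end{equation*}
where the interchange is justified because each $\gamma^i$ is bounded, each $\nu_i$ is a finite signed measure, $\mu_\Sigma$ is a probability measure, and the joint measurability of $(\sigma,x)\mapsto \gamma^i(T_\sigma(x))$ follows from the assumed measurability of the action $T$ together with the measurability of $\gamma^i$. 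Because $\Gamma$ is closed under $\Sigma$, each $\gamma\circ T_\sigma$ belongs to $\Gamma$, so $\tau_\nu(\gamma\circ T_\sigma)\le c$ for every $\sigma\in\Sigma$. Integrating this pointwise bound against the probability measure $\mu_\Sigma$ yields $\tau_\nu(S_\Sigma[\gamma])\le c$, contradicting the strict separation. Hence $S_\Sigma[\gamma]\in\Gamma$ for all $\gamma\in\Gamma$.

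The main obstacle is verifying the two technical ingredients: the applicability of the Hahn--Banach separation in this non-metrizable locally convex setting (which follows from the standard formulation since $\Gamma$ is convex and $\mathcal{T}$-closed and a singleton is trivially compact convex), and the joint measurability needed for Fubini, which reduces to composing the Borel-measurable action $T:\Sigma\times X\to X$ with the bounded measurable $\gamma^i$. Once these are in place, the argument reduces to the intuitive statement that an average of points of a closed convex set remains in the set, recast in a form that avoids any explicit Bochner-integral machinery.
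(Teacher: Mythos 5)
Your proof is correct and follows essentially the same route as the paper: separation by a continuous linear functional $\tau_\nu$ in the $M(X)$-topology, substitution of $\tilde\gamma=\gamma\circ T_\sigma$ using closure under $\Sigma$, and Fubini to integrate the pointwise inequality against $\mu_\Sigma$ to reach a contradiction. The only differences are cosmetic (you flip the sign of the separating inequality and spell out the joint-measurability justification for Fubini, which the paper leaves implicit).
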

\begin{proof}
Suppose we have $\gamma\in\Gamma$ with $S_\Sigma[\gamma]\not\in\Gamma$. As noted in Definition \ref{def:weak_topology}, $V$ is a locally convex topological vector space with $V^*=\{\tau_\nu:\nu\in M(X)^n\}$, $\tau_\nu(\gamma)\coloneqq \sum_{i=1}^n\int \gamma^id\nu_i$. The separating hyperplane theorem (see Theorem 3.4(b) in \citet{rudin2006functional}) applied to $A=\{S_\Sigma[\gamma]\}$ and $B=\Gamma$ therefore implies the existence of $\nu\in M(X)^n$ such that
\begin{align}
\tau_\nu(\tilde\gamma)>\tau_\nu( S_\Sigma[\gamma])
\end{align}
for all $\tilde\gamma\in\Gamma$.  We have assumed $\Gamma$ is closed under $\Sigma$ and so we can let $\tilde\gamma=\gamma\circ T_{\sigma}$  to get
\begin{align}
\sum_{i=1}^n \int\gamma^i\circ T_{\sigma}d\nu_i-\sum_{i=1}^n\int S_\Sigma[\gamma^i]d\nu_i>0
\end{align}
for all   $\sigma\in\Sigma$. Integrating with respect to $\mu_\Sigma(d\sigma)$ and using Fubini's theorem to change the order of integration we obtain a contradiction.  Therefore $S_\Sigma[\gamma]\in\Gamma$ as claimed.
\end{proof}
We end this section with several examples of function spaces, $V$, that are useful in conjunction with Lemma \ref{lemma:closed_S_Sigma}:
\begin{enumerate}
    \item $V=\mathcal{M}_b(X)^n$, $n\in\mathbb{Z}^+$, in which case $S_\Sigma[V]\subset V$ follows from measurability of the action.
    \item $X$ is a metric space, the action $T:\Sigma\times X\to X$ is continuous, and $V=C_b(X)^n$, $n\in\mathbb{Z}^+$. In this case, $S_\Sigma[V]\subset V$ follows from the dominated convergence theorem.
    \item $X$ is a metric space, the action $T:\Sigma\times X\to X$ is continuous, $T_\sigma$ is $1$-Lipschitz for all $\sigma\in\Sigma$, and $V=\Lip_b^1(X)^n$, $n\in\mathbb{Z}^+$. In this case, $S_\Sigma[V]\subset V$ follows  from the following calculation:
    \begin{align*}
        |S_\Sigma[\gamma](x)-S_\Sigma[\gamma](y)|\leq&\int |\gamma(T_\sigma(x))-\gamma(T_\sigma(y))|\mu_\Sigma(d\sigma)
        \leq \int d(T_\sigma(x),T_\sigma(y))\mu_\Sigma(d\sigma)\\
        \leq &\int d(x,y)\mu_\Sigma(d\sigma)=d(x,y)
    \end{align*}
    for all $\gamma\in\Lip_b^1(X)$.
\end{enumerate}

\section{Primal Formulation of $\Sigma$-invariant $(f,\Gamma)$-Divergences}\label{app:more_f_gamma}
In this section  we will derive a primal formulation of the $(f,\Gamma)$-divergence between $\Sigma$-invariant distributions when $\Gamma$ consists of $\Sigma$-invariant discriminators; this will take the form of an infimal convolution formula. Our result is reminiscent of the results in \citet{Cuturi:stoch_optim_OT:2016} for Sinkhorn divergences. Under appropriate assumptions, we will show that the primal optimization problem has a unique solution and  will prove  the divergence property for $(f,\Gamma)$-divergences on $\mathcal{P}_\Sigma(X)$.

In this section we will assume that $X$ is a complete separable metric space (with metric $d$).  Our analysis will require the following notion of a determining set of functions.
\begin{definition}
 Given $\mathcal{Q}\subset\mathcal{P}(X)$, a subset $\Psi\subset \mathcal{M}_b(X)$ will be called {\bf $\mathcal{Q}$-determining} if for all $Q,P\in\mathcal{Q}$, $E_Q[ \psi]=E_P[\psi]$ for all $\psi\in \Psi$ implies $Q=P$. 
\end{definition}

We will also  need   $f$ and $\Gamma$ to satisfy one of the following admissibility criteria, as introduced in \cite{Birrell:f-Gamma}.
\begin{definition}\label{def:admissible}
For  $a,b$ with $-\infty\leq a<1<b\leq\infty$  we define   $\mathcal{F}_1(a,b)$ to be the set of convex functions $f:(a,b)\to\mathbb{R}$ with $f(1)=0$. For $f\in\mathcal{F}_1(a,b)$, if $b$ is finite we extend the definition of $f$ by $f(b)\coloneqq\lim_{x\nearrow b}f(x)$. Similarly, if $a$ is finite we define  $f(a)\coloneqq\lim_{x\searrow a}f(x)$ (convexity implies  these limits exist in $(-\infty,\infty]$). Finally, extend $f$ to $x\not\in [a,b]$ by $f(x)=\infty$.  The resulting function $f:\mathbb{R}\to(-\infty,\infty]$ is convex and  LSC. 

We will call  $f\in\mathcal{F}_1(a,b)$ {\bf admissible} if  
$\{f^*<\infty\}=\mathbb{R}$ and $\lim_{y\to-\infty}f^*(y)<\infty$ (note that this limit always exists by convexity).  If $f$ is also strictly convex at $1$ then we will call $f$  {\bf strictly admissible}. We will call $\Gamma\subset C_b(X)$ {\bf admissible} if $0\in\Gamma$, $\Gamma$ is convex, and $\Gamma$ is closed  in the  $M(X)$-topology on $C_b(X)$ (see Definition \ref{def:weak_topology}).    $\Gamma$ will be called {\bf strictly admissible} if it also satisfies the following property: There exists a $\mathcal{P}(X)$-determining set $\Psi\subset C_b(X)$ such that for all $\psi\in\Psi$ there exists $c\in\mathbb{R}$, $\epsilon>0$ such that $c\pm\epsilon \psi\in\Gamma$. Finally, an admissible $\Gamma\subset C_{b,\Sigma}^{\text{inv}}(X)$ (the set of $\Sigma$-invariant bounded continuous functions) will be called {\bf $\Sigma$--strictly admissible} if there exists a $\mathcal{P}_\Sigma(X)$-determining set $\Psi\subset C_b(X)$ such that for all $\psi\in\Psi$ there exists $c\in\mathbb{R}$, $\epsilon>0$ such that $c\pm\epsilon \psi\in\Gamma$. 
\end{definition}

One way to construct a $\Sigma$-strictly admissible set is to start with an appropriate strictly admissible set and then restrict  to the subset of $\Sigma$-invariant functions; see Appendix \ref{app:admissibility} for a proof.
\begin{lemma}\label{lemma:Gamma_H_admissible}
Let $\Gamma\subset C_b(X)$.
\begin{enumerate}
    \item If $\Gamma$ is admissible then $\Gamma_\Sigma^{\text{inv}}$ is admissible.
    \item If  $\Gamma$ is strictly admissible and $S_\Sigma[\Gamma]\subset\Gamma$ then $\Gamma_\Sigma^{\text{inv}}$ is $\Sigma$-strictly admissible.
\end{enumerate}
\end{lemma}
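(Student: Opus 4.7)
The plan is to verify the two parts essentially by definition chasing, writing $\Gamma_\Sigma^{\text{inv}}=\Gamma\cap C_{b,\Sigma}^{\text{inv}}(X)$ and exploiting the linearity/invariance properties of $S_\Sigma$ proved in Lemma \ref{lemma:symmetr_ops}. Part 1 is a topological check, and Part 2 is then a short manipulation using $S_\Sigma$ as an averaging operator that preserves the witnessing sets from admissibility.

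For Part 1, I would check the three clauses of admissibility for $\Gamma_\Sigma^{\text{inv}}$: (i) $0\in\Gamma_\Sigma^{\text{inv}}$ because $0\in\Gamma$ is trivially $\Sigma$-invariant; (ii) convexity is inherited from $\Gamma$ since linear combinations of $\Sigma$-invariant functions are $\Sigma$-invariant; (iii) closedness in the $M(X)$-topology requires a genuine argument. For (iii), since $\Gamma$ is already closed by assumption, it suffices to show that $C_{b,\Sigma}^{\text{inv}}(X)$ is closed. For each $\sigma\in\Sigma$ and $x\in X$, the functional $\gamma\mapsto \gamma(T_\sigma(x))-\gamma(x)$ equals $\tau_{\delta_{T_\sigma(x)}-\delta_x}$, and $\delta_{T_\sigma(x)}-\delta_x\in M(X)$, so this functional is continuous in the $M(X)$-topology by Definition \ref{def:weak_topology}. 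Hence each level set $\{\gamma:\gamma(T_\sigma(x))=\gamma(x)\}$ is closed, and $C_{b,\Sigma}^{\text{inv}}(X)$ is the intersection of such sets over all $(\sigma,x)$.

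For Part 2, I first invoke Part 1 to get admissibility of $\Gamma_\Sigma^{\text{inv}}$, then produce the $\mathcal{P}_\Sigma(X)$-determining witnessing set. Let $\Psi\subset C_b(X)$ be the $\mathcal{P}(X)$-determining set witnessing the strict admissibility of $\Gamma$, and define $\widetilde\Psi\coloneqq\{S_\Sigma[\psi]:\psi\in\Psi\}$. For each $\psi\in\Psi$ pick the constants $c\in\mathbb{R},\epsilon>0$ with $c\pm\epsilon\psi\in\Gamma$. Linearity of $S_\Sigma$ together with $S_\Sigma[c]=c$ gives $S_\Sigma[c\pm\epsilon\psi]=c\pm\epsilon S_\Sigma[\psi]$, and the hypothesis $S_\Sigma[\Gamma]\subset\Gamma$ places both of these in $\Gamma$; Lemma \ref{lemma:symmetr_ops}(a) makes them $\Sigma$-invariant, so in fact $c\pm\epsilon S_\Sigma[\psi]\in\Gamma_\Sigma^{\text{inv}}$. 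As a side benefit this shows $S_\Sigma[\psi]\in C_b(X)$, since it can be written as an affine combination of elements of $\Gamma\subset C_b(X)$, so $\widetilde\Psi\subset C_b(X)$.

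It then remains to show $\widetilde\Psi$ is $\mathcal{P}_\Sigma(X)$-determining. Here the key input is Lemma \ref{lemma:symmetr_ops}(c): for any $Q\in\mathcal{P}_\Sigma(X)$ we have $E_Q[S_\Sigma[\psi]]=E_Q[\psi]$ since $S_\Sigma[\psi]$ is a version of the conditional expectation $E_Q[\psi\mid\mathcal{M}_\Sigma]$. Thus if $Q,P\in\mathcal{P}_\Sigma(X)$ satisfy $E_Q[\widetilde\psi]=E_P[\widetilde\psi]$ for all $\widetilde\psi\in\widetilde\Psi$, this collapses to $E_Q[\psi]=E_P[\psi]$ for all $\psi\in\Psi$, and the $\mathcal{P}(X)$-determining property of $\Psi$ forces $Q=P$. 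The only place one must be careful is that the definition of $\Sigma$-strict admissibility allows the determining set to sit in $C_b(X)$ rather than $C_{b,\Sigma}^{\text{inv}}(X)$, so no further restriction is needed. The main obstacle is really pinpointing why $C_{b,\Sigma}^{\text{inv}}(X)$ is closed in the $M(X)$-topology (Part 1); once that reduction to point-evaluation functionals is made, everything else is mechanical.
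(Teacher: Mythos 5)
Your proof is correct and follows essentially the same route as the paper's: verify admissibility of $\Gamma_\Sigma^{\text{inv}}$ via the intersection with closed level sets of point-evaluation functionals, then take $\widetilde\Psi=S_\Sigma[\Psi]$ as the $\mathcal{P}_\Sigma(X)$-determining set. The only cosmetic difference is that you invoke Lemma \ref{lemma:symmetr_ops}(c) (conditional expectation) to conclude $E_Q[S_\Sigma[\psi]]=E_Q[\psi]$ on $\mathcal{P}_\Sigma(X)$, whereas the paper appeals to part (b) (the duality $E_Q[S_\Sigma[\psi]]=E_{S^\Sigma[Q]}[\psi]$ plus $S^\Sigma[Q]=Q$); both are valid and yield the identical identity, and your side remark that $S_\Sigma[\psi]\in C_b(X)$ follows by writing it as an affine combination of the witnesses $c\pm\epsilon S_\Sigma[\psi]\in\Gamma\subset C_b(X)$ is a clean way to fill in a point the paper leaves implicit.
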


Below are several useful examples of strictly admissible $\Gamma$ that satisfy $S_\Sigma[\Gamma]\subset\Gamma$.
\begin{enumerate}
    \item $\Gamma\coloneqq C_b(X)$, if the action is continuous in $x$, i.e., if $T_\sigma:X\to X$ is continuous for all $\sigma\in \Sigma$.
    \item $\Gamma\coloneqq\{g\in C_b(X):|g|\leq C\}$ for any $C>0$ and assuming the action is continuous in $x$,
    \item $\Gamma\coloneqq\Lip_b^L(X)$ for any $L>0$ and assuming the action is $1$-Lipschitz, i.e., $d(T_\sigma( x),T_\sigma(y))\leq d(x,y)$ for all $\sigma\in\Sigma$, $x,y\in X$.
    \item $\Gamma\coloneqq\{g\in\Lip_b^L(X):|g|\leq C\}$ for any $C,L>0$ and assuming the action is $1$-Lipschitz.
    \item The unit ball in an appropriate RKHS $V$, $\Gamma\coloneqq\{g\in V:\|g\|_V\leq 1\}$, assuming   the kernel   is $\Sigma$-invariant; see Lemma  \ref{lemma:RKHS_admissibility}  for details.
\end{enumerate}

The following result extends the infimal convolution formula and divergence properties from \cite{Birrell:f-Gamma} to the case where the models and test-function space are $\Sigma$-invariant. 
\begin{theorem}
 Suppose   $f$ and $\Gamma$ are admissible and $\Gamma\subset C_{b,\Sigma}^{\text{inv}}(X)$. For $Q,P\in\mathcal{P}_\Sigma(X)$ we have the following properties:
\begin{enumerate}
\item Infimal Convolution Formula on $\mathcal{P}_\Sigma(X)$: 
\begin{align}\label{eq:inf_conv}
D_f^{\Gamma}(Q\|P)=\inf_{\eta\in \mathcal{P}_\Sigma(X)}\{D_f(\eta\|P)+W^{\Gamma}(Q,\eta)\}\,.
\end{align}
In particular, $D_f^\Gamma(Q\|P)\leq \min\{D_f(Q\|P), W^\Gamma(Q,P)\}$.
\item Existence of an Optimizer: If $D_f^{\Gamma}(Q\|P)<\infty$ then there exists $\eta_*\in\mathcal{P}_\Sigma(X)$ such that
\begin{align}\label{eq:inf_conv_existence}
D_f^{\Gamma}(Q\|P)=D_f(\eta_*\|P)+W^{\Gamma}(Q,\eta_*)\,.
\end{align}
If $f$ is strictly convex then there is a unique such $\eta_*$.
\item $\mathcal{P}_\Sigma(X)$-Divergence Property for $W^\Gamma$: $W^{\Gamma}(Q,P)\geq 0$ and $W^{\Gamma}(Q,P)=0$ if $Q=P$.  If $\Gamma$ is $\Sigma$-strictly admissible then $W^{\Gamma}(Q,P)=0$  implies $Q=P$.
\item $\mathcal{P}_\Sigma(X)$-Divergence Property for $D_f^\Gamma$: $D_f^\Gamma(Q\|P)\geq 0$ and $D_f^\Gamma(Q\|P)=0$ if $Q=P$. If $f$ is strictly admissible  and $\Gamma$ is $\Sigma$-strictly admissible then $D_f^{\Gamma}(Q\|P)= 0$ implies  $Q=P$.
\end{enumerate}
\end{theorem}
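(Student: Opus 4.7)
The plan is to bootstrap the analogous non-symmetric statements from \cite{Birrell:f-Gamma} by exhibiting the dual symmetrization $S^\Sigma$ (Lemma~\ref{lemma:symmetr_ops}) as the right device to push any candidate $\eta\in\mathcal{P}(X)$ into $\mathcal{P}_\Sigma(X)$ without increasing the infimal-convolution objective. Once parts 1 and 2 are established this way, the divergence properties in parts 3 and 4 reduce to standard Fenchel--Young manipulations combined with the $\mathcal{P}_\Sigma(X)$-determining property built into the notion of $\Sigma$-strict admissibility.

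For part 1, I would start from the non-symmetric infimal-convolution identity $D_f^\Gamma(Q\|P)=\inf_{\eta\in\mathcal{P}(X)}\{D_f(\eta\|P)+W^\Gamma(Q,\eta)\}$ proved in \cite{Birrell:f-Gamma}, and show the infimum is already attained on $\mathcal{P}_\Sigma(X)$. Given any $\eta\in\mathcal{P}(X)$, replace it by $S^\Sigma[\eta]\in\mathcal{P}_\Sigma(X)$. Two observations suffice: using $P=S^\Sigma[P]$ and the data-processing inequality for $f$-divergences under the symmetrization kernel (Theorem~\ref{thm:inv_disc_gen_K}), one gets $D_f(S^\Sigma[\eta]\|P)=D_f(S^\Sigma[\eta]\|S^\Sigma[P])\le D_f(\eta\|P)$; and since $\Gamma\subset C_{b,\Sigma}^{\text{inv}}(X)$, every $\gamma\in\Gamma$ satisfies $S_\Sigma[\gamma]=\gamma$, hence $E_{S^\Sigma[\eta]}[\gamma]=E_\eta[S_\Sigma[\gamma]]=E_\eta[\gamma]$ and similarly for $Q$, so $W^\Gamma(Q,S^\Sigma[\eta])=W^\Gamma(Q,\eta)$. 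The reverse inequality $\inf_{\mathcal{P}(X)}\le\inf_{\mathcal{P}_\Sigma(X)}$ is trivial, giving \eqref{eq:inf_conv}. Plugging $\eta=Q$ yields the $D_f(Q\|P)$ bound (using $W^\Gamma(Q,Q)=0$) and $\eta=P$ yields the $W^\Gamma(Q,P)$ bound (using $D_f(P\|P)=0$).

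For part 2, apply the same two bounds to an optimizer $\eta_*\in\mathcal{P}(X)$ produced by \cite{Birrell:f-Gamma}: its symmetrization $S^\Sigma[\eta_*]$ is again an optimizer and lies in $\mathcal{P}_\Sigma(X)$. If $f$ is strictly convex, the uniqueness statement in \cite{Birrell:f-Gamma} forces $\eta_*=S^\Sigma[\eta_*]\in\mathcal{P}_\Sigma(X)$, and uniqueness within $\mathcal{P}_\Sigma(X)$ follows a fortiori. For parts 3 and 4, non-negativity of $W^\Gamma$ is immediate from $0\in\Gamma$; non-negativity of $D_f^\Gamma$ follows by taking $\gamma\equiv 0$ in \eqref{eq:Df_Gamma_def1} and computing $-\Lambda_f^P[0]=\sup_\mu\{\mu-f^*(\mu)\}=f^{**}(1)=f(1)=0$ via LSC convexity of $f$. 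The equalities $W^\Gamma(Q,Q)=0$ and $D_f^\Gamma(Q\|Q)=0$ come from Fenchel--Young: $\gamma-\nu\le f^*(\gamma-\nu)+f(1)=f^*(\gamma-\nu)$. For the converse in part 3, if $W^\Gamma(Q,P)=0$ and $\Gamma$ is $\Sigma$-strictly admissible, then the property $c\pm\epsilon\psi\in\Gamma$ for each $\psi$ in the $\mathcal{P}_\Sigma(X)$-determining family $\Psi$ gives $E_Q[\psi]=E_P[\psi]$ for all $\psi\in\Psi$, hence $Q=P$ since $Q,P\in\mathcal{P}_\Sigma(X)$. For part 4, use the optimizer $\eta_*\in\mathcal{P}_\Sigma(X)$ from part 2: since both non-negative summands in $D_f(\eta_*\|P)+W^\Gamma(Q,\eta_*)=0$ must vanish, strict admissibility of $f$ yields $\eta_*=P$ and part 3 then delivers $Q=\eta_*=P$.

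The main technical obstacle is verifying the data-processing inequality $D_f(S^\Sigma[\eta]\|P)\le D_f(\eta\|P)$ in the generality of Definition~\ref{def:admissible}, where $f$ may be extended-real-valued on a bounded domain. The cleanest route is to identify $S^\Sigma$ with a conditional expectation via Lemma~\ref{lemma:symmetr_ops}(c) and apply Jensen directly to the extended $f$, rather than invoke an abstract kernel DPI. Everything else is a direct transfer from the non-symmetric theorem, modulo the swap $\mathcal{P}(X)\rightsquigarrow\mathcal{P}_\Sigma(X)$ and the specialized notion of $\mathcal{P}_\Sigma(X)$-determining test family encoded in $\Sigma$-strict admissibility.
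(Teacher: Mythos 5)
Your proposal is correct and follows essentially the same strategy as the paper: both pull the infimal-convolution formula and the existence/uniqueness of the optimizer from Theorem 2.15 of \cite{Birrell:f-Gamma}, then push a generic $\eta\in\mathcal{P}(X)$ into $\mathcal{P}_\Sigma(X)$ via $S^\Sigma$ without increasing the objective, and finish parts 3--4 by unwinding admissibility and the $\mathcal{P}_\Sigma(X)$-determining family. The one point of divergence is cosmetic: where you invoke the $f$-divergence data-processing inequality under the symmetrization kernel (and worry about Jensen on the extended-real $f$), the paper instead derives $D_f(S^\Sigma[\eta]\|P)\le D_f(\eta\|P)$ by combining the trivial monotonicity $D_f\geq D_f^{S_\Sigma[\mathcal{M}_b(X)]}$ with Theorem~\ref{thm:Gamma_invariant}, which routes all the Jensen-type convexity through $f^*$ (finite by admissibility) rather than $f$ itself — so the technical obstacle you flag simply does not arise on that route; otherwise the arguments coincide line by line.
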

\begin{proof}
\begin{enumerate}
    \item Part 1 of Theorem 2.15 from \cite{Birrell:f-Gamma} implies an infimal convolution formula on $\mathcal{P}(X)$, hence
\begin{align}
D_f^\Gamma(Q\|P)=\inf_{\eta\in \mathcal{P}(X)}\{D_f(\eta\|P)+W^\Gamma(Q,\eta)\}\leq \inf_{\eta\in \mathcal{P}_\Sigma(X)}\{D_f(\eta\|P)+W^\Gamma(Q,\eta)\}\,.
\end{align}
To prove the reverse inequality, we use the bound $D_f\geq D_f^{S_\Sigma[\mathcal{M}_b(X)]}$, the equality $S_\Sigma[\Gamma]=\Gamma$, and then Theorem \ref{thm:Gamma_invariant} to compute
\begin{align}
D_f^\Gamma(Q\|P)\geq&\inf_{\eta\in \mathcal{P}(X)}\{D_f^{S_\Sigma[\mathcal{M}_b(X)]}(\eta\|P)+W^{S_\Sigma[\Gamma]}(Q,\eta)\}\\
=&\inf_{\eta\in \mathcal{P}(X)}\{D_f(S^\Sigma[\eta]\|P)+W^\Gamma(Q,S^\Sigma[\eta])\}\notag\\
= & \inf_{\eta\in\mathcal{P}_\Sigma(X)}\{D_f(\eta\|P)+W^\Gamma(Q,\eta)\}\,.\notag
\end{align}
This proves the infimal convolution formula on $\mathcal{P}_\Sigma(X)$.
\item Now suppose $D_f^\Gamma(Q\|P)<\infty$.  Part 2 of Theorem 2.15 from \cite{Birrell:f-Gamma} implies there exists $\eta_*\in \mathcal{P}(X)$ such that
\begin{align}
    D_f^\Gamma(Q\|P)=D_f(\eta_*\|P)+W^\Gamma(Q,\eta_*)\,.
\end{align}
We need to show that $\eta_*$ can be taken to be $\Sigma$-invariant.  To do this, first use the infimal convolution formula to bound
\begin{align}
  D_f^\Gamma(Q\|P)\leq  D_f(S^\Sigma[\eta_*]\|P)+W^\Gamma(Q,S^\Sigma[\eta_*])\,.
\end{align}

The $\Sigma$-invariance of $Q$ and $P$ together with Theorem \ref{thm:Gamma_invariant} imply
\begin{align}
W^\Gamma(Q,S^\Sigma[\eta_*])=W^\Gamma(Q,\eta_*)\,.
\end{align}
and 
\begin{align}
  D_f(S^\Sigma[\eta_*]\|P)=  D_f^{\mathcal{M}_{b,\Sigma}^{\text{inv}}(X)}(\eta_*\|P)\leq D_f(\eta_*\|P)\,.
\end{align}
Therefore
\begin{align}
  D_f^\Gamma(Q\|P)\leq  D_f(S^\Sigma[\eta_*]\|P)+W^\Gamma(Q,S^\Sigma[\eta_*])\leq  D_f(\eta_*\|P)+W^\Gamma(Q,\eta_*)=D_f^\Gamma(Q\|P)\,.
\end{align}
Hence 
\begin{align}
    D_f^\Gamma(Q\|P)=D_f(S^\Sigma[\eta_*]\|P)+W^\Gamma(Q,S^\Sigma[\eta_*])
\end{align}
with $S^\Sigma[\eta_*]\in\mathcal{P}_\Sigma(X)$ as claimed.

If $f$ is strictly convex then uniqueness is a corollary of the corresponding uniqueness result from Part 2 of   Theorem 2.15 in \cite{Birrell:f-Gamma}.
\item Admissibility of $\Gamma$ implies $0\in\Gamma$, hence $W^\Gamma(Q\|P)\geq E_Q[0]-E_P[0]= 0$.  If $Q=P$ then the definition clearly implies $W^\Gamma(Q,P)=0$. If $\Gamma$ is $\Sigma$-strictly admissible and $W^{\Gamma}(Q,P)=0$ then $0\geq E_Q[g]-E_P[g]$ for all $g\in\Gamma$.  Letting $g=c\pm\epsilon\psi$ as in the definition of $\Sigma$-strict admissiblity we see that $0\geq \pm ( E_Q[\psi]- E_P[\psi])$.  Hence $E_Q[\psi]=E_P[\psi]$ for all $\psi\in\Psi$. $\Psi$ is  a $\mathcal{P}_\Sigma(X)$-determining set and $Q,P\in\mathcal{P}_\Sigma(X)$, hence we can conclude that $Q=P$.
\item We know that $D_f\geq 0$ and $W^\Gamma\geq 0$, therefore the infimal convolution formula implies $D_f^\Gamma\geq 0$. If $Q=P$   we can bound
\begin{align}
    0\leq D_f^\Gamma(Q\|P)\leq D_f(Q\|P)=0\,,
\end{align}
hence $D_f^\Gamma(Q\|P)=0$.  Finally, suppose $f$ is strictly admissible, $\Gamma$ is $\Sigma$-strictly admissible, and $D_f^\Gamma(Q\|P)=0$.  Then Part 2 of this theorem implies
\begin{align}
    0=D_f^\Gamma(Q\|P)=D_f(\eta_*\|P)+W^\Gamma(Q,\eta_*)
\end{align}
for some $\eta_*\in\mathcal{P}_\Sigma(X)$.
Both terms are non-negative, hence \begin{align}
    D_f(\eta_*\|P)=W^\Gamma(Q,\eta_*)=0\,.
\end{align} 
The $\mathcal{P}_\Sigma(X)$-divergence property for $W^\Gamma$ then implies $Q=\eta_*$.  $f$ being strictly admissible implies that $D_f$ has the divergence property, hence $\eta_*=P$.  Therefore $Q=P$ as claimed.
\end{enumerate}
\end{proof}

\section{Experiments}\label{sec:experiments}
We now present experiments on both synthetic and real-world data sets with embedded group symmetry to empirically verify our theory for structure-preserving GANs from Section \ref{sec:theory}.

{  \subsection{Algorithmic Feasibility}

Theorems \ref{thm:invariant_discriminator1} and \ref{thm:generator}  imply  that one can build invariant GANs by using $\Sigma$-
invariant discriminators, $\Sigma$-equivariant generators, and a $\Sigma$-invariant noise source. Equivariant networks for arbitrary group symmetry (and gauge invariance) have been studied in recent works such as \cite{pmlr-v48-cohenc16}. Invariant noise sources can be constructed as shown in  Theorem \ref{thm:inv_noise}. We note that the symmetrization operators $S^\Sigma$, $S_\Sigma$ are only
used in the proofs of theoretical properties of the proposed
GANs and are not needed in practical implementations. The necessary invariance/equivariance is built into
the discriminator/generator via the structure of the layers; see Appendix \ref{app:architecture}.
}

\subsection{Data sets and common experimental setups}


\textit{Toy example.} Following \cite{Birrell:f-Gamma}, this synthetic data source is a mixture of four 2D t-distributions with $0.5$ degrees of freedom, embedded in a plane in $\R^{12}$. The four centers of the t-distributions are located (in the supporting plane) at coordinates $(\pm 10, \pm 10)$, exhibiting $C_4$-symmetry [cf.~Figure~\ref{fig:toy_200_alpha_2_2d}].

\textit{RotMNIST} is built by randomly rotating the original 10-class $28\times 28$ MNIST digits \cite{lecun1998gradient}, resulting in an $SO(2)$-invariant distribution. We use different portions of the 60,000 training images for experiments in \cref{sec:rotmnist}.

\textit{ANHIR} consists of pathology slides stained with 5 distinct dyes for the study of cellular compositions \cite{borovec2020anhir}. Following \cite{EquivariantGAN}, we extract from the original images 28,407 foreground patches of size $64\times 64$. The staining dye is used as the class label for conditioned image synthesis. As the images have no preferred orientation/reflection, the distribution is $O(2)$-invariant.

\textit{LYSTO} contains 20,000 patches extracted from whole-slide images of breast, colon and prostate cancer stained with immunohistochemical markers \cite{francesco_ciompi_2019_3513571}. The images are classified into 3 categories based on the organ source, and we downsize the images to $64\times 64$. Similar to ANHIR, this data set is also $O(2)$-invariant.

\textbf{Common experimental setups.} To verify our theory in \cref{sec:theory}, and to quantify and disentangle the contributions of the structure-preserving discriminator (\texttt{D}) and generator (\texttt{G}) (Theorem~\ref{thm:invariant_discriminator1} and Theorem~\ref{thm:Gamma_invariant}), we  replace the baseline \texttt{G}  and/or  \texttt{D} by their group-equivariant/invariant counterparts, \texttt{Eqv G} and \texttt{Inv D}, while adjusting the number of filters according to the group size to ensure a similar number of trainable parameters. We also consider the incomplete attempt by  \citet{EquivariantGAN} at building equivariant generators (\texttt{(I)Eqv G}), wherein the first fully-connected layer destroys the symmetry in the noise source, resulting in non-equivariant \texttt{G} even if subsequent layers are all equivariant [cf.~Remark~\ref{rmk:equivariant_generator_2}].
We use the Fr\'echet Inception Distance (FID) \cite{heusel2017gans} to evaluate the quality and diversity of the GAN generated samples after embedding them in the feature space of a pre-trained Inception-v3 network \cite{szegedy2016rethinking}. Due to the simplicity of RotMNIST, we replace the inception-featurization by the encoding feature space of an autoencoder trained on the \textit{rotated} digits. We note that, compared to classifiers, autoencoders are guaranteed to produce \textit{different} features for rotated versions of the same digit; they are thus more suitable to measure sample diversity in rotation.

\begin{figure}[h]
    \centering
    \begin{subfigure}[b]{.64\columnwidth}
        \includegraphics[width=\textwidth]{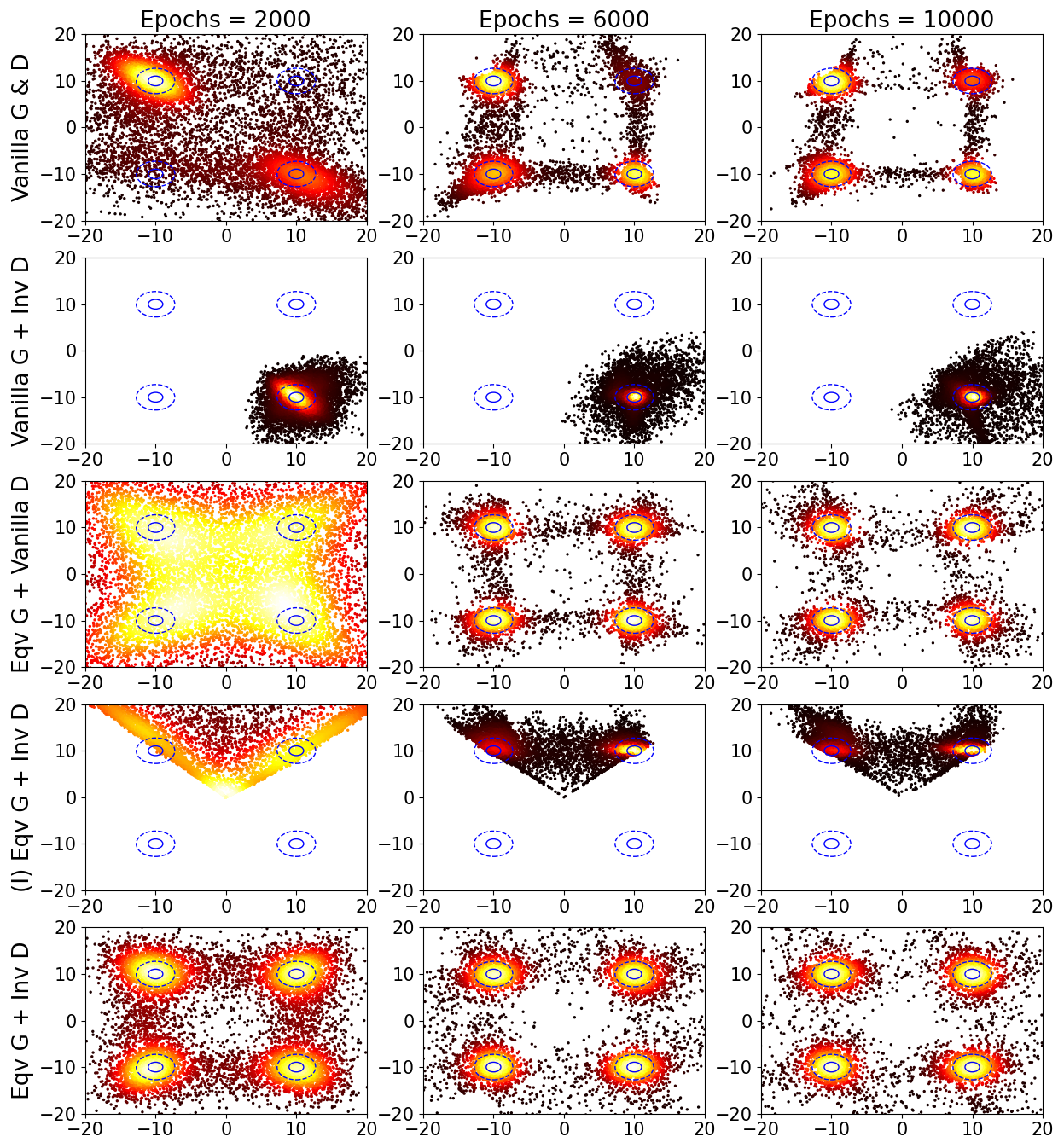}
        \caption{2D projection of the generated samples.}
        \label{fig:toy_200_alpha_2_2d}
    \end{subfigure}
    \begin{subfigure}[b]{.35\columnwidth}
        \centering
        \includegraphics[width=\textwidth]{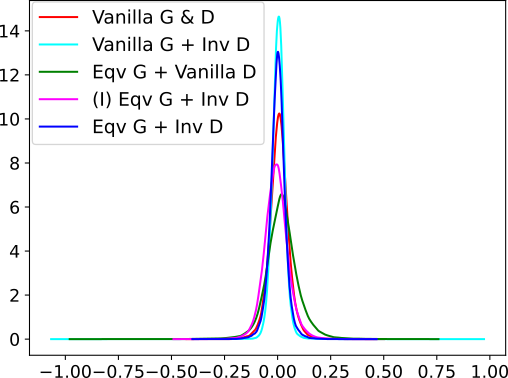}
        \caption{$D_2^L$-GANs.}
        \label{fig:alpha_2_complement}
        \vspace{5em}
        \includegraphics[width=\textwidth]{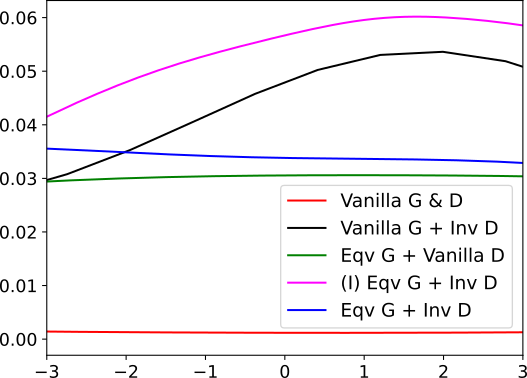}
        \caption{WGANs}
        \label{fig:wgan_complement}
    \end{subfigure}
    \caption{{  This figure illustrates how our method can simultaneously
handle heavy tails and low-dimensional support.} Panel (a): 2D projection of the $D_2^L$-GAN generated samples onto the support plane of the source $Q$ [cf. \cref{sec:toy}]. Each column shows the result after a given number of training epochs. The rows correspond to different settings for the generators (\texttt{G}) and discriminators (\texttt{D}); in particular, the 2nd and 4th rows use invariant \texttt{D} accompanied by, respectively, a baseline \texttt{G}  and an incorrectly constructed equivariant \texttt{G}, leading to mode collapse [cf.~Theorem~\ref{thm:Gamma_invariant}]. The blue ovals mark the 25\% and 50\% probability regions of the data source $Q$, while the heat-map shows the generator samples.
    Panel (b) and (c): Generator distribution, projected onto components orthogonal to the support plane of $Q$. Values concentrated around zero indicate convergence to the sub-manifold. Models are trained on \textbf{200} training points.}
    \label{fig:toy_200_alpha_2}
\end{figure}

\subsection{Toy Example}
\label{sec:toy}

We test the performance of different GANs (and their equivariant versions) based on 3 types of divergences, namely the Wasserstein-GAN (WGAN) based on the $\Gamma$-IPM \req{eq:general_divergence_IPM}, the $D_{f_\alpha}$-GAN based on the classifical $f$-divergence \req{eq:f_divergence} and \eqref{eq:KL_alpha}, and the $D_\alpha^L$-GAN based on the $(f,\Gamma)$-divergence \req{eq:lip_alpha_divergence}, in learning the $C_4$-invariant mixture $Q$. We use fully-connected networks with 3 hidden layers for the baseline \texttt{G} and \texttt{D} (\texttt{Vanilla G\&D}). The generator pushes forward a 10D Gaussian noise source, which is itself $C_4$-invariant after prescribing a proper group action, e.g., $\pi/2$-rotations in the first two dimensions. Equivariant \texttt{G} (\texttt{Eqv G}) and invariant \texttt{D} (\texttt{Inv D}) are built by replacing fully-connected layers with $C_4$-convolutional layers based on Theorem~\ref{thm:generator} due to the $C_4$-invariance of the noise source. We also mimic the incomplete attempt by  \citet{EquivariantGAN} in building equivariant generators (\texttt{(I)Eqv G}) by leaving the first fully-connected layer unchanged and replacing only the subsequent layers by $C_4$-convolutions.

Figure~\ref{fig:toy_200_alpha_2_2d} displays the 2D projection of the generated samples learned by the $D_{\alpha=2}^L$-GAN (and its equivariant versions) on 200 training samples. It is clear that the baseline model without structural prior (\texttt{Vanilla G\&D}) has difficulty in learning $Q$ in such small data regime. Using an $\texttt{Inv D}$ alone without an \texttt{Eqv G} (\texttt{Vanilla G} + \texttt{Inv D}) or with an incorrectly imposed \texttt{Eqv G} (\texttt{(I)Eqv G} + \texttt{Inv D}) leads easily to  ``mode collapse", validating Theorem~\ref{thm:Gamma_invariant}. On the other hand, $D_{\alpha}^L$-GAN with an \texttt{Eqv G} (even without an \texttt{Inv D}) is able to learn all 4 modes of $Q$. We omit the results of (equivariant) $D_{f_{\alpha}}$-GANs and WGANs from Figure~\ref{fig:toy_200_alpha_2_2d}, as both fail to learn the data source $Q$; this is unsurprising due to the lack of absolute continuity between $Q$ and $P_g$ (the former is supported on a plane, while the latter is the entire 12D space) and the fact that $Q$ is heavy-tailed (as the mean does not exist.) This demonstrates the importance of our framework's broad applicability to a variety of variational divergences, as an improper choice of the divergence---even with structural prior---can fail to learn the source distribution.

Figure~\ref{fig:toy_200_alpha_2}~(b) and (c) show the generated distribution  projected onto components orthogonal to the support plane of $Q$. Values concentrated around zero indicate successful learning of the low-dimensional source distribution, i.e., generating high-fidelity samples. Figure~\ref{fig:alpha_2_complement} indicates that an \texttt{Inv D} in the $D_{\alpha}^L$-GAN helps produce a distribution with sharper support, whereas \texttt{Eqv G} alone without \texttt{Inv D} tends to generate relatively low-quality samples away from the supporting plane. In contrast, Figure~\ref{fig:wgan_complement} indicates that WGAN (even with symmetry prior) fails to learn the support plane due to $Q$ being heavy-tailed. Results with different numbers of training samples and $\alpha$'s are shown in \cref{app:additional_results}, and the conclusions are similar.

\subsection{RotMNIST}
\label{sec:rotmnist}
We adopt a similar setup to \citet{EquivariantGAN}. Specifically, in the baseline \texttt{G}, a fully-connected layer first projects and reshapes the concatenated Gaussian noise and class embedding into a 2D feature map (see Figure~\ref{fig:symmetrization}); spectrally-normalized convolutions \cite{miyato2018spectral},  interspersed with pointwise-nonlinearities, class-conditional batch-normalizations, and upsamplings, are subsequently used to increase the spatial dimension. We note again that replacing 2D convolutions with $C_n$-convolutions does not simply lead to \texttt{Eqv G}, as the distribution after the ``project and reshape" layer is no longer $C_n$-invariant. This can be fixed by adding a $C_n$-symmetrization layer after the first linear embedding; see Remark~\ref{rmk:equivariant_generator_2}. We consider GANs with the relative average loss (RA-GANs) \cite{jolicoeur-martineau2018} in addition to the $D_\alpha^L$-GANs for this experiment. All configurations are trained with a batch size of 64 for 20,000 generator iterations. Implementation details are available in \cref{app:implementation_details}.

  \begin{table}[H]
    \caption{The median of the FIDs (lower is better), calculated every 1,000 generator update for 20,000 iterations, averaged over three independent trials. The number of the training samples used for experiments varies from 1\% (600) to 10\% (6,000) of the RotMNIST training set. See Appendix \ref{app:additional_results} for further results. } \label{tab:fid_rotmnist}  
  \centering
  \begin{tabular}{m{1em}ccccccc}
    \toprule
    Loss & Architecture &   1\%& 5\%& 10\% & 50\% & 100\% \\
    \midrule
    \rotatebox{90}{RA-GAN }& \makecell{\texttt{CNN G\&D} \\
    \texttt{Eqv G} + \texttt{CNN D}, $\Sigma = C_4$\\    
    \texttt{CNN G} + \texttt{Inv D}, $\Sigma = C_4$\\
    \texttt{(I)Eqv G} + \texttt{Inv D}, $\Sigma = C_4$\\
    \texttt{Eqv G} + \texttt{Inv D}, $\Sigma = C_4$ \\
    \texttt{Eqv G} + \texttt{Inv D}, $\Sigma = C_8$}&
\makecell{295\\389\\223\\173\\\textbf{98}\\123}&\makecell{357\\333\\181\\141\\78\\\textbf{52}}&
    \makecell{348\\355\\188\\132\\89\\\textbf{51}}&
    \makecell{403\\380\\177\\135\\84\\\textbf{52}}&
    \makecell{392\\393\\176\\130\\82\\\textbf{57}}
    \\
    \midrule
    \rotatebox{90}{ $D_{\alpha=2}^\Gamma$-GAN }& \makecell{\texttt{CNN G\&D} \\
    \texttt{Eqv G} + \texttt{CNN D}, $\Sigma = C_4$\\    
    \texttt{CNN G} + \texttt{Inv D}, $\Sigma = C_4$\\
    \texttt{(I)Eqv G} + \texttt{Inv D}, $\Sigma = C_4$\\
    \texttt{Eqv G} + \texttt{Inv D}, $\Sigma = C_4$ \\
    \texttt{Eqv G} + \texttt{Inv D}, $\Sigma = C_8$}&\makecell{280\\253\\330\\273\\149\\\textbf{122}}&\makecell{261\\271\\208\\147\\99\\\textbf{55}}&\makecell{283\\251\\192\\133\\88\\\textbf{57}}&\makecell{297\\274\\183\\124\\80\\\textbf{53}}&\makecell{293\\275\\173\\126\\81\\\textbf{51}}\\
    \bottomrule
  \end{tabular}
\end{table}

Table~\ref{tab:fid_rotmnist}  shows the median of the FIDs, calculated every 1,000 generator update, averaged over three independent trials.  It is clear that our proposed models (\texttt{Eqv G} + \texttt{Inv D}) consistently achieve significantly improved results compared to the baseline \texttt{CNN G\&D} and the prior approach (\texttt{(I)Eqv G} + \texttt{Inv D}); the out-performance is even more pronounced when increasing the group size from $\Sigma=C_4$ to $C_8$.  { We note that, similar to RotMNIST, one can also use a custom autoencoder featurization for FID evaluation, and the superiority of our model (\texttt{Eqv G} + \texttt{Inv D}) is even more prominent under such metric: for instance, on ANHIR, the median FIDs calculated through autoencoder featurization  of the
three comparing models are, respectively, 1221 (\texttt{CNN G\&D}), 936 ((\texttt{(I)Eqv G} + \texttt{Inv D})), and 329 (\texttt{Eqv G} + \texttt{Inv D})}. See Figure~\ref{fig:rotmnist_digits_2}  for randomly generated samples by RA-GANs trained with 1\% training data. More results are available in \cref{app:additional_results}.

\begin{figure}[h]
    \centering
    \begin{subfigure}[b]{.3\columnwidth}
    \includegraphics[width=1\columnwidth]{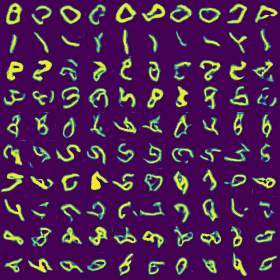}  
    \caption{\texttt{CNN G\&D}}
        
    \end{subfigure}
    ~
    \begin{subfigure}[b]{.3\columnwidth}
    \includegraphics[width=1\columnwidth]{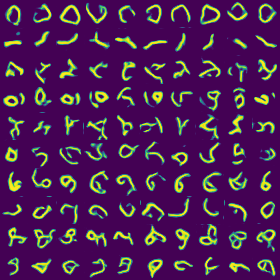}  
    \caption{\texttt{Eqv G} + \texttt{CNN D}, $\Sigma=C_4$}
        
    \end{subfigure}    
    ~
    \begin{subfigure}[b]{.3\columnwidth}
    \includegraphics[width=1\columnwidth]{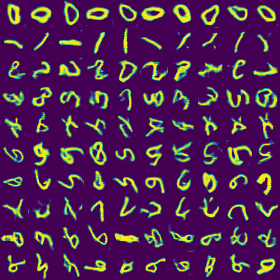}  
    \caption{\texttt{CNN G} + \texttt{Inv D}, $\Sigma=C_4$}
        
    \end{subfigure}   
    ~
    \begin{subfigure}[b]{.3\columnwidth}
    \includegraphics[width=1\columnwidth]{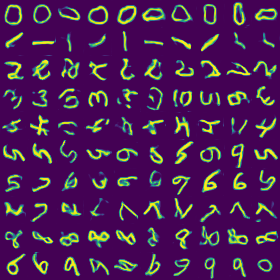}  
    \caption{\texttt{(I)Eqv G} + \texttt{Inv D}, $\Sigma=C_4$}
        
    \end{subfigure}    
    ~
    \begin{subfigure}[b]{.3\columnwidth}
    \includegraphics[width=1\columnwidth]{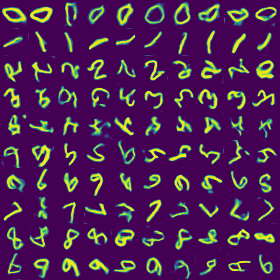}  
    \caption{\texttt{Eqv G} + \texttt{Inv D}, $\Sigma=C_4$}
        
    \end{subfigure}    
    ~
    \begin{subfigure}[b]{.3\columnwidth}
    \includegraphics[width=1\columnwidth]{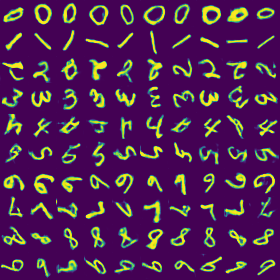}  
    \caption{\texttt{Eqv G} + \texttt{Inv D}, $\Sigma=C_8$}
        
    \end{subfigure}        
    \caption{Randomly generated digits by the RA-GANs trained on RotMNIST after 20K generator iterations with 1\% (600) training data. More images are available in \cref{app:additional_results}.}
\label{fig:rotmnist_digits_2}
\end{figure}

\subsection{ANHIR and LYSTO}
\label{sec:medical_data}

Compared to RotMNIST, ResNet and its $D_4$-equivariant counterpart are used instead of CNNs for \texttt{G}  and \texttt{D}. All models are trained for 40,000 generator iterations with a batch size of 32. Implementation details are available in \cref{app:implementation_details}.

Table~\ref{tab:fid_medical} displays the minimum and median of the FIDs, calculated every 2,000 generator update, averaged over three independent trials. The plus sign ``+" after the data set, e.g., ANHIR+, denotes the presence of data augmentation (random $90^\circ$ rotations and reflection) during training. It is clear that augmentation usually (but not always) has a positive effect on the results evaluated by the FID; however, our proposed model even without data augmentation still consistently and significantly outperforms the baseline model (\texttt{CNN G\&D}) and the prior approach (\texttt{(I)Eqv G} + \texttt{Inv D}) \cite{EquivariantGAN} with augmentation. Figure~\ref{fig:anhir_images_large_main} presents a random collection of real and generated LYSTO images, visually verifying the improved sample fidelity of our model over the baseline. More results are available in \cref{app:additional_results}.

\begin{figure}[h]
    \centering
    \includegraphics[width=.9\textwidth]{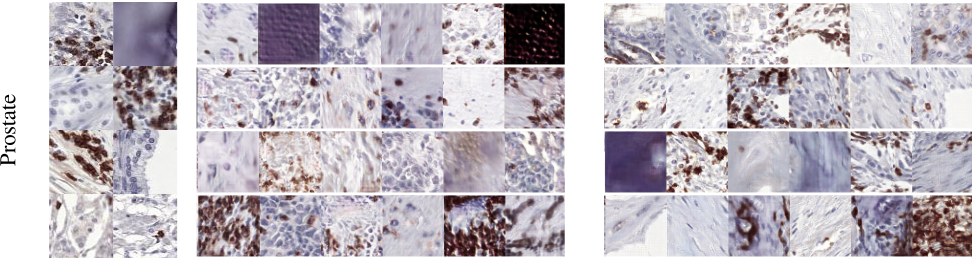}
    \caption{Real and GAN generated LYSTO images of breast, colon, and prostate cancer. Left panel: real images. Middle and right panels: randomly selected $D_2^L$-GANs' generated samples after 40,000 generator iterations. Middle panel: \texttt{CNN G\&D}. Right panel: \texttt{Eqv G} + \texttt{Inv D}. More images are available in \cref{app:additional_results}.}
    \label{fig:anhir_images_large_main}
\end{figure}

\begin{table}[h]
  \caption{The (min, median) of the FIDs over the course of training, averaged over three independent trials on the medical images, where the plus sign ``+" after the data set, e.g., ANHIR+, denotes the presence of data augmentation during training.} \label{tab:fid_medical}  
  \centering
  \begin{tabular}{m{1em}ccc}
    \toprule
    Loss & Architecture   & ANHIR & ANHIR+\\
    \midrule
    $D_{2}^L$& \makecell{\texttt{CNN G\&D}\\
    \texttt{(I)Eqv G} + \texttt{Inv D} \\    
    \texttt{Eqv G} + \texttt{Inv D}}
    &\makecell{(313, 485)\\
    (120, 176) \\    
    \textbf{(97, 157)}}
    &\makecell{(347, 539)\\
    (119, 177) \\    
    \textbf{(90, 128)}}
    \\
    \bottomrule
    \toprule
    Loss & Architecture   & LYSTO & LYSTO+\\
    \midrule
    $D_{2}^L$& \makecell{\texttt{CNN G\&D}\\
    \texttt{(I)Eqv G} + \texttt{Inv D} \\    
    \texttt{Eqv G} + \texttt{Inv D}}
    & \makecell{(289, 410)\\
    (253, 343) \\    
    \textbf{(205, 259)}}
    & \makecell{(265, 376)\\
    (244, 329)\\    
    \textbf{(192, 259)}}
    \\
    \bottomrule    
  \end{tabular}
\end{table}

\subsection{Discussion of empirical findings}
Consistently across all experiments, our proposed structure-preserving GAN outperforms prior approaches in generating high-fidelity and diverse samples by a significant margin, in some cases almost an order of magnitude measured in FID. The results also show that, compared to data-augmentation (a common strategy for learning from limited data), building theoretically-guided structural probabilistic priors directly into the two GAN players  achieves substantially improved performance and data efficiency in adversarial learning.

\section*{Acknowledgements}
The research of J.B., M.K. and L.R.-B. was partially supported by  the Air Force Office of Scientific Research (AFOSR) under the grant FA9550-21-1-0354.
The research of M. K. and L.R.-B. was partially supported by the National Science Foundation (NSF) under the grants DMS-2008970 and TRIPODS CISE-1934846. The research of W.Z. was  partially supported by NSF under DMS-2052525 and DMS-2140982. We thank Neel Dey for sharing the pre-processed ANHIR data set. This work was performed in part using high performance computing equipment obtained under a grant from the Collaborative R\&D Fund managed by the Massachusetts Technology Collaborative.



\bibliography{Structure-preserving_GANs.bbl}

\newpage
\appendix
\onecolumn

\appendix

\section{More details on variational representations of divergences and probability metrics}
\label{app:variational_divergence}

We provide, in this appendix, more details on variational representations of the divergences and probability metrics discussed in \cref{sec:variational_divergence_background}. Recall the notation introduced in the main paper: let $(X,\mathcal{M})$ be a measurable space, $\mathcal{M}(X)$ be the space of measurable functions on $X$, and $\mathcal{M}_b(X)$  be the subspace of bounded measurable functions. We denote $\mathcal{P}(X)$ as the set of probability measures on $X$.
Given an objective functional $H:\mathcal{M}^n(X) \times \mathcal{P}(X)\times\mathcal{P}(X)\to [-\infty,\infty]$ and a test function space $\Gamma\subset \mathcal{M}(X)^n, n\in\mathbb{Z}^+$, we define 
\begin{align}\label{eq:general_divergence_IPM_app}
    D_H^\Gamma(Q\|P)=\sup_{\gamma\in\Gamma}H(\gamma;Q,P)\,.
\end{align}
$D_H^\Gamma$ is called a \textit{divergence} if $D_H^\Gamma\geq 0$ and $D_H^\Gamma(Q\|P)=0$ if and only if $Q=P$, hence providing a notion of  ``distance" between probability measures. $D_H^\Gamma$ is further called a \textit{probability metric} if it satisfies the triangle inequality (i.e., $D_H^\Gamma(Q\|P)\leq D_H^\Gamma(Q\|\nu)+D_H^\Gamma(\nu\|P)$ for all $Q,P,\nu\in\mathcal{P}(X)$) and is symmetric (i.e., $D_H^{\Gamma}(Q\|P) = D_H^\Gamma(P\|Q)$ for all $P, Q\in \mathcal{P}(X)$). It is well known that formula \eqref{eq:general_divergence_IPM_app} includes, through suitable choices of objective functional $H(\gamma;Q,P)$ and function space $\Gamma$, many divergences and probability metrics. Below we further elaborate on the examples discussed in \cref{sec:variational_divergence_background}.


\textbf{(a) $f$-divergences.} Let $f:[0, \infty) \to \mathbb{R}$ be convex and lower semi-continuous (LSC), with  $f(1)=0$ and $f$ strictly convex at  $x=1$. The $f$-divergence between $Q$ and $P$ can be defined based on two equivalent variational representations \cite{Birrell:f-Gamma}, namely
\begin{align}
D_f(Q\|P)=&\sup_{\gamma\in \mathcal{M}_b(X)}\{ E_Q[\gamma]-E_P[f^*(\gamma)]\}\label{eq:f-div:variational:LT_app}\\
=&\sup_{\gamma\in \mathcal{M}_b(X)}\{ E_Q[\gamma]-\Lambda_f^P[\gamma]\}\, ,\label{eq:f-div:variational:DV_app}
\end{align}
where $f^*$ in the first representation \eqref{eq:f-div:variational:LT_app} denotes the Legendre transform (LT) of $f$,
\begin{align}\label{eq:LT:def_app}
    f^*(y)=\sup_{x\in\mathbb{R}}\{yx-f(x)\}, \quad \forall y\in \R,
\end{align}
and $\Lambda_f^P[\gamma]$ in the second representation \eqref{eq:f-div:variational:DV_app} is defined as
\begin{align}\label{eq:Lambda_f_def_app}
    \Lambda_f^P[\gamma]\coloneqq\inf_{\nu\in\mathbb{R}}\{\nu+E_P[f^*(\gamma-\nu)]\}\,,\,\,\,\,\,\gamma\in\mathcal{M}_b(\Omega)\,.
\end{align}

The two variational representations \req{eq:f-div:variational:LT_app} and \req{eq:f-div:variational:DV_app} share the same $\Gamma=\mathcal{M}_b(X)$, and their equivalence is due to $\mathcal{M}_b(\Omega)$ being closed under the shift map $\gamma\mapsto \gamma-\nu$ for $\nu\in\mathbb{R}$. Examples of the $f$-divergences include the Kullback-Leibler (KL) divergence \cite{kullback1951information}, the total variation distance, the $\chi^2$-divergence, the Hellinger distance, the Jensen-Shannon divergence, and the family of $\alpha$-divergences \cite{nowozin2016f}. For instance, the KL-divergence is constructed from
\begin{align}
\label{eq:KL_alpha_app}
    f_{KL} = x\log x, \quad \forall x\ge 0.
\end{align}

A key element in the second variational representation for $D_f$ [\req{eq:f-div:variational:DV_app}] is the functional $\Lambda_f^P[\gamma]$, which is a generalization of  the cumulant generating function from the KL-divergence case to the $f$-divergence case. Indeed, for the  KL-divergence  where  $f(x)=f_{KL}(x)=x\log x$, it is straightforward to show that $\Lambda_f^P$  becomes the standard cumulant generating function,
$\Lambda_{f_{KL}}^P[\gamma]=\log E_P[e^\gamma]$,
and \req{eq:f-div:variational:DV_app} becomes the Donsker-Varadhan variational formula; see Appendix C.2 in \cite{dupuis2011weak}.
The flexibility of $f$ allows one to tailor the divergence to the data source, e.g., for heavy tailed data. Moreover, the strict concavity of $f$ in $\gamma$  can result in improved statistical learning, estimation, and convergence performance. However, the variational representations \eqref{eq:f-div:variational:LT_app} and \eqref{eq:f-div:variational:DV_app} both result in $D_f(Q\|P) = \infty$ if $Q$ is not absolutely continuous with respect to $P$, limiting their efficacy in comparing distributions with low-dimensional support.

\textbf{(b) $\Gamma$-Integral Probability Metrics (IPMs).} Given $\Gamma\subset \mathcal{M}_b(X)$, the $\Gamma$-IPM between $Q$ and $P$ is defined as
\begin{align}\label{eq:IPM:def_app}
    W^\Gamma(Q,P)=\sup_{\gamma\in\Gamma}\{E_Q[\gamma]-E_P[\gamma]\}.
\end{align}
We refer to \cite{muller_1997, Gretton_review_IPM} for a complete theory and  conditions on $\Gamma$ ensuring that  $W^\Gamma(Q,P)$ is a metric. Apart from the Wasserstein metric when $\Gamma=\Lip^1(X)$ is the space of 1-Lipschitz functions, examples of IPMs also include: the total variation metric, where $\Gamma$ is the unit ball in $\mathcal{M}_b(X)$; the Dudley metric, where $\Gamma$ is the unit ball in the space of bounded and Lipschitz continuous functions; and maximum mean discrepancy (MMD), where $\Gamma$ is the unit ball in an RKHS \cite{muller_1997, Gretton_review_IPM}. With suitable choices of $\Gamma$, IPMs
are able to meaningfully compare not-absolutely continuous distributions, but they could potentially fail at comparing distributions with heavy tails \cite{Birrell:f-Gamma}.

\textbf{(c) $(f,\Gamma)$-divergences.} This class of divergences were  introduced in \cite{Birrell:f-Gamma} and they subsume both $f$-divergences and $\Gamma$-IPMs. Given a function $f$ satisfying the same condition as in the definition of the $f$-divergence and $\Gamma\subset \mathcal{M}_b(X)$, the $(f,\Gamma)$-divergence is defined as
\begin{align}
\label{eq:Df_Gamma_def1_app}
D_f^\Gamma(Q\|P)=&\sup_{\gamma\in\Gamma}\left\{E_Q[\gamma]-\Lambda_f^P[\gamma]\right\},
\end{align}
where $\Lambda_f^P[\gamma]$ is again given by \req{eq:Lambda_f_def_app}, implying that \req{eq:Df_Gamma_def1} includes as a special case the $f$-divergence \eqref{eq:f_divergence} when $\Gamma = \mathcal{M}_b(X)$ and the $\Gamma\subset\mathcal{M}_b(X)$ implies
\begin{align}\label{eq:Df_geq_Df_Gamma_app}
    D_f^\Gamma(Q\|P)\leq D_f(Q\|P)
\end{align}
for any $\Gamma\subset\mathcal{M}_b(X)$.  It is demonstrated in \cite{Birrell:f-Gamma} that  one also has
\begin{align}\label{eq:IPM_geq_Df_Gamma_app}
    D_f^\Gamma(Q\|P)\leq W^{\Gamma}(Q, P)\, .
\end{align}
Some notable examples of such $\Gamma$'s can be found in  \cite{Birrell:f-Gamma}, for instance  the 1-Lipschitz functions $\Lip^1(X)$, the RKHS unit ball, ReLU neural networks, ReLU neural networks with spectral normalizations, etc. The property \eqref{eq:IPM_geq_Df_Gamma_app} readily  implies that $(f, \Gamma)$ divergences can be defined for non-absolutely continuous probability distributions. If $X$ is further assumed to be a complete separable metric space   then, under stronger assumptions on $f$ and $\Gamma$, one has the following Infimal Convolution Formula:
\begin{align}
    D_f^\Gamma(Q\|P) = \inf_{\eta\in \mathcal{P}(X)}\left\{D_f(\eta\|P) + W^\Gamma(Q, \eta) \right\},
\end{align}
which implies, in particular, $0\le D_f^\Gamma(Q\|P)\le \min \{D_f(Q\|P) , W^\Gamma(Q, P)\}$, i.e., \req{eq:Df_geq_Df_Gamma_app} and \req{eq:IPM_geq_Df_Gamma_app}.

\textbf{(d) Sinkhorn divergences.} The Wasserstein (or ``earth-mover") metric associated with a cost function $c:X\times X\to \R^+$ has the variational representation
\begin{align}
    W_c^\Gamma(Q, P)= \inf_{\pi \in {\rm Co}(Q,P)} E_\pi[c(x,y)] = \sup_{\gamma = (\gamma_1, \gamma_2)\in \Gamma}\{E_P[\gamma_1] + E_Q[\gamma_2]\}\,,
\end{align}
where ${\rm Co}(Q,P)$ is the set  of all couplings of $P$ and $Q$ and  $\Gamma=\{\gamma=(\gamma_1, \gamma_2) \in C(X)\times C(X): \gamma_1(x) + \gamma_2(y) \le c(x,y)\, , x, y \in X\}$, with $C(X)$ being the space of continuous functions on $X$ ($C_b(X)$ will denote the subspace of bounded continuous functions).  The Sinkhorn divergence is given by
\begin{align}\label{eq:sinkhorn_def_app}
    \mathcal{SD}^\Gamma_{c,\epsilon}(Q,P)\,=&\, W^\Gamma_{c,\epsilon}(Q,P) - \frac{1}{2}W^\Gamma_{c,\epsilon}(Q,Q) -\frac{1}{2}W^\Gamma_{c,\epsilon}(P,P),
\end{align}
with $W^\Gamma_{c,\epsilon}(Q,P)$ being the entropic regularization of the Wasserstein metrics \cite{Cuturi:stoch_optim_OT:2016},
\begin{align}
W^\Gamma_{c,\epsilon}(Q,P)=& \inf_{\pi \in {\rm Co}(Q,P)} \left\{ E_\pi[c(x,y)] + \epsilon R(\pi\|P\times Q)\right\}\\
=& \sup_{\gamma=(\gamma_1, \gamma_2)\in \Gamma} \left\{ E_P[\gamma_1] + E_Q[\gamma_2] - \epsilon E_{P\times Q}\left[\exp\left(\frac{\gamma_1 \oplus\gamma_2 - c}{\epsilon}\right) \right] + \epsilon \right\} \,,
\end{align}
where now $\Gamma=C_b(X)\times C_b(X)$ and $\gamma_1\oplus\gamma_2(x,y)\coloneqq\gamma_1(x)+\gamma_2(y)$.

\section{Proofs}\label{app:proofs}
In this appendix we provide proofs of several  results that were stated without proof in the main text. 
\subsection{Proof of    Theorem \ref{thm:invariant_discriminator1} for Sinkhorn Divergences. }\label{app:Sinkhorn_proof}
\begin{theorem}\label{thm:sinkhorn_proof}
 If $Q,P\in\mathcal{P}_\Sigma(X)$, $\Gamma\subset\mathcal{M}_b(X)^2$ with $S_\Sigma[\Gamma] \subset \Gamma$ ($S_\Sigma[\gamma]^i\coloneqq S_\Sigma[\gamma^i]$), and $c(T_\sigma(x),T_\sigma(y))=c(x,y)$ for all $\sigma\in\Sigma$, $x,y\in X$ then
\begin{equation}\label{eq:main_thm:Sinkhorn_app}
     \mathcal{SD}_{c, \epsilon}^\Gamma(Q,P)= \mathcal{SD}_{c, \epsilon}^{\Gamma^{\text{inv}}_\Sigma}(Q,P)\, .
     \end{equation}
\end{theorem}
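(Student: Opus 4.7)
The plan is to adapt the proof of Theorem \ref{thm:invariant_discriminator1} from the $(f,\Gamma)$-divergence case to the Sinkhorn case by replacing the convexity argument for $f^*$ with an analogous Jensen step applied to the exponential in the entropic regularization. Since $\mathcal{SD}_{c,\epsilon}^\Gamma(Q,P)$ is a fixed linear combination of the three entropic Wasserstein quantities $W_{c,\epsilon}^\Gamma(Q,P)$, $W_{c,\epsilon}^\Gamma(Q,Q)$, and $W_{c,\epsilon}^\Gamma(P,P)$, it suffices to establish $W_{c,\epsilon}^\Gamma(R_1,R_2) = W_{c,\epsilon}^{\Gamma_\Sigma^{\text{inv}}}(R_1,R_2)$ whenever $R_1, R_2 \in \mathcal{P}_\Sigma(X)$. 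The identity $S_\Sigma[\Gamma] = \Gamma_\Sigma^{\text{inv}}$ was already shown in Theorem \ref{thm:invariant_discriminator1}, so the remaining task reduces to proving $W_{c,\epsilon}^\Gamma(R_1,R_2) = W_{c,\epsilon}^{S_\Sigma[\Gamma]}(R_1,R_2)$.

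Write $H(\gamma; R_1, R_2)$ for the Sinkhorn inner objective and fix $\gamma = (\gamma_1, \gamma_2) \in \Gamma$, with $S_\Sigma$ extended to pairs componentwise. The linear terms are preserved under symmetrization by Lemma \ref{lemma:symmetr_ops}(c) and $\Sigma$-invariance of $R_1, R_2$, i.e.\ $E_{R_i}[S_\Sigma[\gamma_i]] = E_{R_i}[\gamma_i]$. For the entropic term, the key step is to represent the symmetrized integrand via a \emph{single} group element acting jointly on both arguments,
\begin{align*}
S_\Sigma[\gamma_1](x) + S_\Sigma[\gamma_2](y) = \int_\Sigma \bigl(\gamma_1(T_\sigma x) + \gamma_2(T_\sigma y)\bigr)\, \mu_\Sigma(d\sigma),
\end{align*}
then apply Jensen's inequality to the convex function $\exp$, integrate against $R_1 \times R_2$, swap integrals by Fubini, and change variables $(x,y) \mapsto (T_{\sigma^{-1}} x, T_{\sigma^{-1}} y)$ using the $\Sigma$-invariance of $R_1$ and $R_2$. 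The resulting integrand becomes $\exp((\gamma_1(x) + \gamma_2(y) - c(T_{\sigma^{-1}} x, T_{\sigma^{-1}} y))/\epsilon)$, and the joint $\Sigma$-invariance of $c$ removes the $\sigma$-dependence, collapsing the outer $\mu_\Sigma$-integral. This yields $E_{R_1 \times R_2}[\exp((S_\Sigma[\gamma_1] \oplus S_\Sigma[\gamma_2] - c)/\epsilon)] \leq E_{R_1 \times R_2}[\exp((\gamma_1 \oplus \gamma_2 - c)/\epsilon)]$ and hence $H(S_\Sigma[\gamma]; R_1, R_2) \geq H(\gamma; R_1, R_2)$.

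The delicate point I expect to be the main obstacle is exactly this choice of writing the symmetrized pair via a single joint group element rather than two independent ones. A naive Jensen step with independent $\sigma, \sigma'$ on the two components would leave a residual cost $c(T_{\sigma^{-1}} x, T_{\sigma'^{-1}} y)$ that the hypothesis $c(T_\sigma x, T_\sigma y) = c(x,y)$ cannot eliminate, since only the diagonal action is assumed to preserve $c$. The single-element formulation exploits exactly the diagonal invariance that is available. Once the inequality $H(S_\Sigma[\gamma]; R_1, R_2) \geq H(\gamma; R_1, R_2)$ is in hand, taking the supremum over $\gamma \in \Gamma$ together with the hypothesis $S_\Sigma[\Gamma] \subset \Gamma$ yields $W_{c,\epsilon}^\Gamma(R_1, R_2) \leq W_{c,\epsilon}^{S_\Sigma[\Gamma]}(R_1, R_2)$; the reverse bound is immediate from the same containment. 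Specializing $(R_1, R_2) \in \{(Q,P), (Q,Q), (P,P)\}$ and combining the three resulting identities proves \eqref{eq:main_thm:Sinkhorn_app}.
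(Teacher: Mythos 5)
Your proposal is correct and follows essentially the same route as the paper: reduce to the equality $W_{c,\epsilon}^\Gamma = W_{c,\epsilon}^{S_\Sigma[\Gamma]}$ for $\Sigma$-invariant pairs, write $S_\Sigma[\gamma_1](x)+S_\Sigma[\gamma_2](y)$ as a single Haar integral over the diagonal action, apply Jensen to $\exp$ followed by Fubini, and then use the invariance of the measures and the diagonal invariance of $c$ to collapse the outer integral. Your remark about why the diagonal (rather than independent) symmetrization is essential is exactly the subtle point the paper's computation exploits.
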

\begin{remark}
Note that the classical Sinkhorn divergence is obtained when $\Gamma=C_b(X)\times C_b(X)$ but the proof of this theorem applies to any $\Gamma\subset \mathcal{M}_b(X)^2$ with $S_\Sigma[\Gamma]\subset\Gamma$.
\end{remark}
\begin{proof}
Equation \eqref{eq:sinkhorn_def_app} implies that it suffices to show $W^\Gamma_{c,\epsilon}(Q,P)=W^{\Gamma^{\text{inv}}_\Sigma}_{c,\epsilon}(Q,P)$: From the proof of Theorem \ref{thm:invariant_discriminator1} we know that $\Gamma^{\text{inv}}_\Sigma=S_\Sigma[\Gamma]$, therefore
\begin{align*}
    &W^{\Gamma_\Sigma^{\text{inv}}}_{c,\epsilon}(Q,P)\\
    =&W^{S_\Sigma[\Gamma]}_{c,\epsilon}(Q,P)=\sup_{(\gamma_1, \gamma_2)\in \Gamma}\left\{E_P[S_\Sigma[\gamma_1]] + E_Q[S_\Sigma[\gamma_2]] - \epsilon E_{P\times Q}\left[\exp\left(\frac{S_\Sigma[\gamma_1] \oplus S_\Sigma[\gamma_2] - c}{\epsilon}\right) \right] + \epsilon\right\}\\
    =&\sup_{(\gamma_1, \gamma_2)\in \Gamma}\left\{E_{S^\Sigma[P]}[\gamma_1] + E_{S^\Sigma[Q]}[\gamma_2] - \epsilon E_{P\times Q}\left[\exp\left(\frac{\int \gamma_1(T_\sigma(x))+\gamma_2(T_\sigma(y))-c(x,y)\mu_\Sigma(d\sigma)}{\epsilon}\right) \right] + \epsilon\right\}\,.
\end{align*} 
Using Jensen's inequality followed by Fubini's theorem on the third term we obtain
\begin{align*}
    &W^{\Gamma_\Sigma^{\text{inv}}}_{c,\epsilon}(Q,P)\\
    \geq&\sup_{(\gamma_1, \gamma_2)\in \Gamma}\left\{E_{S^\Sigma[P]}[\gamma_1] + E_{S^\Sigma[Q]}[\gamma_2] - \epsilon \int E_{P\times Q}\left[ \exp\left(\frac{\gamma_1(T_\sigma(x))+\gamma_2(T_\sigma(y))-c(x,y)}{\epsilon}\right) \right]\mu_\Sigma(d\sigma) + \epsilon\right\}\,.
\end{align*} 
Finally, the $\Sigma$-invariance of $Q$, $P$, and $c$ imply $S^\Sigma[P]=P$, $S^\Sigma[Q]=Q$, and
\begin{align*}
&\int E_{P\times Q}\left[ \exp\left(\frac{\gamma_1(T_\sigma(x))+\gamma_2(T_\sigma(y))-c(x,y)}{\epsilon}\right) \right]\mu_\Sigma(d\sigma)\\
=     &\int E_{P\times Q}\left[ \exp\left(\frac{\gamma_1(T_\sigma(x))+\gamma_2(T_\sigma(y))-c(T_\sigma(x),T_\sigma(y))}{\epsilon}\right) \right]\mu_\Sigma(d\sigma)\\
    =&\int \int \int \exp\left(\frac{\gamma_1(x)+\gamma_2(y)-c(x,y)}{\epsilon}\right) Q\circ T_\sigma^{-1}(dx) P\circ T_\sigma^{-1}(dy)\mu_\Sigma(d\sigma)\\
      =& \int \int \exp\left(\frac{\gamma_1(x)+\gamma_2(y)-c(x,y)}{\epsilon}\right)Q(dx) P(dy)\,.
\end{align*} 
Therefore
\begin{align*}
    W^{\Gamma_\Sigma^{\text{inv}}}_{c,\epsilon}(Q,P)
    \geq&\sup_{(\gamma_1, \gamma_2)\in \Gamma}\left\{E_{P}[\gamma_1] + E_{Q}[\gamma_2] - \epsilon E_{P\times Q}\left[ \exp\left(\frac{\gamma_1\oplus\gamma_2-c}{\epsilon}\right) \right] +\epsilon\right\}=W_{c,\epsilon}^\Gamma(Q,P)\,.
\end{align*} 
The reverse inequality follows from $\Gamma_\Sigma^{\text{inv}}\subset\Gamma$ and so the proof is complete.
\end{proof}

\subsection{Admissibility Lemmas}\label{app:admissibility}
In this appendix we prove several lemmas regarding admissible test function spaces. First we prove the admissibility properties of $\Gamma_\Sigma^{\text{inv}}$ from Lemma \ref{lemma:Gamma_H_admissible}.
\begin{lemma}
Let $\Gamma\subset C_b(X)$.
\begin{enumerate}
    \item If $\Gamma$ is admissible then $\Gamma_\Sigma^{\text{inv}}$ is admissible.
    \item If  $\Gamma$ is strictly admissible and $S_\Sigma[\Gamma]\subset\Gamma$ then $\Gamma_\Sigma^{\text{inv}}$ is $\Sigma$-strictly admissible.
\end{enumerate}
\end{lemma}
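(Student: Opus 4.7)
My plan is to verify the conditions in Definition \ref{def:admissible} directly, treating $\Gamma_\Sigma^{\text{inv}}$ as the intersection $\Gamma \cap C_{b,\Sigma}^{\text{inv}}(X)$. For part~(1), admissibility requires three ingredients: containing $0$, convexity, and closedness in the $M(X)$-topology on $C_b(X)$. The first two are free: the constant function $0$ is $\Sigma$-invariant and lies in $\Gamma$ by admissibility of $\Gamma$, and $C_{b,\Sigma}^{\text{inv}}(X)$ is a linear subspace so intersecting it with the convex set $\Gamma$ preserves convexity. The one substantive step is closedness, and my plan is to exhibit $C_{b,\Sigma}^{\text{inv}}(X)$ as a common zero set of the continuous linear functionals $\gamma \mapsto \gamma(T_\sigma x) - \gamma(x) = \tau_{\delta_{T_\sigma x} - \delta_x}(\gamma)$ indexed by $(\sigma,x) \in \Sigma \times X$. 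Since $\delta_{T_\sigma x} - \delta_x \in M(X)$, each such functional is continuous by Definition \ref{def:weak_topology}, so $C_{b,\Sigma}^{\text{inv}}(X)$ is closed, and intersecting with the closed set $\Gamma$ keeps it closed.

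For part~(2), admissibility of $\Gamma_\Sigma^{\text{inv}}$ follows from part~(1), and $\Gamma_\Sigma^{\text{inv}} \subset C_{b,\Sigma}^{\text{inv}}(X)$ holds by construction, so the remaining task is to produce a $\mathcal{P}_\Sigma(X)$-determining set $\Psi' \subset C_b(X)$ with the required shift-and-scale structure. The natural candidate is $\Psi' \coloneqq S_\Sigma[\Psi]$, where $\Psi$ is the $\mathcal{P}(X)$-determining set supplied by the strict admissibility of $\Gamma$. The shift-and-scale property transfers cleanly: if $c \pm \epsilon \psi \in \Gamma$, then linearity of $S_\Sigma$ together with $S_\Sigma[c] = c$ gives
\begin{align*}
c \pm \epsilon\, S_\Sigma[\psi] \;=\; S_\Sigma[c \pm \epsilon \psi] \;\in\; S_\Sigma[\Gamma] \;\subset\; \Gamma,
\end{align*}
and Lemma \ref{lemma:symmetr_ops}(a) places this element in $C_{b,\Sigma}^{\text{inv}}(X)$, hence in $\Gamma_\Sigma^{\text{inv}}$. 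Rearranging the identity also yields $S_\Sigma[\psi] \in C_b(X)$, so $\Psi' \subset C_b(X)$, as required.

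To finish I must show $\Psi'$ is $\mathcal{P}_\Sigma(X)$-determining. My plan is to invoke Lemma \ref{lemma:symmetr_ops}(c): for $Q \in \mathcal{P}_\Sigma(X)$ and any $\psi \in \Psi$, $S_\Sigma[\psi] = E_Q[\psi \mid \mathcal{M}_\Sigma]$, so the tower property gives $E_Q[S_\Sigma[\psi]] = E_Q[\psi]$, and likewise for $P$. Thus the hypothesis $E_Q[S_\Sigma[\psi]] = E_P[S_\Sigma[\psi]]$ for every $\psi \in \Psi$ collapses to $E_Q[\psi] = E_P[\psi]$ for all $\psi \in \Psi$, and the $\mathcal{P}(X)$-determining property of $\Psi$ forces $Q = P$. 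The only subtle point I anticipate is precisely why $S_\Sigma[\psi]$ is continuous, since a priori $\psi \in \Psi$ need not be in $\Gamma$; this is resolved by the observation above that $S_\Sigma[\psi] = \epsilon^{-1}\bigl(S_\Sigma[c+\epsilon\psi] - c\bigr)$ is a rearrangement of an element of $\Gamma \subset C_b(X)$, so the hypothesis $S_\Sigma[\Gamma] \subset \Gamma$ does the real work here.
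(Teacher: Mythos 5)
Your proof is correct and follows essentially the same route as the paper's: for part (1), exhibit $C_{b,\Sigma}^{\text{inv}}(X)$ as a common zero set of functionals $\tau_{\delta_{T_\sigma x}-\delta_x}$ and intersect with $\Gamma$; for part (2), take $\Psi' = S_\Sigma[\Psi]$ and use $S_\Sigma[c\pm\epsilon\psi] = c\pm\epsilon S_\Sigma[\psi]$ together with $S_\Sigma[\Gamma]=\Gamma_\Sigma^{\text{inv}}$. The only cosmetic difference is that you derive $E_Q[S_\Sigma[\psi]]=E_Q[\psi]$ from Lemma~\ref{lemma:symmetr_ops}(c) and the tower property, whereas the paper cites part (b) (that $S^\Sigma[Q]=Q$ for $Q\in\mathcal{P}_\Sigma(X)$); you also spell out more explicitly why $S_\Sigma[\psi]\in C_b(X)$, which is a small point the paper states without elaboration.
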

\begin{proof}
\begin{enumerate}
    \item The zero function is $\Sigma$-invariant, hence is in $\Gamma_\Sigma^{\text{inv}}$.  If $\gamma_1,\gamma_2\in\Gamma_\Sigma^{\text{inv}}$ and $t\in[0,1]$ then convexity of $\Gamma$ implies $t\gamma_1+(1-t)\gamma_2\in\Gamma$. We have $(t\gamma_1+(1-t)\gamma_2)\circ T_\sigma=t\gamma_1\circ T_\sigma+(1-t)\gamma_2\circ T_\sigma=t\gamma_1+(1-t)\gamma_2$, hence we  conclude that  $\Gamma_\Sigma^{\text{inv}}$ is convex.  Finally, we can write
    \begin{align*}
        \Gamma_\Sigma^{\text{inv}}=&\Gamma\bigcap_{\sigma\in \Sigma,x\in X}\{\gamma\in C_b(X):\gamma(T_\sigma(x))=\gamma(x)\}\\
        =&\Gamma\bigcap_{\sigma\in \Sigma,x\in X}\{\gamma\in C_b(X):\tau_{\delta_{T_\sigma( x)}}[\gamma]=\tau_{\delta_x}[\gamma]\}\,.
    \end{align*}
We have assumed $\Gamma$ is admissible, hence it is closed. The maps $\tau_\nu$, $\nu\in M(X)$ are continuous on $C_b(X)$,  hence the sets $\{\gamma\in C_b(X):\tau_{\delta_{T_\sigma( x)}}[\gamma]=\tau_{\delta_x}[\gamma]\}$ are also closed.  Therefore $\Gamma_\Sigma^{\text{inv}}$ is closed.  This proves $\Gamma_\Sigma^{\text{inv}}$ is admissible.
\item Now suppose $\Gamma$ is strictly admissible and $S_\Sigma[\Gamma]\subset \Gamma$.  In particular, $\Gamma$ is admissible and so Part 1 implies $\Gamma_\Sigma^{\text{inv}}$ is admissible.  Let $\Psi$ be as in the definition of strict admissibility. For every $\psi\in \Psi$ there exists $c\in\mathbb{R}$, $\epsilon>0$ such that $c\pm \epsilon\psi\in\Gamma$.  Hence $c\pm \epsilon S_\Sigma[\psi]=S_\Sigma[c\pm \epsilon \psi]\in S_\Sigma[\Gamma]=\Gamma_\Sigma^{\text{inv}}$ (see the proof of Theorem \ref{thm:invariant_discriminator1}) and $S_\Sigma[\Psi]\subset C_b(X)$.  Finally, suppose $Q,P\in\mathcal{P}_\Sigma(X)$ such that $E_Q[S_\Sigma[\psi]]=E_P[S_\Sigma[\psi]]$ for all $\psi\in\Psi$. Part (b) of Lemma \ref{lemma:symmetr_ops} then implies $E_Q[\psi]=E_P[\psi]$ for all $\psi\in \Psi$. $\Psi$ is $\mathcal{P}(X)$-determining, hence $Q=P$. Therefore $S_\Sigma[\Psi]$ is a $\mathcal{P}_\Sigma(X)$-determining set and we conclude that $\Gamma_\Sigma^{\text{inv}}$ is $\Sigma$-strictly admissible.  
\end{enumerate}
\end{proof}

Next we provide assumptions under which the unit ball  in a RKHS  is   closed under $S_\Sigma$ and is (strictly) admissible.
\begin{lemma}\label{lemma:RKHS_admissibility}
Let $V\subset C_b(X)$ be a separable RKHS with  reproducing-kernel $k:X\times X\to\mathbb{R}$. Let $\Gamma=\{\gamma\in V:\|\gamma\|_V\leq 1\}$ be the unit ball in $V$.  Then:
\begin{enumerate}
    \item $\Gamma$ is admissible. 
    \item If the kernel is characteristic (i.e., the map  $P\in \mathcal{P}(X)\mapsto \int k(\cdot,x)P(dx)\in V$ is one-to-one) then $\Gamma$ is strictly admissible.
    \item If $k$ is $\Sigma$-invariant the $S_\Sigma[\Gamma]\subset \Gamma$.
\end{enumerate} 
\end{lemma}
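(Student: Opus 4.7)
The plan is to treat the three parts in sequence. For Part 1, the conditions $0 \in \Gamma$ and convexity are immediate Hilbert-space facts (the triangle inequality for the $V$-norm gives convexity), and $\Gamma \subset C_b(X)$ by hypothesis on $V$. The remaining content is showing $\Gamma$ is closed in the $M(X)$-topology on $C_b(X)$. Given a net $(\gamma_\alpha) \subset \Gamma$ converging to $\gamma \in C_b(X)$ in this topology, testing against $\delta_x \in M(X)$ immediately yields pointwise convergence $\gamma_\alpha(x) \to \gamma(x)$. I would then exhibit $\tilde\gamma \in V$ with $\|\tilde\gamma\|_V \leq 1$ and $\tilde\gamma = \gamma$ pointwise by either of two routes. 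One route is to invoke Banach--Alaoglu on the unit ball of the Hilbert space $V$, pass to a weakly convergent subnet with limit $\tilde\gamma \in V$, $\|\tilde\gamma\|_V \leq 1$, and then match pointwise values via the reproducing identity $\gamma_\alpha(x) = \langle \gamma_\alpha, k(\cdot,x)\rangle_V$. A subnet-free route is to define a linear functional $T$ on $\mathrm{span}\{k(\cdot,x): x \in X\}$ by $T(\sum_i a_i k(\cdot, x_i)) \coloneqq \sum_i a_i \gamma(x_i)$; Cauchy--Schwarz applied to $\sum_i a_i \gamma_\alpha(x_i) = \langle \gamma_\alpha, \sum_i a_i k(\cdot,x_i)\rangle_V$ yields both well-definedness of $T$ and the bound $\|T\| \leq 1$ in the limit, so $T$ extends by density to $V$ and Riesz produces the required $\tilde\gamma$.

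Part 2 is short. I would take $\Psi \coloneqq V$ itself (which lies in $C_b(X)$ by hypothesis). For any $\psi \in \Psi$, setting $c = 0$ and $\epsilon \coloneqq 1/(1+\|\psi\|_V)$ places $c \pm \epsilon\psi$ in $\Gamma$. What remains is to verify that $V$ is $\mathcal{P}(X)$-determining: if $E_Q[g] = E_P[g]$ for every $g \in V$, then the choice $g = k(\cdot,x)$ shows that the kernel mean embeddings $\int k(\cdot,y)\,Q(dy)$ and $\int k(\cdot,y)\,P(dy)$ agree pointwise in $x$, hence coincide as elements of $V$, and the characteristic hypothesis forces $Q = P$.

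Part 3 is immediate from Lemma \ref{lemma:RKHS_SH}: the hypotheses $V \subset C_b(X) \subset \mathcal{M}_b(X)$ together with $\Sigma$-invariance of $k$ are exactly what that lemma requires. The main obstacle is Part 1, where one must juggle three topologies on $V$---norm, weak, and pointwise (inherited via $M(X)$)---to ensure that the candidate limit $\gamma$ lies in the RKHS $V$ itself rather than only in $C_b(X)$.
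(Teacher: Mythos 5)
Your proposal is correct. Parts~2 and~3 match the paper's argument closely: for Part~3 the paper likewise just invokes Lemma~\ref{lemma:RKHS_SH}, and for Part~2 the paper also reduces to showing that the relevant test set is $\mathcal{P}(X)$-determining via the kernel mean embedding (using $E_Q[\gamma]-E_P[\gamma]=\langle \gamma,\int k(\cdot,x)\,dQ-\int k(\cdot,x)\,dP\rangle_V$), the only cosmetic difference being that the paper takes $\Psi=\Gamma$ together with $-\Gamma\subset\Gamma$ (so $c=0$, $\epsilon=1$), whereas you take $\Psi=V$ with a norm-dependent $\epsilon$ --- both choices satisfy the definition of strict admissibility. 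The genuine divergence is in Part~1: the paper does not prove admissibility at all but simply cites Lemma~C.9 of \cite{Birrell:f-Gamma}. Your argument fills that in. The easy checks ($0\in\Gamma$, convexity from the triangle inequality, $\Gamma\subset C_b(X)$ by the hypothesis $V\subset C_b(X)$) are indeed trivial, and the real content is closure of the unit ball in the $M(X)$-topology. Your route --- extract pointwise convergence by testing against $\delta_x\in M(X)$, then use weak compactness of the Hilbert ball (Banach--Alaoglu) and the reproducing identity $\gamma_\alpha(x)=\langle\gamma_\alpha,k(\cdot,x)\rangle_V$ to identify the pointwise limit with a weak limit in $V$ of norm at most one, or equivalently build a bounded linear functional on $\operatorname{span}\{k(\cdot,x)\}$ and apply Riesz --- is sound and is a reasonable reconstruction of what the cited lemma must contain. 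So your write-up is more self-contained than the paper's on this point; the tradeoff is length, which the paper avoids by outsourcing the claim.
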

\begin{proof}
\begin{enumerate}
\item Admissibility was shown in Lemma C.9 in \cite{Birrell:f-Gamma}.
\item Now suppose the kernel is characteristic.  Let $P,Q\in\mathcal{P}(X)$ with $\int \gamma dP=\int \gamma dQ$ for all $\gamma\in\Gamma$ (and hence for all $\gamma \in V$). Therefore
\begin{align}
0=\int \gamma dQ-\int \gamma dP=\langle \gamma,\int k(\cdot,x)Q(dx)-\int k(\cdot,x)P(dx)\rangle_V  
\end{align}
for all $\gamma\in V$.  Therefore $\int k(\cdot,x)Q(dx)=\int k(\cdot,x)P(dx)$. We have assumed the kernel is characteristic, hence we  conclude that $Q=P$. This proves  $\Gamma$ is $\mathcal{P}(X)$-determining.  We also have $-\Gamma\subset \Gamma$, hence $\Gamma$ is strictly admissible.
\item This was shown in Lemma \ref{lemma:RKHS_SH} above.
\end{enumerate}
\end{proof}

\newpage

\section{Additional Experiments}
\label{app:additional_results}

\begin{figure}[H]
  \centering
  \begin{subfigure}[b]{1\textwidth}
    \centering
    \includegraphics[width=\textwidth]{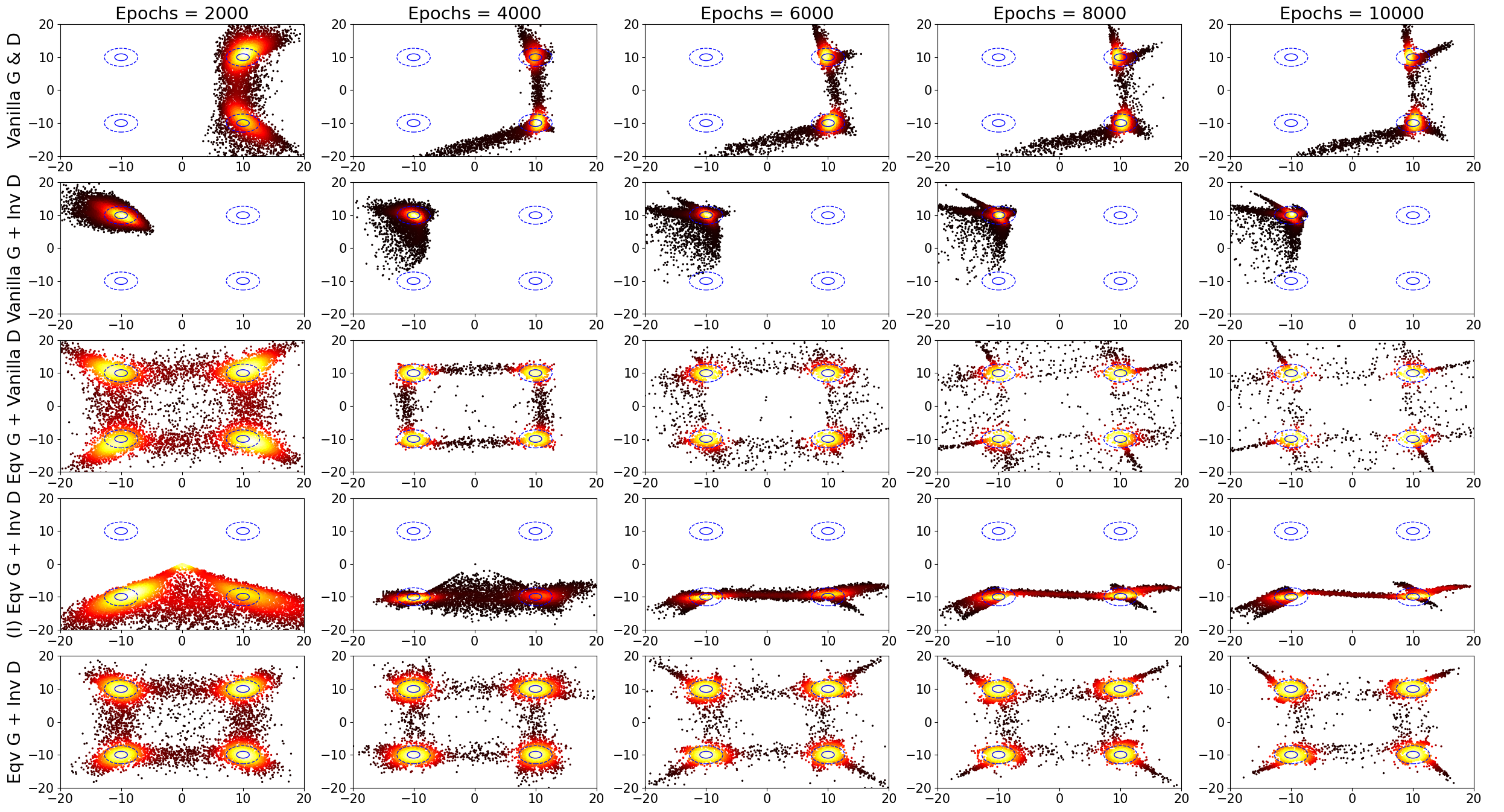}
    \caption{Models trained with 50 training samples.}
    \label{fig:toy_50_alpha_2_2d}
  \end{subfigure}
  \begin{subfigure}[b]{1\textwidth}
    \centering
    \includegraphics[width=\textwidth]{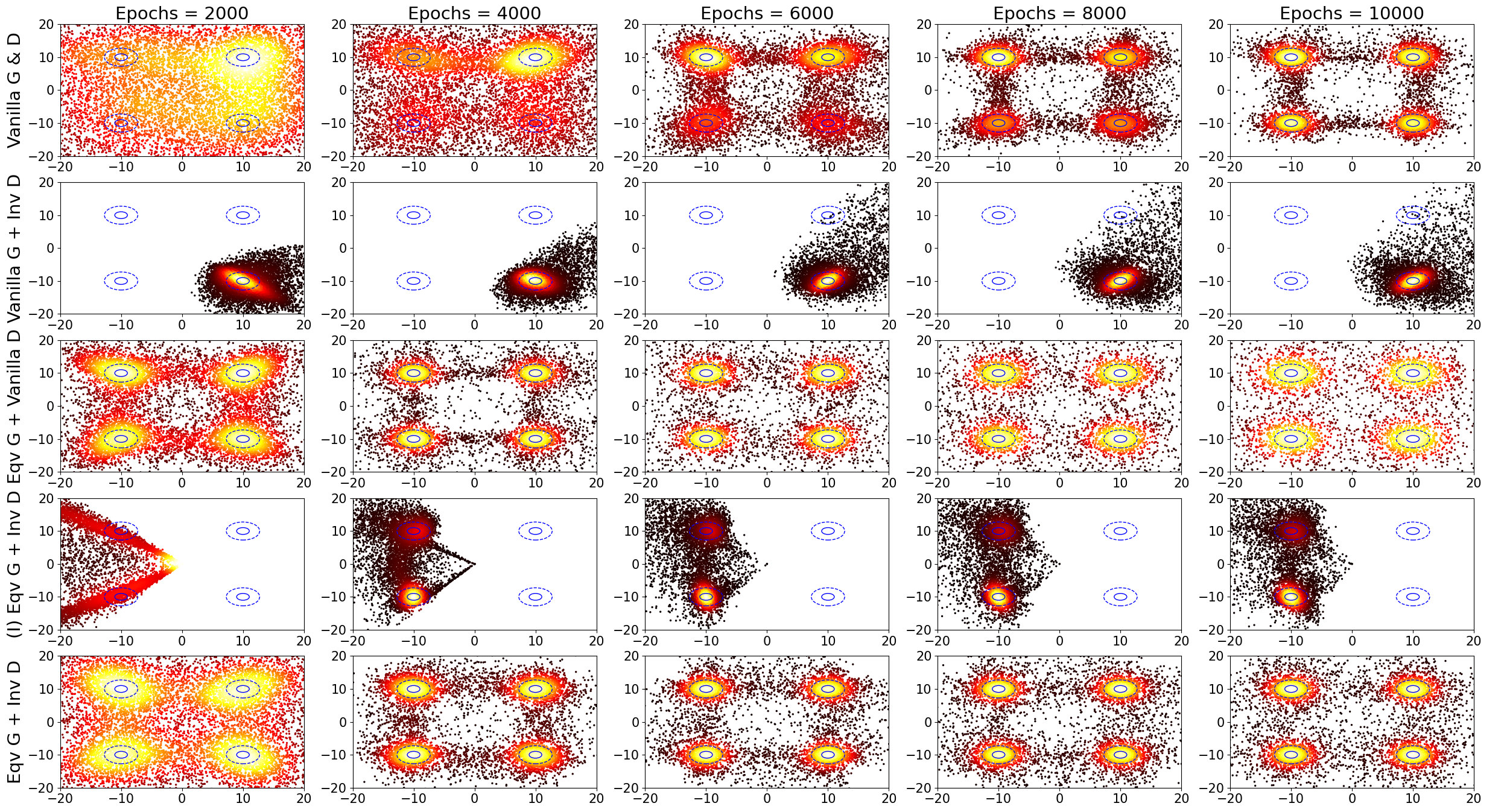}
    \caption{Models trained with 5000 training samples.}
    \label{fig:toy_5000_alpha_2_2d}
  \end{subfigure}
  \caption{2D projection of the $D_2^{L}$-GAN generated samples onto the support plane of the source distribution $Q$ [cf. \cref{sec:toy}]. Each column shows the result after a given number of training epochs. The rows correspond to different settings for the generators and discriminators. The solid and dashed blue ovals mark the 25\% and 50\% probability regions, respectively, of the data source $Q$, while the heat-map shows the generator samples. Panel (a): models are trained with \textbf{50} training samples. Panel (b): models are trained with \textbf{5000} training samples.}
  \label{fig:toy_alpha_2_2d_different_Nsample}
\end{figure}

\begin{figure}[H]
  \centering
  \begin{subfigure}[b]{1\textwidth}
    \centering
    \includegraphics[width=\textwidth]{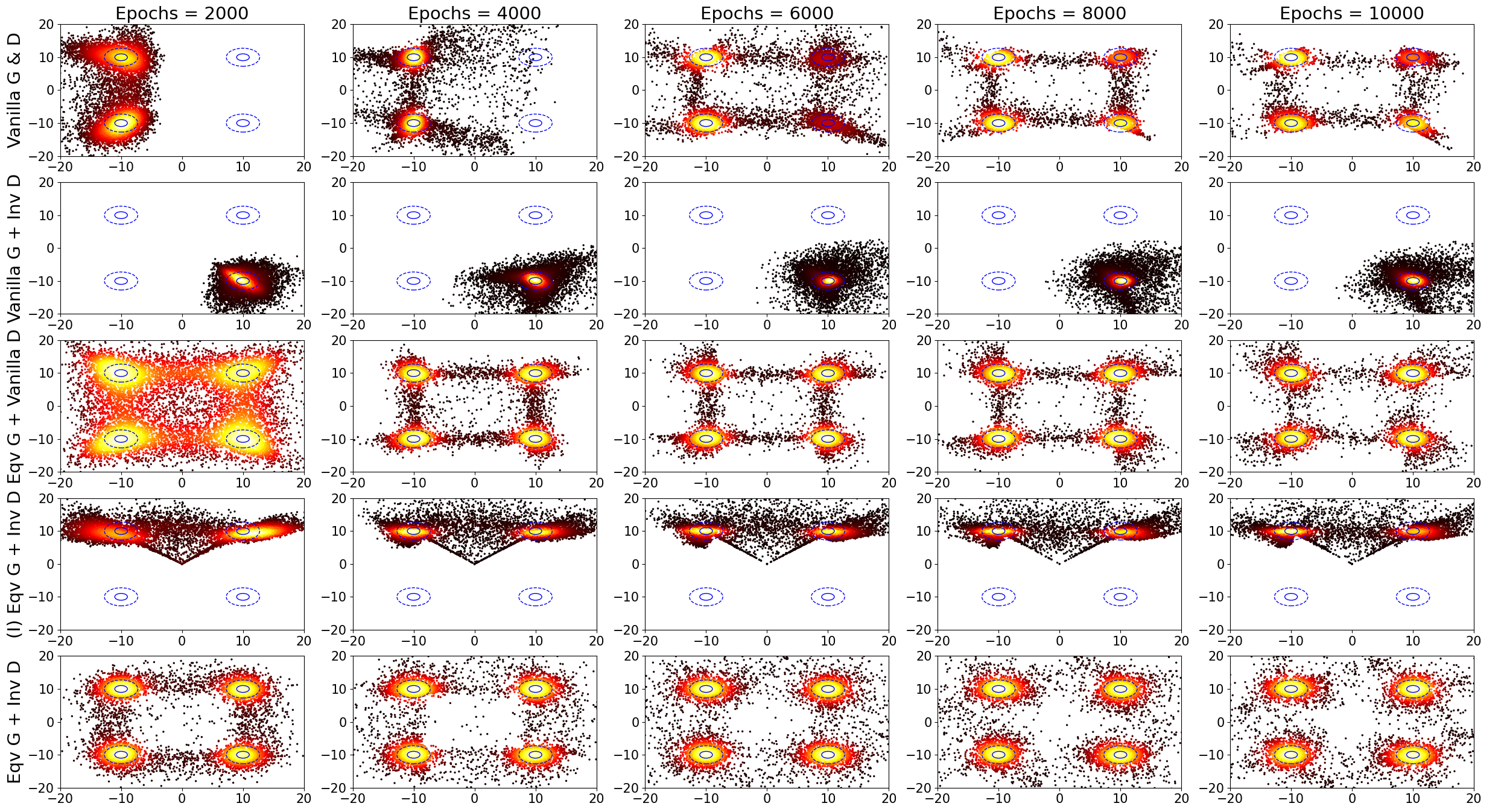}
    \caption{$D_\alpha^L$-GANs, $\alpha = 5$.}
    \label{fig:toy_200_alpha_5_2d}
  \end{subfigure}
  \begin{subfigure}[b]{1\textwidth}
    \centering
    \includegraphics[width=\textwidth]{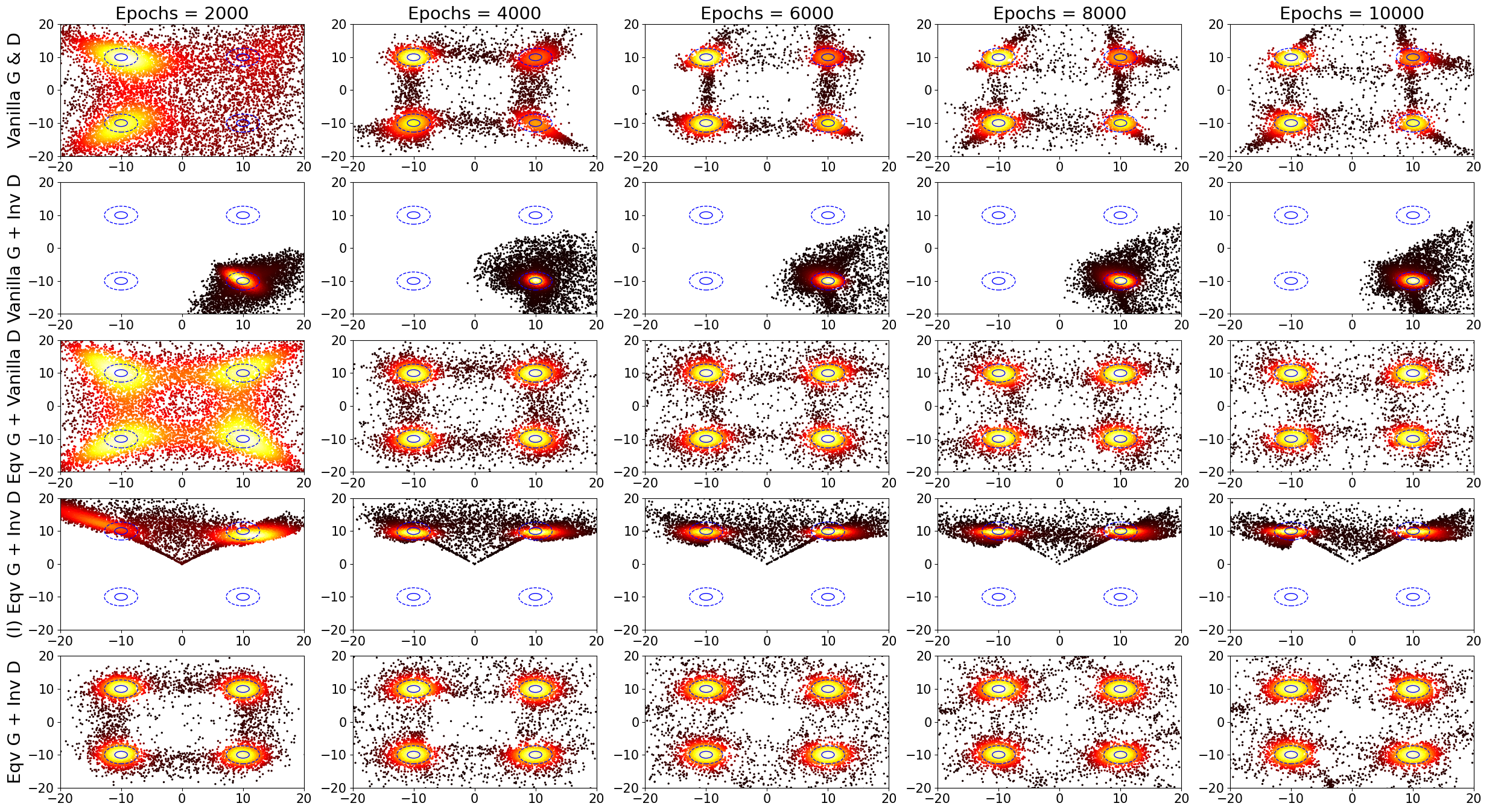}
    \caption{$D_\alpha^L$-GANs, $\alpha = 10$.}
    \label{fig:toy_200_alpha_10_2d}
  \end{subfigure}
  \caption{2D projection of the $D_{\alpha}^L$-GAN generated samples onto the support plane of the source distribution $Q$ [cf. \cref{sec:toy}]. Each column shows the result after a given number of training epochs. The rows correspond to different settings for the generators and discriminators. The solid and dashed blue ovals mark the 25\% and 50\% probability regions, respectively, of the data source $Q$, while the heat-map shows the generator samples. Models are trained on \textbf{200} training points. Panel (a): $\alpha=5$. Panel (b): $\alpha=10$.}
  \label{fig:toy_alpha_2_2d_different_Nsample_app}
\end{figure}

\begin{figure}[H]
    \centering
    \includegraphics[width=.9\textwidth]{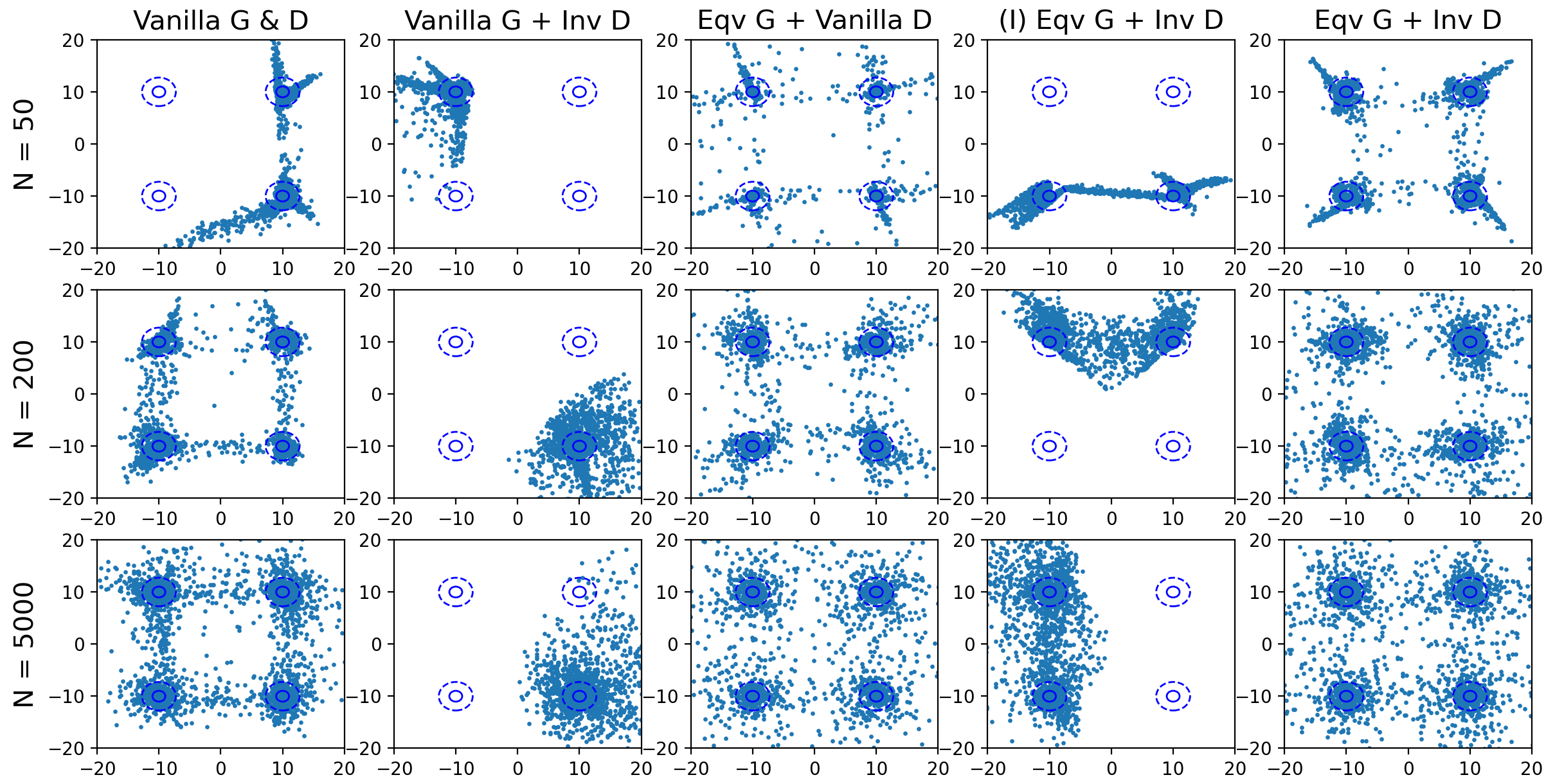}
  \caption{{ 2D projection of the $D_{2}^L$-GAN generated samples (3000 for each setting) onto the support plane of the source distribution $Q$ [cf. \cref{sec:toy}]. Each GAN is trained for 10000 epochs. The rows correspond to the number of training points $N =$ 50, 200, or 5000.
  The columns correspond to different settings for the generators and discriminators. The solid and dashed blue ovals mark the 25\% and 50\% probability regions, respectively, of the data source $Q$. Compared to Figure~\ref{fig:toy_alpha_2_2d_different_Nsample_app}, heat maps are suppressed in this figure for easier examination of the sample quality.}}
  \label{fig:real_samples}
\end{figure}

\begin{figure}[H]
    \centering
    \begin{subfigure}[b]{.26\columnwidth}
    \includegraphics[width=1\columnwidth]{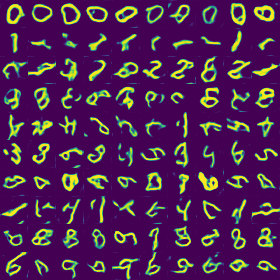}  
    \caption{\texttt{CNN G\&D}}
        
    \end{subfigure}
    ~
    \begin{subfigure}[b]{.26\columnwidth}
    \includegraphics[width=1\columnwidth]{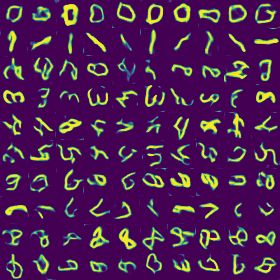}  
    \caption{\texttt{Eqv G} + \texttt{CNN D}, $\Sigma=C_4$}
        
    \end{subfigure}    
    ~
    \begin{subfigure}[b]{.26\columnwidth}
    \includegraphics[width=1\columnwidth]{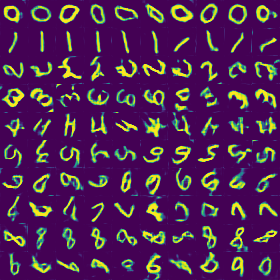}  
    \caption{\texttt{CNN G} + \texttt{Inv D}, $\Sigma=C_4$}
        
    \end{subfigure}
    ~
    \begin{subfigure}[b]{.26\columnwidth}
    \includegraphics[width=1\columnwidth]{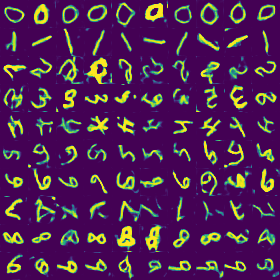}  
    \caption{\texttt{(I)Eqv G} + \texttt{Inv D}, $\Sigma=C_4$}
        
    \end{subfigure}    
    ~
    \begin{subfigure}[b]{.26\columnwidth}
    \includegraphics[width=1\columnwidth]{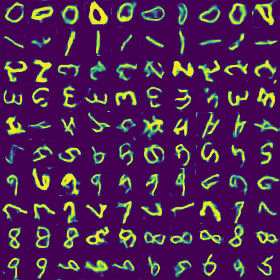}  
    \caption{\texttt{Eqv G} + \texttt{Inv D}, $\Sigma=C_4$}
        
    \end{subfigure}    
    ~
    \begin{subfigure}[b]{.26\columnwidth}
    \includegraphics[width=1\columnwidth]{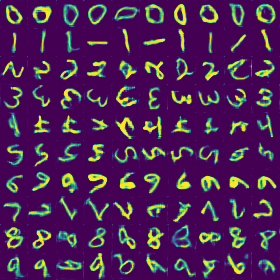}  
    \caption{\texttt{Eqv G} + \texttt{Inv D}, $\Sigma=C_8$}
        
    \end{subfigure}    
    \caption{Randomly generated digits by the $D_2^{L}$-GANs trained on RotMNIST  after 20K generator iterations with 1\% (600) training data.}
    \label{fig:mnist_alpha_random_samples_600}
\end{figure}

\begin{figure}[H]
    \centering
    \begin{subfigure}[b]{.26\columnwidth}
    \includegraphics[width=1\columnwidth]{mnist_cnnG_cnnD_4_rel_600}  
    \caption{\texttt{CNN G\&D}}
        
    \end{subfigure}
    ~
    \begin{subfigure}[b]{.26\columnwidth}
    \includegraphics[width=1\columnwidth]{pics/mnist_eqvG_cnnD_4_rel_600.png}  
    \caption{\texttt{Eqv G} + \texttt{CNN D}, $\Sigma=C_4$}
        
    \end{subfigure}    
    ~
    \begin{subfigure}[b]{.26\columnwidth}
    \includegraphics[width=1\columnwidth]{pics/mnist_cnnG_invD_4_rel_600.png}  
    \caption{\texttt{CNN G} + \texttt{Inv D}, $\Sigma=C_4$}
        
    \end{subfigure}   
    ~
    \begin{subfigure}[b]{.26\columnwidth}
    \includegraphics[width=1\columnwidth]{pics/mnist_w_eqvG_invD_4_rel_600.png}  
    \caption{\texttt{(I)Eqv G} + \texttt{Inv D}, $\Sigma=C_4$}
        
    \end{subfigure}    
    ~
    \begin{subfigure}[b]{.26\columnwidth}
    \includegraphics[width=1\columnwidth]{pics/mnist_eqvG_invD_4_rel_600.png}  
    \caption{\texttt{Eqv G} + \texttt{Inv D}, $\Sigma=C_4$}
        
    \end{subfigure}    
    ~
    \begin{subfigure}[b]{.26\columnwidth}
    \includegraphics[width=1\columnwidth]{pics/mnist_eqvG_invD_8_rel_600.png}  
    \caption{\texttt{Eqv G} + \texttt{Inv D}, $\Sigma=C_8$}
        
    \end{subfigure}        
    \caption{Randomly generated digits by the RA-GANs trained on RotMNIST after 20K generator iterations with 1\% (600) training data.}
    \label{fig:mnist_rel_random_samples_600}
\end{figure}


\begin{figure}[H]
    \centering
    \begin{subfigure}[b]{.26\columnwidth}
    \includegraphics[width=1\columnwidth]{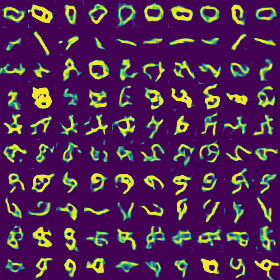}  
    \caption{\texttt{CNN G\&D}}
        
    \end{subfigure}
    ~
    \begin{subfigure}[b]{.26\columnwidth}
    \includegraphics[width=1\columnwidth]{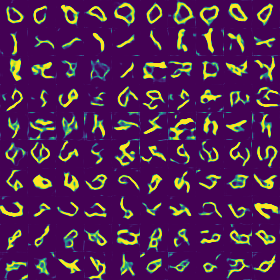}  
    \caption{\texttt{Eqv G} + \texttt{CNN D}, $\Sigma=C_4$}
        
    \end{subfigure}    
    ~
    \begin{subfigure}[b]{.26\columnwidth}
    \includegraphics[width=1\columnwidth]{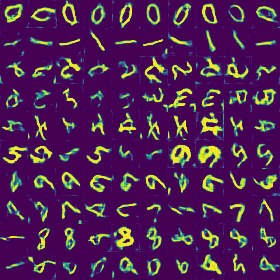}  
    \caption{\texttt{CNN G} + \texttt{Inv D}, $\Sigma=C_4$}
        
    \end{subfigure}
    ~
    \begin{subfigure}[b]{.26\columnwidth}
    \includegraphics[width=1\columnwidth]{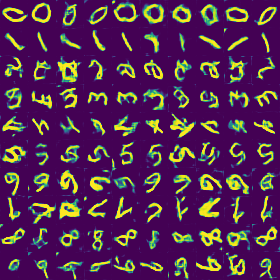}  
    \caption{\texttt{(I)Eqv G} + \texttt{Inv D}, $\Sigma=C_4$}
        
    \end{subfigure}    
    ~
    \begin{subfigure}[b]{.26\columnwidth}
    \includegraphics[width=1\columnwidth]{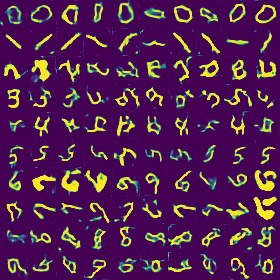}  
    \caption{\texttt{Eqv G} + \texttt{Inv D}, $\Sigma=C_4$}
        
    \end{subfigure}    
    ~
    \begin{subfigure}[b]{.26\columnwidth}
    \includegraphics[width=1\columnwidth]{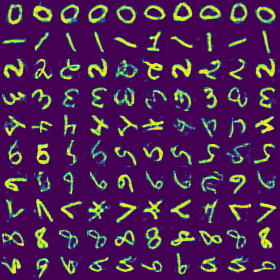}  
    \caption{\texttt{Eqv G} + \texttt{Inv D}, $\Sigma=C_8$}
        
    \end{subfigure}        
    \caption{Randomly generated digits by the $D_2^{L}$-GANs trained on RotMNIST  after 20K generator iterations with 0.33\% (200) training data. Our model \texttt{Eqv G} + \texttt{Inv D}, $\Sigma=8$ is the only one that can generate high-fidelity images in this setting. We note that the repetitively generated digits are inevitable in such a small data regime, as the models are forced to learn the empirical distribution of the limited training data (20 images per class).}
    \label{fig:mnist_alpha_random_samples_200}
\end{figure}

\begin{figure}[H]
    \centering
    \begin{subfigure}[b]{.26\columnwidth}
    \includegraphics[width=1\columnwidth]{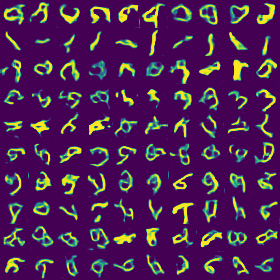}  
    \caption{\texttt{CNN G\&D}}
        
    \end{subfigure}
    ~
    \begin{subfigure}[b]{.26\columnwidth}
    \includegraphics[width=1\columnwidth]{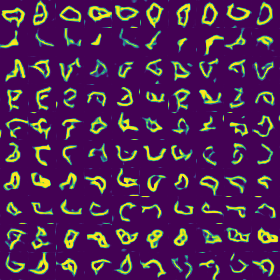}  
    \caption{\texttt{Eqv G} + \texttt{CNN D}, $\Sigma=C_4$}
        
    \end{subfigure}    
    ~
    \begin{subfigure}[b]{.26\columnwidth}
    \includegraphics[width=1\columnwidth]{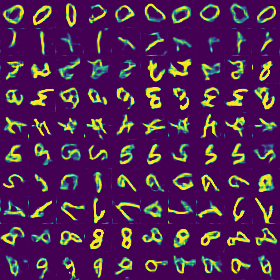}  
    \caption{\texttt{CNN G} + \texttt{Inv D}, $\Sigma=C_4$}
        
    \end{subfigure}   
    ~
    \begin{subfigure}[b]{.26\columnwidth}
    \includegraphics[width=1\columnwidth]{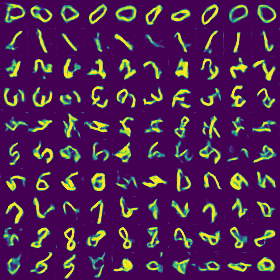}  
    \caption{\texttt{(I)Eqv G} + \texttt{Inv D}, $\Sigma=C_4$}
        
    \end{subfigure}    
    ~
    \begin{subfigure}[b]{.26\columnwidth}
    \includegraphics[width=1\columnwidth]{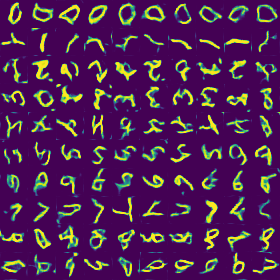}  
    \caption{\texttt{Eqv G} + \texttt{Inv D}, $\Sigma=C_4$}
        
    \end{subfigure}    
    ~
    \begin{subfigure}[b]{.26\columnwidth}
    \includegraphics[width=1\columnwidth]{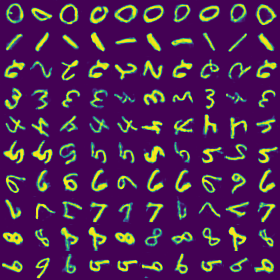}  
    \caption{\texttt{Eqv G} + \texttt{Inv D}, $\Sigma=C_8$}
        
    \end{subfigure}      
    \caption{Randomly generated digits by the RA-GANs trained on RotMNIST after 20K generator iterations with 0.33\% (200) training data.  Our model \texttt{Eqv G} + \texttt{Inv D}, $\Sigma=8$ is the only one that can generate high-fidelity images in this setting. We note that the repetitively generated digits are inevitable in such a small data regime, as the models are forced to learn the empirical distribution of the limited training data (20 images per class).}
    \label{fig:mnist_rel_random_samples_200}
\end{figure}

  \begin{table}[H]
    \caption{The median of the FIDs (lower is better), calculated every 1,000 generator update for 20,000 iterations, averaged over three independent trials. The number of the training samples used for experiments varies from 0.33\% (200) to 100\% (60,000) of the entire training set.}\label{table:3}
  \centering
  \begin{tabular}{m{1em}ccccccccc}
    \toprule
    Loss & Architecture & 0.33\%&  1\%& 5\%& 10\% & 25\% & 50\% & 100\% \\
    \midrule
    \rotatebox{90}{RA-GAN }& \makecell{\texttt{CNN G\&D} \\
    \texttt{Eqv G} + \texttt{CNN D}, $\Sigma = C_4$\\    
    \texttt{CNN G} + \texttt{Inv D}, $\Sigma = C_4$\\
    \texttt{(I)Eqv G} + \texttt{Inv D}, $\Sigma = C_4$\\
    \texttt{Eqv G} + \texttt{Inv D}, $\Sigma = C_4$ \\
    \texttt{Eqv G} + \texttt{Inv D}, $\Sigma = C_8$}&
\makecell{431\\865\\382\\360\\\textbf{190}\\313}&\makecell{295\\389\\223\\173\\\textbf{98}\\123}&\makecell{357\\333\\181\\141\\78\\\textbf{52}}&
    \makecell{348\\355\\188\\132\\89\\\textbf{51}}&
    \makecell{407\\325\\185\\124\\80\\\textbf{59}}&
    \makecell{403\\380\\177\\135\\84\\\textbf{52}}&
    \makecell{392\\393\\176\\130\\82\\\textbf{57}}
    \\
    \midrule
    \rotatebox{90}{ $D_{\alpha=2}^\Gamma$-GAN }& \makecell{\texttt{CNN G\&D} \\
    \texttt{Eqv G} + \texttt{CNN D}, $\Sigma = C_4$\\    
    \texttt{CNN G} + \texttt{Inv D}, $\Sigma = C_4$\\
    \texttt{(I)Eqv G} + \texttt{Inv D}, $\Sigma = C_4$\\
    \texttt{Eqv G} + \texttt{Inv D}, $\Sigma = C_4$ \\
    \texttt{Eqv G} + \texttt{Inv D}, $\Sigma = C_8$}&\makecell{423\\409\\511\\484\\352\\\textbf{293}}&\makecell{280\\253\\330\\273\\149\\\textbf{122}}&\makecell{261\\271\\208\\147\\99\\\textbf{55}}&\makecell{283\\251\\192\\133\\88\\\textbf{57}}&\makecell{290\\263\\190\\141\\80\\\textbf{53}}&\makecell{297\\274\\183\\124\\80\\\textbf{53}}&\makecell{293\\275\\173\\126\\81\\\textbf{51}}\\
    \bottomrule
  \end{tabular}
\end{table}

\begin{figure}[H]
    \centering
    \begin{subfigure}[b]{.48\columnwidth}
    \includegraphics[width=1\columnwidth]{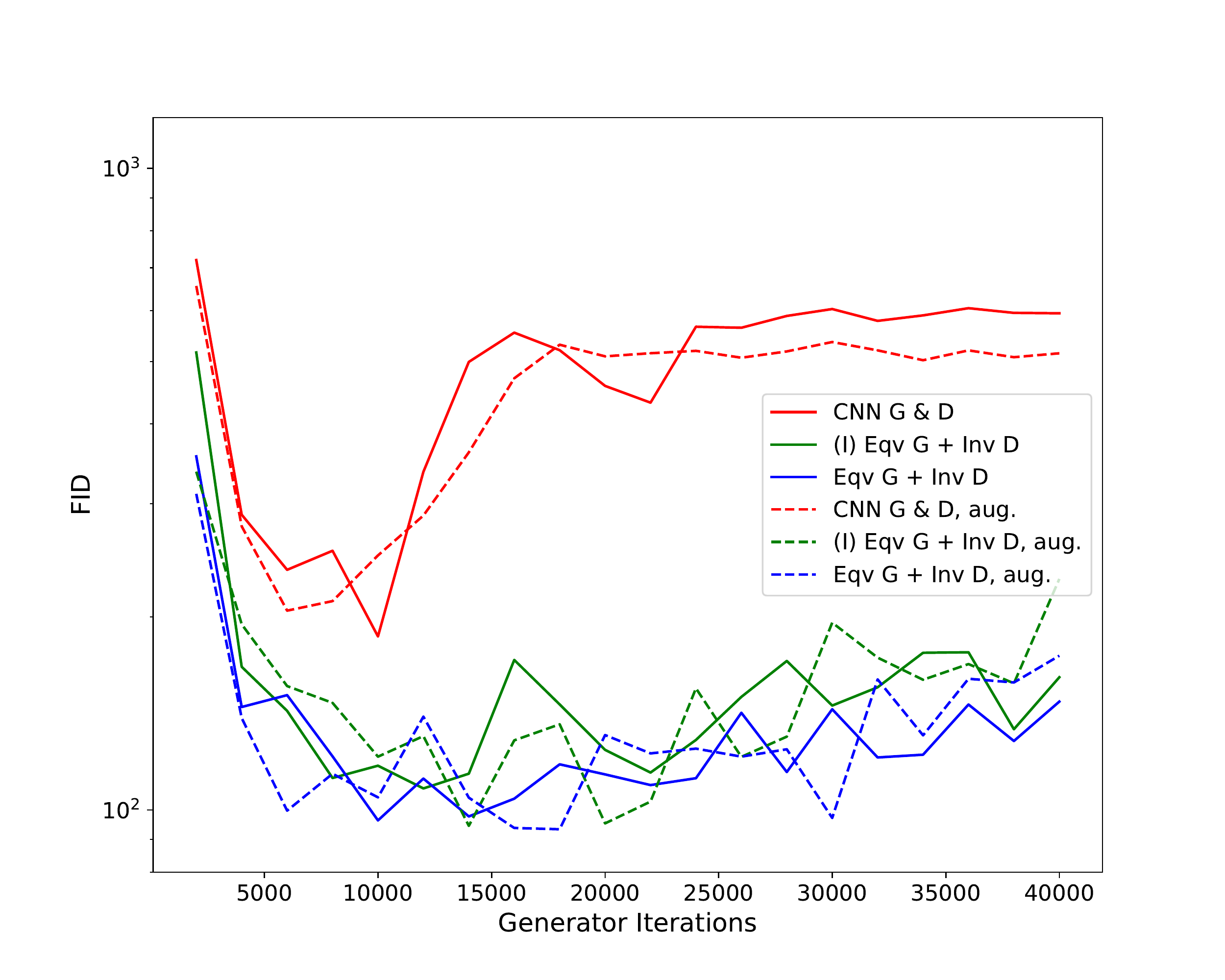}  
    \caption{ANHIR, RA-GAN}
        
    \end{subfigure}
    ~
    \begin{subfigure}[b]{.48\columnwidth}
    \includegraphics[width=1\columnwidth]{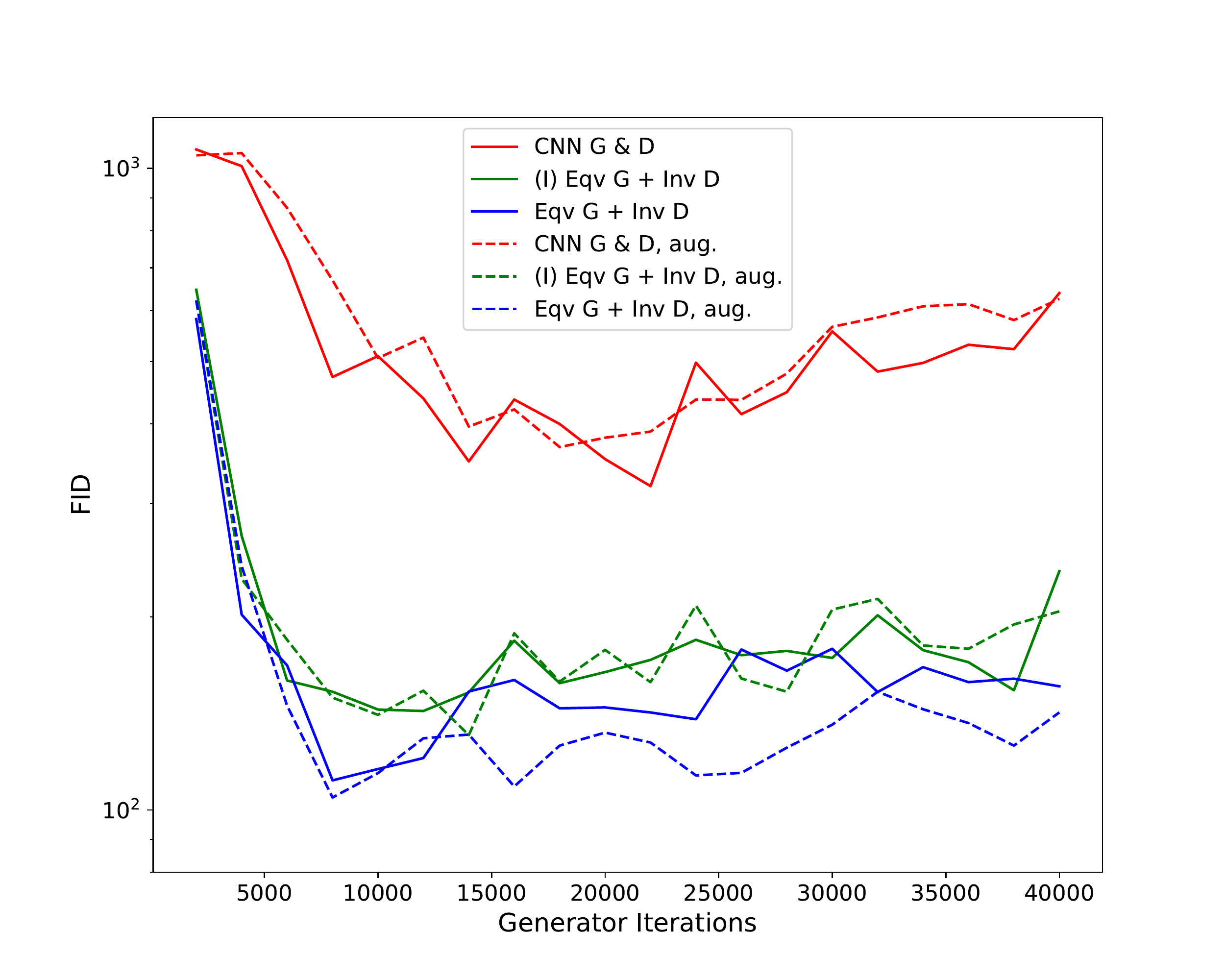}  
    \caption{ANHIR, $D_2^L$-GAN}
        
    \end{subfigure}        
    \begin{subfigure}[b]{.48\columnwidth}
    \includegraphics[width=1\columnwidth]{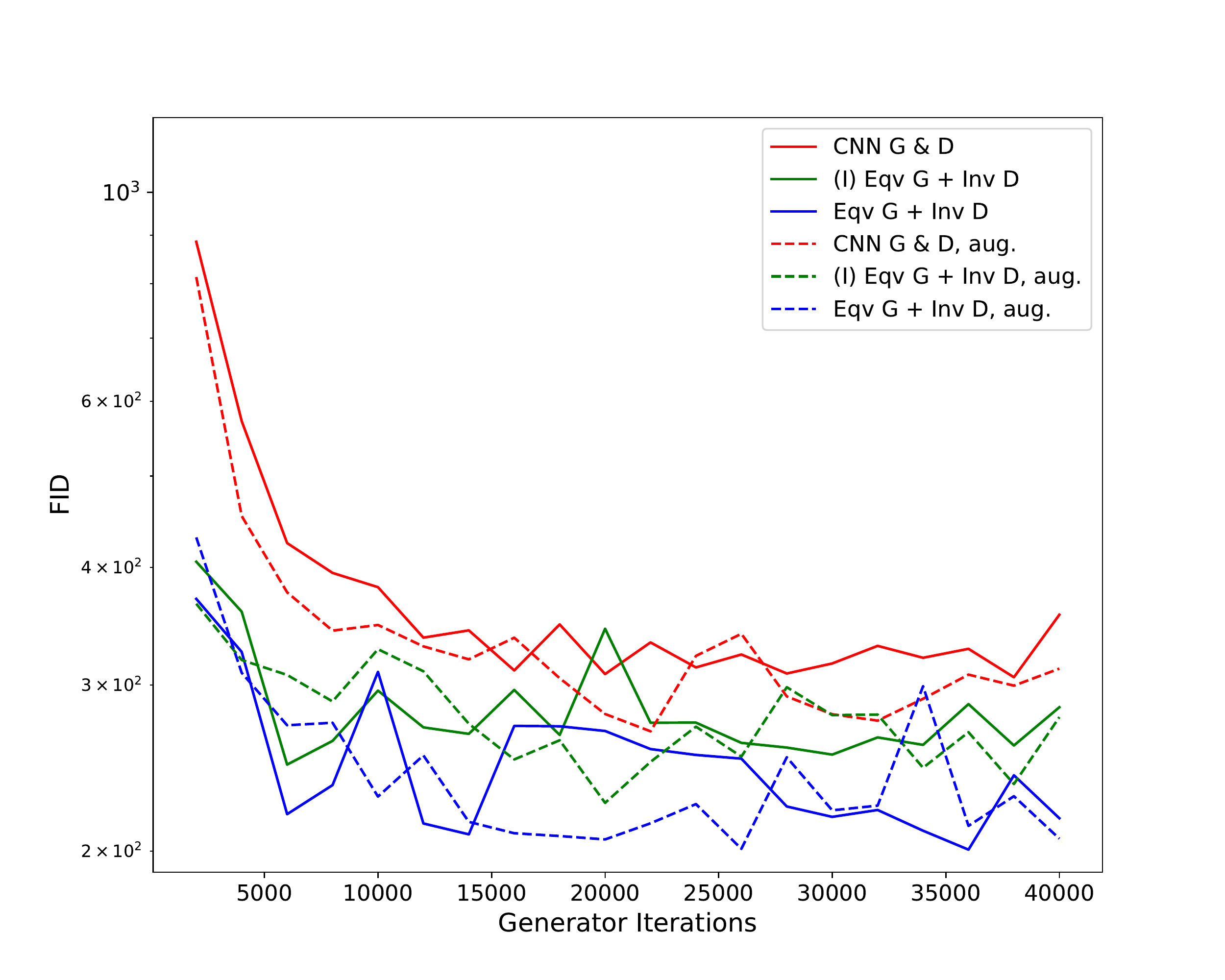}  
    \caption{LYSTO, RA-GAN}
        
    \end{subfigure}
    ~
    \begin{subfigure}[b]{.48\columnwidth}
    \includegraphics[width=1\columnwidth]{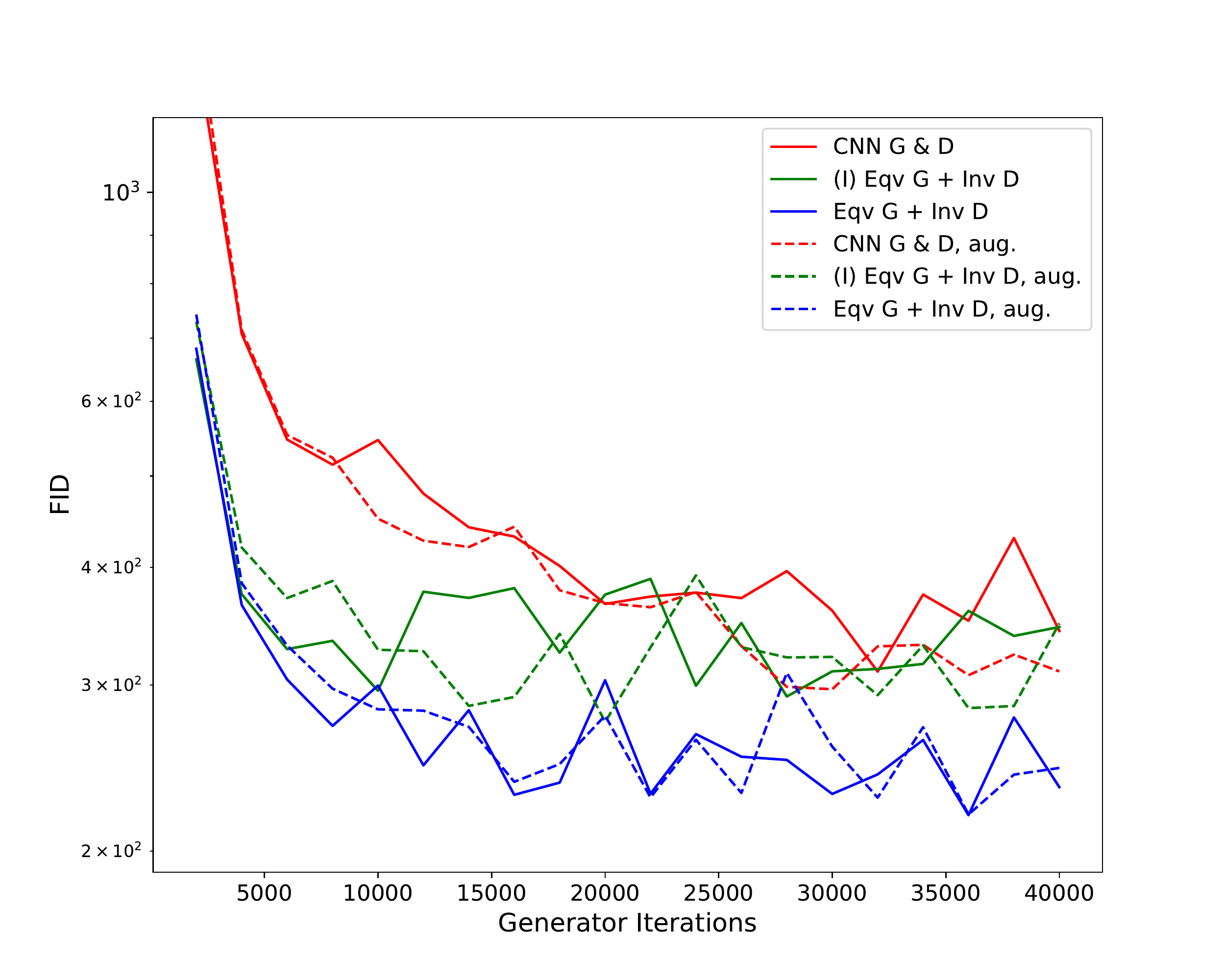}  
    \caption{LYSTO, $D_2^L$-GAN}
        
    \end{subfigure}    
    \caption{The curves of the Fr\'echet Inception Scores (FID), calculated after every 2,000 generator updates up to 40,000 iterations, averaged over three random trials on the medical data sets, ANHIR (top row) and LYSTO (bottom row). The symbol ``aug." in the legend denotes the presence of data augmentation during GAN training.}
    \label{fig:medical_fid_curve}
\end{figure}

\begin{figure}[H]
    \centering
    \includegraphics[width=.9\textwidth]{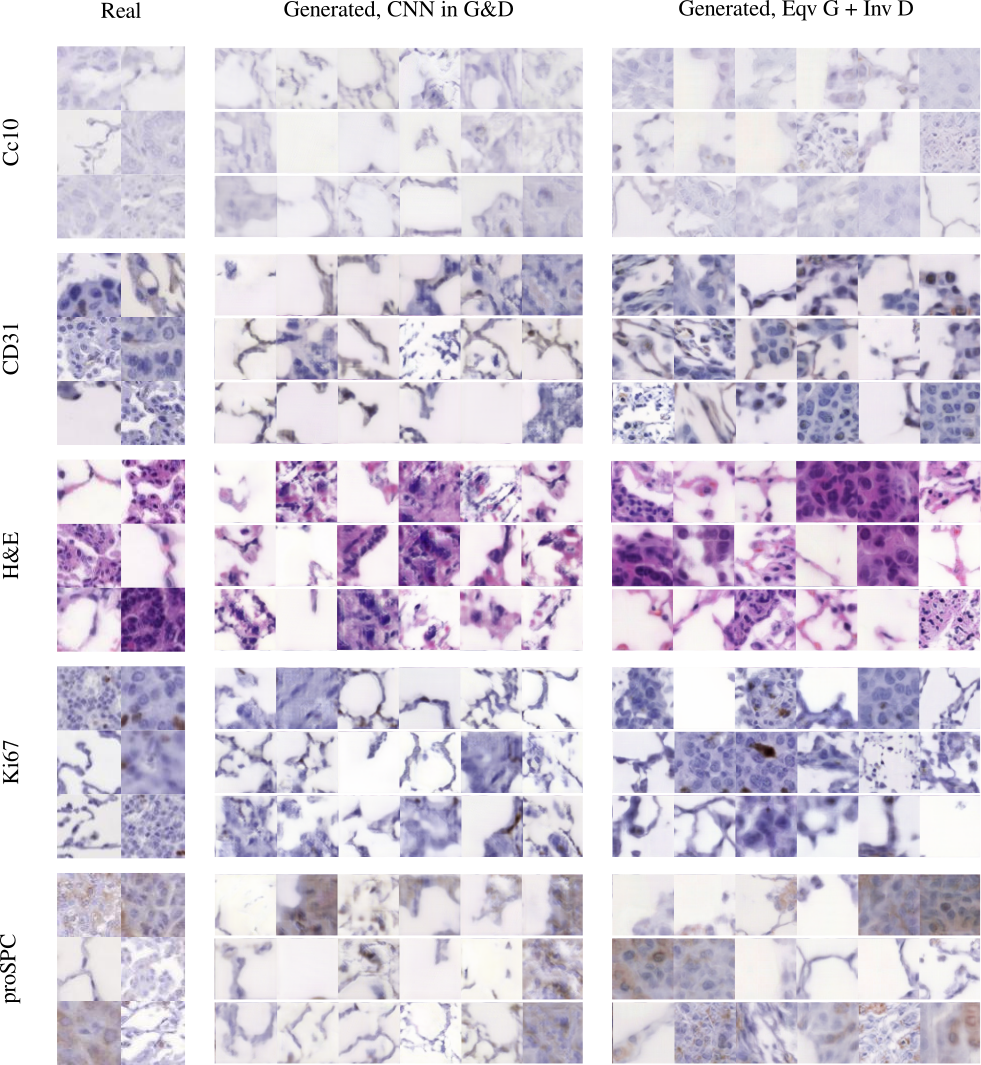}
    \caption{Real and GAN generated ANHIR images dyed with different stains. Left panel: real images. Middle and right panels: randomly selected $D_2^L$-GANs' generated samples after 40,000 generator iterations. Middle panel: \texttt{CNN G\&D}. Right panel: \texttt{Eqv G} + \texttt{Inv D}.}
    \label{fig:anhir_images_large2}
\end{figure}

\begin{figure}[H]
    \centering
    \includegraphics[width=.9\textwidth]{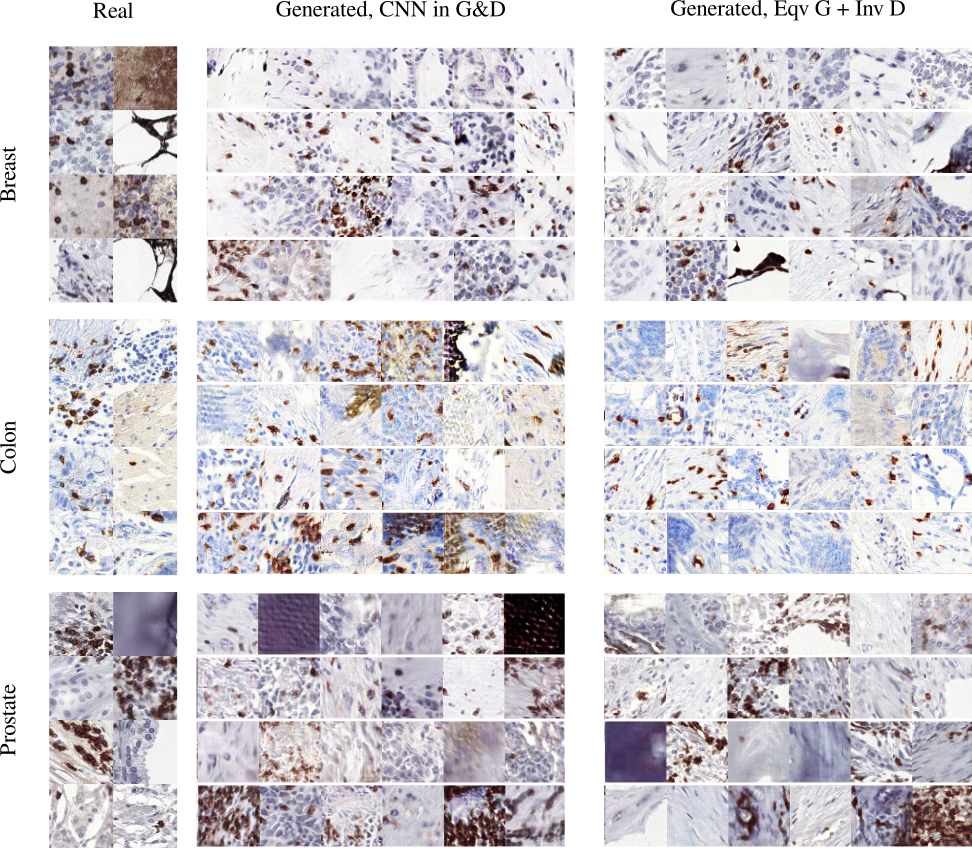}
    \caption{Real and GAN generated LYSTO images of breast, colon, and prostate cancer. Left panel: real images. Middle and right panels: randomly selected $D_2^L$-GANs' generated samples after 40,000 generator iterations. Middle panel: \texttt{CNN G\&D}. Right panel: \texttt{Eqv G} + \texttt{Inv D}.}
    \label{fig:anhir_images_large3}
\end{figure}

\begin{table}[H]
  \caption{The (min, median) of the FIDs over the course of training, averaged over three independent trials on the medical images, where the plus sign ``+" after the data set, e.g., ANHIR+, denotes the presence of data augmentation during training.
  } \label{tab:fid_medical_app}  
  \centering
  \begin{tabular}{m{1em}ccc}
    \toprule
    Loss & Architecture   & ANHIR & ANHIR+\\
    \midrule
    RA& \makecell{\texttt{CNN G\&D} \\
    \texttt{(I)Eqv G} + \texttt{Inv D} \\
    \texttt{Eqv G} + \texttt{Inv D}}
    & \makecell{(186, 523) \\
    (100, 142) \\
    \textbf{(78, 125)}}
    & \makecell{(184, 503) \\
    (88, 140) \\
    \textbf{(84, 118)}}
    \\
    \midrule
    $D_{2}^L$& \makecell{\texttt{CNN G\&D}\\
    \texttt{(I)Eqv G} + \texttt{Inv D} \\    
    \texttt{Eqv G} + \texttt{Inv D}}
    &\makecell{(313, 485)\\
    (120, 176) \\    
    \textbf{(97, 157)}}
    &\makecell{(347, 539)\\
    (119, 177) \\    
    \textbf{(90, 128)}}
    \\
    \bottomrule
    \toprule
    Loss & Architecture   & LYSTO & LYSTO+\\
    \midrule
    RA& \makecell{\texttt{CNN G\&D}  \\ 
    \texttt{(I)Eqv G} + \texttt{Inv D} \\    
    \texttt{Eqv G} + \texttt{Inv D}}
    & \makecell{(281, 340)  \\ 
    (218, 272) \\    
    \textbf{(175, 238)}}
    & \makecell{(250, 312)  \\ 
    (212, 271) \\    
    \textbf{(181, 227)}}
    \\
    \midrule
    $D_{2}^L$& \makecell{\texttt{CNN G\&D}\\
    \texttt{(I)Eqv G} + \texttt{Inv D} \\    
    \texttt{Eqv G} + \texttt{Inv D}}
    & \makecell{(289, 410)\\
    (253, 343) \\    
    \textbf{(205, 259)}}
    & \makecell{(265, 376)\\
    (244, 329)\\    
    \textbf{(192, 259)}}
    \\
    \bottomrule    
  \end{tabular}
\end{table}

\section{Implementation Details}
\label{app:implementation_details}
\subsection{Common experimental setup}
All models are trained using the Adam optimizer \cite{kingma2014adam} with  $\beta_1=0.0$ and $\beta_2=0.9$ \cite{zhang2019self}. Discriminators are updated twice after each generator update. An exponential moving average across iterations of the generator weights with $\alpha=0.9999$ is used when sampling images \cite{brock2018large}.

\subsection{RotMNIST}
For RA-GAN, the training is stabilized by regularizing the discriminator $\gamma\in \Gamma$ with a zero-centered gradient panelty (GP) on the real distribution $Q$ in the following form
\begin{align}
\label{eq:gp_r_1}
    R_1 = \frac{\lambda_1}{2}E_{x\sim Q}\|\nabla \gamma(x)\|_2^2.
\end{align}
We set the GP weight $\lambda_1=0.1$ according to \cite{EquivariantGAN}. For the $D_\alpha^L$-GAN, we use the one-sided GP as a soft constraint on the Lipschitz constant
\begin{align}
\label{eq:gp_r_2}
    R_2 = \lambda_2 E_{x\sim \rho_g}\max\{0, \|\nabla \gamma(x)\|^2-1\},
\end{align}
where $\rho_g\sim TX + (1-T)Y$ (with $X\sim P_g$, $Y\sim Q$, and  $T\sim \text{Unif}([0, 1])$ all being independent.) The one-sided GP weight is set to $\lambda_2=10$ according to \cite{Birrell:f-Gamma}. Unequal learning rates were set to $\eta_G = 0.0001$  and $\eta_D = 0.0004$ respectively. The neural architectures for the generators and discriminators are displayed in Table~\ref{tab:G_rotmnist} and Table~\ref{tab:D_rotmnist}.

\subsection{ANHIR and LYSTO}
Similar to RotMNIST, the GP weights are set to $\lambda_1=0.1$ for the RA-GAN in \eqref{eq:gp_r_1} and $\lambda_2=10$ for the $D_\alpha^L$-GAN in \eqref{eq:gp_r_2}, and we consider only the case $\alpha=2$. The learning rates were set to $\eta_G = 0.0001$  and $\eta_D = 0.0004$ respectively. ResNets instead of CNNs are used as baseline generators and discriminators, and the detailed architectural designs are specified in Table~\ref{tab:G_medical} and Table~\ref{tab:D_medical}.
\subsection{Architectures}
\label{app:architecture}
\begin{table}[ht]
    \caption{Generator architectures used in the RotMNIST experiment. ConvSN and $C_4$-ConvSN stand for spectrally-normalized 2D convolution and its $C_4$-equivariant counterpart. The incomplete attempt at building equivariant generators (\texttt{(I)Eqv G}) does not have the ``$C_4$-symmetrization" layer. The $C_8$-equivariant generator (\texttt{Eqv G}, $\Sigma=C_8$) is built by replacing ``$3\times 3$ $C_4$-ConvSN" with ``$5\times 5$ $C_8$-ConvSN" while adjusting the number of filters to maintain a similar number of trainable parameters.} 
    \label{tab:G_rotmnist}
    \vspace{1em}
    \centering
    {\renewcommand{\arraystretch}{1}
    \begin{tabular}{c}
        \toprule
         CNN Generator (\texttt{CNN G})\\
         \midrule
         Sample noise $z\in \R^{64}\sim \mathcal{N}(0,I)$\\
         Embed label class $y$ into $\hat{y}\in \R^{64}$\\
         Concatenate $z$ and $\hat{y}$ into $h\in \R^{128}$\\
         \midrule
         Project and reshape $h$ to $7\times 7\times 128$\\
        \midrule
        $3\times 3$ ConvSN, $128\to 512$\\
        \midrule
        ReLU; Up $2\times$\\
        \midrule
        $3\times 3$ ConvSN, $512\to 256$\\
        \midrule
        CCBN; ReLU; Up $2\times$\\
        \midrule
        $3\times 3$ ConvSN, $256\to 128$\\
        \midrule
        CCBN; ReLU\\
        \midrule
        $3\times 3$ ConvSN, $128\to 1$\\
        \midrule
        $\tanh()$\\
        \bottomrule
    \end{tabular}
    \hspace{2em}
    \begin{tabular}{c}
        \toprule
         $C_4$-Equivariant Generator (\texttt{Eqv G}, $\Sigma = C_4$)\\
         \midrule
         Sample noise $z\in \R^{64}\sim \mathcal{N}(0,I)$\\
         Embed label class $y$ into $\hat{y}\in \R^{64}$\\
         Concatenate $z$ and $\hat{y}$ into $h\in \R^{128}$\\
         \midrule
         Project and reshape $h$ to $7\times 7\times 128$\\
        \midrule
        $C_4$-symmetrization of $h$\\
        \midrule
        $3\times 3$ $C_4$-ConvSN, $128\to 256$\\
        \midrule
        ReLU; Up $2\times$\\
        \midrule
        $3\times 3$ $C_4$-ConvSN, $256\to 128$\\
        \midrule
        CCBN; ReLU; Up $2\times$\\
        \midrule
        $3\times 3$ $C_4$-ConvSN, $128\to 64$\\
        \midrule
        CCBN; ReLU\\
        \midrule
        $3\times 3$ $C_4$-ConvSN, $64\to 1$\\
        \midrule
        $C_4$-Max Pool\\
        \midrule
        $\tanh()$\\
        \bottomrule
    \end{tabular}
    }
\end{table}

\begin{table}[ht]
    \caption{Discriminator architectures used in the RotMNIST experiment. The $C_8$-invariant discriminator (\texttt{Inv D}, $\Sigma=C_8$) is built by replacing ``$3\times 3$ $C_4$-ConvSN" with ``$5\times 5$ $C_8$-ConvSN" while adjusting the number of filters to maintain a similar number of trainable parameters.}
    \label{tab:D_rotmnist}
    \vspace{1em}
    \centering
    {\renewcommand{\arraystretch}{1}
    \begin{tabular}{c}
        \toprule
         CNN Discriminator (\texttt{CNN D})\\
         \midrule
         Input image $x\in \R^{28\times 28\times 1}$\\
         \midrule
        $3\times 3$ ConvSN, $1\to 128$\\
        \midrule
        LeakyReLU; Avg. Pool\\
        \midrule
        $3\times 3$ ConvSN, $128\to 256$\\
        \midrule
        LeakyReLU; Avg. Pool\\
        \midrule
        $3\times 3$ ConvSN, $256\to 512$\\
        \midrule
        LeakyReLU; Avg. Pool\\
        \midrule
        Global Avg. Pool into $f$\\
        \midrule
        Embed label class $y$ into $\hat{y}'$\\
        \midrule        
        Project $(\hat{y}', f)$ into a scalar\\
        \bottomrule
    \end{tabular}
    \hspace{2em}
    \begin{tabular}{c}
        \toprule
         $C_4$-Invariant Discriminator (\texttt{Inv D}, $\Sigma = C_4$)\\
         \midrule
         Input image $x\in \R^{28\times 28\times 1}$\\
         \midrule
        $3\times 3$ $C_4$-ConvSN, $1\to 64$\\
        \midrule
        LeakyReLU; Avg. Pool\\
        \midrule
        $3\times 3$ $C_4$-ConvSN, $64\to 128$\\
        \midrule
        LeakyReLU; Avg. Pool\\
        \midrule
        $3\times 3$ $C_4$-ConvSN, $128\to 256$\\
        \midrule
        LeakyReLU; Avg. Pool\\
        \midrule
        $C_4$-Max Pool\\
        \midrule
        Global Avg. Pool into $f$\\
        \midrule        
        Embed label class $y$ into $\hat{y}'$\\
        \midrule        
        Project $(\hat{y}', f)$ into a scalar\\
        \bottomrule
    \end{tabular}    
    }
\end{table}

\begin{table}[ht]
    \caption{Generator architectures used in the ANHIR and LYSTO experiments. The generator residual block (ResBlockG) is a cascade of [CCBN, ReLU, Up $2\times$, $3\times 3$ ConvSN, CCBN, ReLU, $3\times 3$ ConvSN] with a short connection consisting of [Up $2\times$, $1\times 1$ ConvSN]. The equivariant residual block ($D_4$-ResBlockG) is built by replacing each component with its equivariant counterpart. The incomplete attempt at building equivariant generators (\texttt{(I)Eqv G}) does not have the ``$D_4$-symmetrization" layer.} 
    \label{tab:G_medical}
    \vspace{1em}
    \centering
    {\renewcommand{\arraystretch}{1}
    \begin{tabular}{c}
        \toprule
         CNN Generator (\texttt{CNN G})\\
         \midrule
         Sample noise $z\in \R^{128}\sim \mathcal{N}(0,I)$\\
         Embed label class $y$ into $\hat{y}\in \R^{128}$\\
         Concatenate $z$ and $\hat{y}$ into $h\in \R^{256}$\\
         \midrule
         Project and reshape $h$ to $4\times 4\times 128$\\
        \midrule
        ResBlockG, $128\to 256$\\
        \midrule
        ResBlockG, $256\to 128$\\
        \midrule
        ResBlockG, $128\to 64$\\
        \midrule
        ResBlockG, $64\to 32$\\        
        \midrule
        ResBlockG, $32\to 16$\\
        \midrule
        BN; ReLU\\
        \midrule
        $3\times 3$ ConvSN, $16\to 3$\\
        \midrule
        $\tanh()$\\
        \bottomrule
    \end{tabular}
    \hspace{2em}
    \begin{tabular}{c}
        \toprule
         Equivariant Generator (\texttt{Eqv G})\\
         \midrule
         Sample noise $z\in \R^{128}\sim \mathcal{N}(0,I)$\\
         Embed label class $y$ into $\hat{y}\in \R^{128}$\\
         Concatenate $z$ and $\hat{y}$ into $h\in \R^{256}$\\
         \midrule
         Project and reshape $h$ to $4\times 4\times 128$\\
        \midrule
        $D_4$-symmetrization of $h$\\
        \midrule
        $D_4$-ResBlockG, $128\to 90$\\
        \midrule
        $D_4$-ResBlockG, $90\to 45$\\
        \midrule
        $D_4$-ResBlockG, $45\to 22$\\
        \midrule
        $D_4$-ResBlockG, $22\to 11$\\        
        \midrule
        $D_4$-ResBlockG, $11\to 5$\\
        \midrule
        $D_4$-BN; ReLU\\
        \midrule
        $3\times 3$ $D_4$-ConvSN, $5\to 3$\\
        \midrule
        $D_4$-Max Pool\\
        \midrule
        $\tanh()$\\
        \bottomrule
    \end{tabular}
    }
\end{table}

\begin{table}[ht]
    \caption{Discriminator architectures used in the ANHIR and LYSTO experiments. The discriminator residual block (ResBlockD) is a cascade of [ReLU, $3\times 3$ ConvSN, ReLU, $3\times 3$ ConvSN, Max Pool] with a short connection consisting of [$1\times 1$ ConvSN, Max Pool]. The equivariant residual block ($D_4$-ResBlockD) is built by replacing each component with its equivariant counterpart.} 
    \label{tab:D_medical}
    \vspace{1em}
    \centering
    {\renewcommand{\arraystretch}{1}
    \begin{tabular}{c}
        \toprule
        CNN Discriminator (\texttt{CNN D})\\
        \midrule
        Input image $x\in \R^{64\times 64\times 3}$\\         
        \midrule
        ResBlockD, $3\to 16$\\
        \midrule
        ResBlockD, $16\to 32$\\
        \midrule
        ResBlockD, $32\to 64$\\
        \midrule
        ResBlockD, $64\to 128$\\        
        \midrule
        ResBlockD, $128\to 256$\\
        \midrule
        ReLU\\
        \midrule
        Global Avg. Pool into $f$\\
        \midrule
        Embed label class $y$ into $\hat{y}'$\\
        \midrule        
        Project $(\hat{y}', f)$ into a scalar\\
        \bottomrule
    \end{tabular}
    \hspace{2em}
    \begin{tabular}{c}
        \toprule
        Invariant Discriminator (\texttt{Inv D})\\
        \midrule
        Input image $x\in \R^{64\times 64\times 3}$\\         
        \midrule
        $D_4$-ResBlockD, $3\to 5$\\
        \midrule
        $D_4$-ResBlockD, $5\to 11$\\
        \midrule
        $D_4$-ResBlockD, $11\to 22$\\
        \midrule
        $D_4$-ResBlockD, $22\to 45$\\        
        \midrule
        $D_4$-ResBlockD, $45\to 90$\\
        \midrule
        ReLU\\
        \midrule
        $D_4$-Max Pool\\
        \midrule
        Global Avg. Pool into $f$\\
        \midrule
        Embed label class $y$ into $\hat{y}'$\\
        \midrule        
        Project $(\hat{y}', f)$ into a scalar\\
        \bottomrule
    \end{tabular}    
    }
\end{table}



\end{document}